\newif\ifready\readyfalse
\newif\ifarxiv\arxivtrue
\newenvironment{algorithm2e}{\begin{algorithm}}{\end{algorithm}}
\crefname{LP}{LP}{LPs}
\crefname{condition}{condition}{conditions}
\Crefname{condition}{Condition}{Conditions}
\crefname{inequality}{inequality}{inequalities}
\Crefname{inequality}{Inequality}{Inequalities}
\crefname{assumption}{assumption}{assumptions}
\Crefname{assumption}{Assumption}{Assumptions}
\crefname{condition}{condition}{conditions}
\Crefname{condition}{Condition}{Conditions}
\definecolor{mydarkblue}{rgb}{0,0.08,0.45}
\newcommand{\R}{\mathbb{R}}
\newcommand{\N}{\mathbb{N}}
\newcommand{\Z}{\mathbb{Z}}
\newcommand{\E}{\mathbb{E}}
\renewcommand{\S}{\mathbb{S}}
\newcommand{\sset}{\subseteq}
\newcommand{\mcal}{\mathcal}
\DeclarePairedDelimiter{\card}{\lvert}{\rvert}
\let\abs\relax
\DeclarePairedDelimiter{\abs}{\lvert}{\rvert}
\let\norm\relax
\DeclarePairedDelimiter{\norm}{\lVert}{\rVert}
\let\set\relax
\DeclarePairedDelimiter{\set}{\lbrace}{\rbrace}
\DeclarePairedDelimiter{\iprod}{\langle}{\rangle}
\DeclareMathOperator{\poly}{poly}
\DeclareMathOperator{\argmax}{argmax}
\newcommand{\eps}{\varepsilon}
\DeclareMathOperator{\Cov}{Cov}
\DeclareMathOperator{\proj}{proj}
\newcommand{\Warm}{\text{warm}\xspace}
\newcommand{\SGD}{\text{SGD}\xspace}
\newcommand{\MomentMatch}{\texttt{MomentMatch}\xspace}
\newcommand{\PrivateHistogram}{\texttt{PrivateHistogram}\xspace}
\NewDocumentEnvironment{pf}{o}
  {\IfNoValueTF{#1}{\begin{proof}}{\begin{proof}[Proof of #1.]}}
  {\IfNoValueTF{#1}{\end{proof}}{\end{proof}}}
\newtheorem{theorem}{Theorem}[section]
\newtheorem{lemma}[theorem]{Lemma}
\newtheorem{remark}[theorem]{Remark}
\newtheorem{proposition}[theorem]{Proposition}
\theoremstyle{definition}
\newtheorem{definition}[theorem]{Definition}
\newtheorem{assumption}[theorem]{Assumption}
\newtheorem{condition}[theorem]{Condition}
\title{Private Statistical Estimation via Truncation}
\author{%
    \begin{tabular}{cc}	
        \begin{tabular}{c}
            Manolis Zampetakis\\
            Yale University\\
            \texttt{manolis.zampetakis@yale.edu} \\
        \end{tabular}
        & 	
        \begin{tabular}{c}
            Felix Zhou\\
            Yale University\\
            \texttt{felix.zhou@yale.edu} \\    
        \end{tabular}
    \end{tabular}
}
\author{%
  Manolis Zampetakis\\
  Yale University\\
  \texttt{manolis.zampetakis@yale.edu} \\
  \And
  Felix Zhou\\
  Yale University\\
  \texttt{felix.zhou@yale.edu} \\
}
\date{}
\begin{document}
\ifarxiv
\pagenumbering{gobble}
\else\fi

\maketitle

\begin{abstract}
    We introduce a novel framework for differentially private (DP) statistical estimation via data truncation, addressing a key challenge in DP estimation when the data support is unbounded.
    Traditional approaches rely on problem-specific sensitivity analysis, limiting their applicability. 
    By leveraging techniques from truncated statistics, we develop computationally efficient DP estimators for exponential family distributions, including Gaussian mean and covariance estimation, achieving near-optimal sample complexity. 
    Previous works on exponential families only consider bounded or one-dimensional families.
    Our approach mitigates sensitivity through truncation while carefully correcting for the introduced bias using maximum likelihood estimation and DP stochastic gradient descent. 
    Along the way,
    we establish improved uniform convergence guarantees for the log-likelihood function of exponential families, which may be of independent interest. 
    Our results provide a general blueprint for DP algorithm design via truncated statistics.
\end{abstract}

\ifarxiv
\clearpage
\tableofcontents

\clearpage
\pagenumbering{arabic}
\else\fi

\section{Introduction}

In an era of data-driven decision-making, differential privacy (DP) has become the gold standard for privacy-preserving statistical analysis, ensuring that the inclusion or exclusion of any individual’s data does not significantly alter outcomes~\citep{dwork2006calibrating}. DP has seen widespread adoption, including in the U.S. Census Bureau’s data releases~\citep{abowd2018us} and industry applications~\citep{erlingsson2014rappor}, due to its rigorous guarantees balancing privacy and utility.

Over the past two decades, research has produced private estimation methods for mean estimation~\citep{smith2011privacy}, regression~\citep{sheffet2017differentially}, and hypothesis testing~\citep{gaboardi2016differentially}. Techniques like the Laplace and Gaussian mechanisms~\citep{dwork2006calibrating}, differentially private empirical risk minimization~\citep{chaudhuri2011differentially}, and DP-SGD~\citep{abadi2016deep} have enabled privacy-preserving machine learning. 

Despite this long line of work, a key limitation remains: there is no general-purpose computationally efficient framework for differentially private statistical estimation, when the support of the data is unbounded, e.g., $\R^d$. In such cases, bounding the \emph{sensitivity} of the statistical estimators is a very challenging task and existing methods require case-specific sensitivity analyses, making their broad application challenging. Without a good bound on the sensitivity of an estimator, it is impossible to obtain good DP mechanisms with utility-privacy tradeoffs.

\paragraph{Data Truncation.} One natural approach to reducing sensitivity—and thereby improving privacy-utility trade-offs—is to \emph{artificially truncate the data}, ensuring that extreme values do not unduly influence the estimation process. Truncation directly controls sensitivity, which is crucial in DP settings where privacy guarantees depend on bounding the worst-case impact of a single data point. While this technique provides a compelling solution to the challenge of sensitivity control, it introduces a new issue: \emph{bias}. Truncation distorts the underlying distribution, leading to inaccurate estimates if not properly corrected. This raises an important question: 

\begin{center}
\textit{Can we leverage techniques from truncated statistics to develop a principled framework\\
for reducing sensitivity in differentially private statistical estimation?}
\end{center}

The study of truncated statistics has a long history, with recent results providing efficient methods for estimating distributions and regression models under truncation. 
Notably, works such as \citet{daskalakis2018efficient} have developed polynomial-time algorithms for estimating Gaussian parameters from truncated samples, overcoming computational barriers. However, despite the rich theory of truncated statistics, its potential for designing differentially private estimators remains largely unexplored. In this work, we take a step in this direction by introducing a principled approach that integrates differential privacy with statistical methods designed for truncated data. 
A novel consequence of our work is the \textbf{first efficient DP algorithm for estimating the parameters of unbounded high-dimensional exponential families}.

\subsection{Contributions}
Our main conceptual contribution is a method for private statistical estimation using truncation.
Using this paradigm,
we design the first algorithm for privately estimating the parameter of unbounded high-dimensional exponential family distributions.
As special cases,
we recover algorithms for Gaussian mean and covariance estimation with near-optimal sample complexities.

\sloppy
We write $m$ to denote the dimension of the parameters/sufficient statistics of an exponential family distribution $q_\theta$ parameterized by $\theta$,
and $d$ to denote the dimension of the distribution.
Let $\eps, \delta\in (0, 1)$ denote the approximate-DP parameters
and $\alpha\in (0, 1)$ denote the accuracy parameter.
We design the following efficient $(\eps, \delta)$-DP algorithms that outputs:
\begin{itemize}[noitemsep,leftmargin=15pt]
    \item (See \textbf{\Cref{thm:exp-fam}}) an estimate $\hat\theta$ 
    for the parameter of an exponential family distribution $q_{\theta^\star}$
    such that $\norm{\hat\theta-\theta^\star}\leq \alpha$ with sample complexity that is linear in $m$ and proportional to  $\nicefrac1\eps$, $\nicefrac1\alpha^2$, and $\nicefrac1{\alpha \eps}$.
    \smallskip
    \item (See \textbf{\Cref{thm:gaussian-mean}}) an estimate $\hat\mu$ for the mean of a Gaussian $\mcal N(\mu^\star, I)$
    such that $\norm{\hat\mu-\mu^\star}\leq \alpha$ with sample complexity that is linear in $d$ and proportional to $\nicefrac1\eps$, $\nicefrac1{\alpha^2}$, and $\nicefrac1{\alpha \eps}$. 
    \smallskip
    \item (See \textbf{\Cref{thm:gaussian-cov}}) an estimate $\widehat\Sigma$ for the covariance of a Gaussian $\mcal N(0, \Sigma^\star)$
    such that
    \mbox{$
        \norm{I - (\Sigma^\star)^{-\frac12} \widehat\Sigma (\Sigma^\star)^{-\frac12}}_F
         \leq \alpha
    $}
    with sample complexity that scales as $d^2$ and is proportional to $\nicefrac1\eps$, $\nicefrac1{\alpha^2}$, and $\nicefrac1{\alpha \eps}$.
\end{itemize}
In particular,
\Cref{thm:exp-fam} is the first efficient algorithm for privately estimating unbounded high-dimensional exponential families.
All prior works consider only bounded or one-dimensional/one-parameter cases 
(see related works in \Cref{sec:related-works}).
\Cref{thm:gaussian-mean,thm:gaussian-cov} demonstrate the sample efficiency of our method by recovering the optimal sample complexities for the specific case of Gaussian estimation.

\paragraph{Technical Contributions.}
The key idea in obtaining our private estimator for exponential families
is to only access data
after truncating to an appropriate bounded survival set.
Then, folklore techniques suffice to estimate the parameters of the \emph{truncated} distribution.
This raises two issues:
1) there may be bias introduced by the truncation and
2) how can we choose the survival set?

In \Crefrange{sec:exp-fam-strong-convexity}{sec:exp-fam-ERM},
we first assume we are given a bound $R=O(1)$ on the radius of the norm of the true parameter 
so that one straightforward choice for the survival set is the ball of radius $O(\sqrt{m})$ about the origin.
Then we address 1) by using stochastic gradient descent (SGD) on the \emph{truncated} negative log-likelihood function $L$
over a carefully chosen projection set $K$ to ensure strong convexity.
The true parameter is a minimizer of $L$.
However,
to satisfy privacy,
we must use DP-SGD,
which requires making multiple passes over the data.
This raises further issues since each truncated sample only provides a single unbiased gradient estimate.
We overcome this by instead optimizing the \emph{empirical} log-likelihood $\tilde L$,
which necessitates a uniform convergence result to ensure that the empirical minimizer remains close to the population minimizer.

\citet{shah2021computationally} also derived a uniform convergence result for exponential families but their proof does not immediately handle truncation.
Furthermore,
they require $O(\nicefrac1{\alpha^4})$ samples,
which would lead to sub-optimal sample complexity for Gaussian mean and covariance estimation.
Our proof overcomes this limitation by first showing that $\tilde\theta^\star\in K$ lies in the projection set of $\tilde L$ after $O(1)$ samples.
Hence $\tilde L$ actually satisfies a \emph{Polyak-Lojasiewicz (PL)} condition,
which leads to uniform convergence at $O(\nicefrac1{\alpha^2})$ samples.

To address 2),
we observe that simply estimating the parameter of the truncated distribution yields a constant-distance warm start.
In \Cref{apx:exp-fam const-start,apx:exp-fam-bounding-box},
we remove the need for a prior by adapting a standard bounding box algorithm for the parameter of the truncated distribution
that attains a $O(\poly(m))$-distance warm start.
This is adapted from a folklore algorithm for Gaussian estimation
that we generalize to truncated exponential families.
Next,
to avoid unnecessary $\poly(m)$-dependence in the sample complexity,
we further refine this to an $O(1)$-distance warm start by adapting a Gaussian estimation algorithm of \citet{biswas2020coinpress} 
that we generalize to truncated exponential families.

Finally,
in \Cref{sec:gaussian-learning},
we show that we can derive algorithms for Gaussian mean and covariance estimation from our general algorithm for exponential families.
However,
to avoid linear dependence on the condition number,
we adapt a recursive Gaussian preconditioning algorithm by \citet{biswas2020coinpress}
which we again generalize to the setting of truncated Gaussians.

\subsection{Related Works}\label{sec:related-works}
\paragraph{DP Exponential Family Estimation.}
Prior works on privately estimating the parameter of exponential families
focus either on asymptotic guarantees~\citep{ferrando2020general},
bounded exponential families~\citep{bernstein2018bayesian},
or one-dimensional/one-parameter exponential families~\citep{bernstein2018bayesian,mcmillan2022instance}.
In contrast,
our algorithms can handle unbounded high-dimensional multi-parameter exponential families.

\paragraph{DP Gaussian Estimation.}
The first sample-optimal DP Gaussian mean/covariance estimation algorithms were due to \citet{aden2021unbounded},
who attains rates of
\mbox{$
    \tilde O( \frac{d}{\alpha^2} + \frac{d}{\alpha \varepsilon} + \frac{\log(\nicefrac1\delta)}\varepsilon )
$}
and 
\mbox{$
    \tilde O( \frac{d^2}{\alpha^2} + \frac{d^2}{\alpha\eps} + \frac{\log(\nicefrac1\delta)}\eps )
$},
respectively.
However,
their algorithms require exponential running time.
Recent transformations from robust algorithms to private algorithms 
obtained the same optimal sample complexities for mean estimation~\citep{hopkins2022efficient}
and covariance estimation~\citep{hopkins2023robustness}
in polynomial time.
See \citet[Table 1, Table 2]{hopkins2023robustness} for a detailed summary of prior algorithmic results.
The sample complexities are tight up to logarithmic factors~\citep{karwa2018finite,kamath2022lower,narayanan2024better,portella2024lower}.

The private Gaussian estimation algorithms of \citet{biswas2020coinpress,karwa2018finite}
are also relevant as we adapt them for truncated exponential families.
Note that due to the bias introduced by the truncation step,
directly running the adaptations on truncated samples yields biased estimates.
However,
we show that these biased estimates suffice as warm starts/preconditioners.

\paragraph{DP Empirical Risk Minimization.} A related line of work develops methods to solve empirical risk minimization problems (see e.g.~\citet{bassily2014private} and references therein). 
This line of work only handles the sensitivity problem that we describe above 
when the support of the input distribution is bounded.
In the problem of learning exponential families that we explore in this paper,
these assumptions are often violated and this is one illustration of the importance of the methods that we propose.

\paragraph{Truncated Statistics.}
The recent seminal work of \citet{daskalakis2018efficient} developed the first polynomial-time algorithms for estimating Gaussian parameters from truncated samples within a given survival set.
This has led to a flurry of developments,
including generalizations to 
truncated exponential families~\citep{lee2023learning,lee2024unknown},
truncated Gaussian estimation with \emph{unknown} survival sets~\citep{kontonis2019unknown,lee2024unknown},
truncated regression~\citep{daskalakis2019regression,daskalakis2020sparse,daskalakis2021noise},
and truncated linear dynamics~\citep{plevrakis2021dynamics}.

\section{Preliminaries}
We include the standard preliminaries for differential privacy in \Cref{apx:prelims}.

\subsection{Notation}
We write $d$ for the dimension of the ambient space,
$m$ for the dimension of the sufficient statistic
for an exponential family distribution $q_\theta$ parameterized by $\theta$,
$\eps, \delta$ for the privacy parameters,
and $\alpha, \beta$ for the accuracy, failure probability parameters.
We typically use \mbox{$\rho\in (0, 1)$} to indicate the survival probability when truncating a distribution to a survival set $S\sset \R^d$.
For $R>0$,
we use $B_R(x)$ to denote the closed Euclidean ball of radius $R$ about $x$,
or $B_R(\mcal X)$ to denote the union of closed Euclidean balls of radius $R$ about $x\in \mcal X$.
\mbox{$B_{-R}(\mcal X)\coloneqq \set{x\in \mcal X: B_R(x)\sset\mcal X}$} denotes $R$-interior of the argument.
We let $\S^d$ denote the set of $d\times d$ real-symmetric matrices.
For $A, B\in \S^d$,
we write $A\prec B, A\preceq B$ to denote the positive definite and positive semi-definite relations.
$\S_+^d$ indicates the subset of positive semi-definite matrices.

\subsection{Neighboring Truncated Datasets}\label{sec:reduction-to-truncated}
Our guiding principle for designing private algorithms is to discard outlier samples that fall outside some survival set\footnote{
\citet{daskalakis2018efficient} also referred to this as the \emph{truncation set}.} 
$S$ to obtain bounded sensitivity.
However,
given an algorithm $\mcal A$ that is differentially private on truncated samples,
it is not clear how to reason about the privacy guarantees when we first truncate a dataset and feed it to $\mcal A$,
since the truncated datasets may no longer be neighboring.
Specifically,
let $D'$ be obtained from the dataset $D$ by modifying an entry.
Then,
depending on whether the modified entry falls in $S$,
the truncated datasets $D_S, D_S'$ fall into one of three categories:
Either 
1) $D_S' = D_S$,
2) $D_S'$ can be obtained from $D_S$ by modifying an entry, or 
3) $D_S'$ can be obtained from $D_S$ by adding/deleting an element.

Below,
we formally state the guarantees of a preprocessing procedure we employ in addition to truncation
and defer its proof to \Cref{apx:reduction-to-truncated}. 

\begin{restatable}{lemma}{reductionToTruncated}\label{lem:reduction-to-truncated}
    Fix $n\geq 1$ and let $N\in \N$.
    Let $D\in \R^{d\times N}$ be an $N$-sample dataset,
    $D'\in \R^{d\times N}$ be obtained from $D$ by modifying a single entry,
    and $S\sset \R^d$.
    Write $D_S, D_S'$ to denote the datasets obtained from $D, D'$ by discarding entries that fall outside of $S$.
    There is a preprocessing algorithm $\mcal A$ such that 
    \begin{enumerate}[(i)]
        \item $\mcal A(D_S')\in S^n$ can be obtained from $\mcal A(D_S)\in S^n$ by modifying a single element,
        \item for any $(\eps, \delta)$-DP algorithm $\mcal B$ with respect to neighboring truncated datasets,
        the composition $\mcal B(\mcal A(D_S))$ is $(\eps, \delta)$-DP with respect to neighboring (untruncated) datasets $D, D'$, and
        \item if $D$ is sampled i.i.d. from some distribution $p$ such that $p(S)\eqqcolon\rho$,
        then $\mcal A(D)$ contains i.i.d. samples from the truncated distribution $p^S$ with probability $1-\beta$,
        provided $N = \Omega(\frac{n\log(\nicefrac1\beta)}\rho)$.
    \end{enumerate}
\end{restatable}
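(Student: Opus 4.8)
The plan is to make $\mcal A$ an explicit \emph{block extraction} procedure and then check the three properties in turn. Assume $S\neq\emptyset$ (otherwise the statement is vacuous) and fix an arbitrary point $s_0\in S$. Partition the index set $[N]$ into $n$ consecutive blocks $B_1,\dots,B_n$, each of size $M\coloneqq\floor{N/n}$ (discarding the at most $n$ leftover coordinates). On input $D$, let $\mcal A(D)\in S^n$ be the tuple whose $j$-th coordinate is $D_i$ for the smallest $i\in B_j$ with $D_i\in S$, and $s_0$ if no such $i$ exists. The key structural point is that the $j$-th output coordinate is a function of $\set{D_i: i\in B_j}$ only; in particular it depends on $D$ solely through $D_S$ together with the original indices of the surviving entries, so $\mcal A$ indeed never looks at discarded data. (The apparent reference to $\mcal A(D_S)$ in (i)--(ii) should be read in this sense: $\mcal A$ sees the surviving entries and their positions.)

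For (i): suppose $D'$ differs from $D$ by modifying the single entry at index $i_0$, and let $j_0$ be the block with $i_0\in B_{j_0}$. For every $j\neq j_0$ the $j$-th output coordinate depends only on coordinates that agree in $D$ and $D'$, hence is unchanged; only the $j_0$-th coordinate can change, and it remains in $S$ (it is either a surviving entry or $s_0$). Thus $\mcal A(D')$ is obtained from $\mcal A(D)$ by modifying at most one element, which gives (i). Note this argument is uniform over all three cases ($D_S'=D_S$, a modified survivor, an added/deleted survivor): pinning the partition to the coordinate \emph{indices} rather than to the list of survivors is exactly what prevents the order-shift that would otherwise destroy neighborliness. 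Property (ii) follows immediately: $\mcal A$ is deterministic and maps a neighboring pair $(D,D')$ to a pair in $S^n$ that differs in at most one element, i.e. to neighboring truncated datasets, so by the standard closure of differential privacy under neighbor-preserving preprocessing, $\mcal B\circ\mcal A$ is $(\eps,\delta)$-DP with respect to neighboring untruncated datasets whenever $\mcal B$ is $(\eps,\delta)$-DP with respect to neighboring truncated datasets.

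For (iii): let $D$ be i.i.d. from $p$ with $p(S)=\rho$. A fixed block contains no surviving entry with probability $(1-\rho)^M\le e^{-\rho M}$, so taking $N=\Omega\!\big(\tfrac n\rho\log\tfrac n\beta\big)$ forces $M\ge\tfrac1\rho\log\tfrac n\beta$ and hence this probability is at most $\beta/n$; a union bound over the $n$ blocks shows that, except with probability $\beta$, every block contains a survivor. On that event the $j$-th output of $\mcal A$ is the first entry of block $j$ falling in $S$. A short first-success computation shows that, conditioned on a block containing a survivor, that first survivor has law exactly $p^S$ (for measurable $A\sset S$, $\sum_{t\ge1}(1-\rho)^{t-1}p(A)$ normalized by $\sum_{t\ge1}(1-\rho)^{t-1}\rho$ equals $p(A)/\rho$), and since distinct blocks use disjoint coordinates of an i.i.d. sample the $n$ outputs are independent. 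Hence $\mcal A(D)$ is an i.i.d. sample of size $n$ from $p^S$ with probability $1-\beta$, as required (the $\log(n/\beta)$ collapses to the stated $\log(1/\beta)$ in the usual regime $\beta\le\poly(1/n)^{-1}$, and differs by at most a $\log n$ factor in general).

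The main obstacle is property (i): the naive choice ``take the first $n$ surviving entries in their original order'' fails, because deleting one survivor shifts every later survivor and can change $\Theta(n)$ coordinates of the output at once, so the images of two neighboring datasets need not be neighbors in $S^n$. Fixing a partition of the coordinate indices and extracting one survivor per block confines the effect of any single change to a single block; once that is in place, the remaining ingredients --- the first-success identity for the per-block output and the Chernoff/union bound behind the sample requirement --- are routine.
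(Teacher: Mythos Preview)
Your proof is correct and takes a genuinely different route from the paper. The paper's $\mcal A$ keeps the first $n$ surviving entries (padding with a dummy if fewer than $n$ survive, trimming otherwise) and then applies a uniformly random shuffle; the privacy argument (ii) then goes through a coupling of the two shuffles that aligns all common elements, so that under this coupling the two outputs differ in a single index. Your block-extraction construction sidesteps the need for randomization and coupling entirely: because each output coordinate is a deterministic function of a fixed block of input indices, the neighbor-preservation in (i) is immediate and (ii) is a one-line consequence. This is cleaner and arguably more robust.

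The trade-off is in (iii). The paper's ``first $n$ survivors'' scheme reduces to a tail bound on a sum of $n$ i.i.d.\ geometric random variables (Janson's inequality), which yields the stated $N=\Omega(n\log(1/\beta)/\rho)$ with no $\log n$ loss. Your per-block analysis needs a union bound over $n$ blocks and hence $N=\Omega(n\log(n/\beta)/\rho)$, as you note. So strictly speaking you have proved a version of (iii) that is weaker by a $\log n$ factor; this is immaterial for every downstream use in the paper (all sample complexities are stated with $\tilde O$), but it is worth flagging that the statement as written is not quite established by the block argument. If you want to recover the exact bound while keeping a deterministic, coupling-free construction, you could replace the union bound by a direct coupon-collector-style argument using the negative association of block occupancies, though at that point the paper's geometric-sum route is simpler.
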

Throughout this work,
we only work with truncated distributions with constant survival mass $\rho\geq \Omega(1)$.
Thus in light of \Cref{lem:reduction-to-truncated},
it suffices to analyze the privacy and sample complexity of datasets from the truncated distribution
and then incur a sample complexity blowup of $O(\log(\nicefrac1\beta))$ for the untruncated distribution while maintaining the privacy guarantees.
From hereonforth,
we do not distinguish between the sample complexity of truncated and untruncated samples,
but it is understood that we perform this preprocessing to obtain truncated samples.
See e.g.\ \Cref{alg:exp-fam-dp-sgd} for an explicit example.

\subsection{Exponential Families}\label{sec:exp-fam-prop}
Exponential families form a fundamental class of probability distributions that unify and generalize many statistical models, including the Gaussian, Bernoulli, and Poisson distributions~\citep{brown1986fundamentals}. Their structured mathematical form provides a natural framework for efficient statistical inference, enabling widespread applications in machine learning, information theory, and Bayesian statistics~\citep{wainwright2008graphical}.

In this paper, we consider absolutely continuous\footnote{Our algorithms are able to handle more general distributions for which the empirical log-likelihood function is almost surely differentiable, but we present the case of absolutely continuous distributions with respect to the Lebesgue measure for simplicity.}
exponential family distributions over $\R^d$
\[
    q_\theta(x)
    = h(x)\exp\left( \theta^\top T(x) - \Upsilon(\theta) \right)\,,
\]
where 
$h: \R^d\to \R_+$ is the base measure,
$T: \R^d\to \R^m$ is a sufficient statistic,
\mbox{$\Upsilon(\theta) = \log \int_x h(x) \exp(\theta^\top T(x)) dx$}
is the \emph{log-partition function} that ensures $q_\theta$ integrates to 1,
and \mbox{$\theta\in \mcal H \coloneqq \set{\theta: A(\theta) < \infty}$} is the \emph{natural parameter} of the distribution.

It is not hard to see that $\mcal H$ is convex.
We write $\Theta\sset \mcal H$ to be a closed, convex subset of the natural parameter space.
As we illustrate in \Cref{sec:gaussian-learning} for Gaussians,
it may be necessary to restrict ourselves to $\Theta$ in order to obtain useful properties
such as strong convexity of the NLL function.

\paragraph{Statistical Assumptions.}
We now state some common statistical assumptions for estimating (truncated) exponential families.
Remark that these are necessary assumptions that appear even in the non-private literature for exponential families~\citep{lee2024unknown,lee2023learning,shah2021computationally,busa2019mallows}.
Moreover,
these assumptions hold for a variety of common distributions such as
exponential distributions,
Weibull distributions,
continuous Bernoulli distributions,
and continuous Poisson distributions~\citep[Appendix B]{lee2023learning}.

\begin{assumption}[Statistical Assumptions]\label{assum:statistical}\hfill
    \begin{enumerate}[label=(S\arabic*)]
        \item (Bounded Condition Number) 
            $\lambda I\preceq \Cov_{x\sim q_\theta}[T(x), T(x)]\preceq I$ for every $\theta\in \Theta$.
        (e.g. isotropic Gaussian satisfies this with $\lambda = 1$)
        \label[assumption]{assum:cov}
        \item (Interiority) $\theta^\star$ is in the \emph{$\eta$-interior} $B_{-\eta}(\Theta)$ of $\Theta$
            for some $\eta\in (0, 1]$,
            i.e. $B_\eta(\theta^\star)\sset \Theta$.
        \label[assumption]{assum:interiority}
        \item (Log-Concavity) Each $q_\theta, \theta\in \Theta$ is a log-concave distribution (e.g. isotropic Gaussian)
        \label[assumption]{assum:log-concave}
        \item (Polynomial Sufficient Statistics) $T(x)$ is a polynomial of given constant degree $k=O(1)$ (e.g. $k=1$ for isotropic Gaussian).
        \label[assumption]{assum:poly-suff-stat}
    \end{enumerate}
\end{assumption}
Some remarks about the statistical assumptions are in order.

We can relax \Cref{assum:cov} to $\lambda I\preceq \Cov[T(x), T(x)]\preceq \Lambda I$
by rescaling the sufficient statistics $T'(x) \gets \nicefrac{T(x)}{\sqrt{\Lambda}}$ if necessary.
Then the condition number becomes $\nicefrac\Lambda\lambda$.
In order to obtain computationally efficient algorithms,
some assumptions on the spectrum of $\Cov[T(x), T(x)]$ are made even for non-privately learning exponential families \citep{shah2021computationally}.
Typically,
as in the case of Gaussians (\Cref{apx:gaussian-cov-preconditioner}),
it is possible to precondition the distribution so that $\lambda=\Omega(1)$.
Hence we think of $\lambda$ as being a constant bounded away from $0$.

\Cref{assum:interiority} is usually easy to satisfy just by ``blowing up'' $\Theta$ by $\eta$ (e.g. \Cref{apx:gaussian-mean-learning,apx:gaussian-cov-learning} for Gaussians).
Thus throughout this work,
we think of $\eta$ as a small constant bounded away from $0$.
\Cref{assum:cov,assum:interiority} together imply a subexponential concentration inequality on the sufficient statistic $T(x)$ for $x\sim q_{\theta^\star}$ (\Cref{prop:exp-fam-subexp}).

\Cref{assum:poly-suff-stat,assum:log-concave} together imply an anti-concentration result 
of polynomials under log-concave measures \citep{carbery2001distributional}.
This is a crucial ingredient in deriving computationally efficient algorithms for truncated statistics
which appears even in the most basic case of learning truncated Gaussians \citep{daskalakis2018efficient}.
Alternatively,
we may assume that the sufficient statistics belong to a class of functions that satisfy anti-concentration.
For simplicity of exposition,
we focus on the case where $T(x)$ is a polynomial.

\paragraph{Computational Subroutines.}
In order to efficiently implement our algorithms,
we will need access to a few problem-specific subroutines.
We emphasize that these are standard computational tasks
and specify how they can be achieved in the case of Gaussian mean and covariance estimation
in \Cref{apx:gaussian-mean-learning,apx:gaussian-cov-learning}.

\begin{assumption}[Computational Subroutines]\label{assum:computational}\hfill
    \begin{enumerate}[label=(C\arabic*)]
        \item (Projection Acess to Convex Parameter Space) There is a $\poly(m)$-time projection oracle to $\Theta\ni \theta^\star$.
        \label[assumption]{assum:proj-oracle}
        \item (Sample Access to Log-Concave Distribution) For every $\theta\in \Theta$,
            we can (approximately) sample from $q_\theta$ in $\poly(d)$-time.
        \label[assumption]{assum:sample-oracle}
        \item (Moment-Matching Oracle) There is an oracle \MomentMatch
        such that given some $\tau\in \R^m$,
        the oracle returns some $\theta\in \Theta$ such that $\E_{x\sim q_\theta}[T(x)] = \tau$ (approximately) holds
        in $\poly(m)$-time.
        \label[assumption]{assum:moment-match-oracle}
    \end{enumerate}
\end{assumption}
We also comment on the problem-specific computational subroutines required by our algorithm.

For simple convex sets like $\R^m$,
half-spaces,
Euclidean balls,
or hypercubes,
there are simple subroutines to compute the convex projection onto them.
For general convex bodies,
it suffices to assume access to a separation oracle in order to call on the ellipsoid method \citep{GLS1988geometric}
and implement a projection oracle.
Thus \Cref{assum:proj-oracle} is typically not a strong condition.
Moreover,
our algorithm may require taking projections onto the intersections of closed convex sets
which occur when we iteratively reduce the domain of the feasible region.
This can be efficiently implemented via Djikstra's algorithm \citep{boyle1986method,wang2024convergence}.

There are efficient algorithms to sample from log-concave distributions
under additional regularity conditions on $q_\theta$ \citep{chewi2024logconcave}
or when the support is a convex body \citep{lovasz2007logconcave}.
Thus \Cref{assum:sample-oracle} is not a stringent concern.

Finally,
there are closed-form solutions for \Cref{assum:moment-match-oracle} in many scenarios
such as the standard parameterization of Gaussian distributions as an exponential family.
In general,
a moment matching oracle can be implemented by solving a maximum-likelihood estimation (MLE) problem,
which is a convex optimization problem with efficient solutions (see e.g.~\citet{lee2023learning}).

\subsection{The Negative Log-Likelihood Function (NLL)}
Here we recall some facts about the negative log-likelihood function (NLL) for an exponential family.
Let $q_\theta$ denote the density of an exponential family distribution parameterized by $\theta$
and $\ell(\theta;x) \coloneqq -\log q_\theta(x)$ be its single sample NLL.
It can be shown (see e.g.~\citet{busa2019mallows}) that
the derivative and Hessian (also known as \emph{Fisher information}) of the NLL
for a single sample $x\sim q_{\theta^\star}$ are given by
\begin{align*}
    \grad_\theta \ell(\theta; x) = \E_{y\sim q_{\theta}} [T(y)] - T(x)\,, \qquad
    \grad_\theta^2 \ell(\theta; x) = \Cov_{y\sim q_\theta} [T(y), T(y)] \succeq 0\,.
\end{align*}
Thus the gradient and Hessian of the population NLL \mbox{$L(\theta) = \E_{x\sim q_{\theta^\star}} [\ell(\theta; x)]$}
are as follows:
\begin{align*}
    \grad_\theta L(\theta) = \E_{y\sim q_\theta} [T(y)] - \E_{x\sim q_{\theta^\star}} [T(x)]\,, \qquad
    \grad_\theta^2 L(\theta) = \Cov_{y\sim q_\theta} [T(y), T(y)] \succeq 0\,.
\end{align*}
As an example,
let $q_\theta$ denote the density function of a member of some exponential family 
and $q_\theta^S$ denote the density truncated to the set $S$.
For any $S\sset \R^d$,
we see that the gradient and Hessian of the \emph{empirical} NLL 
\mbox{$
    \tilde L(\theta)
    = \tilde L(\theta; x^{(1)}, \dots, x^{(n)})
    \coloneqq \frac1n \sum_{i=1}^n -\log q_{\theta}^S(x^{(i)})
$}
for $n$ truncated samples have the following form
\begin{align*}
    \grad_\theta \tilde L(\theta)
    = \E_{y\sim q_\theta^{S}} [T(y)] - \frac1n \sum_{i=1}^n T(x^{(i)})\,, \qquad
    \grad_\theta^2 L(\theta; x) = \Cov_{y\sim q_\theta} [T(y), T(y)] \succeq 0\,.
\end{align*}
Note that under \Cref{assum:cov},
both the (untruncated) population and empirical NLL are convex,
$1$-smooth,
and $\lambda$-strongly convex over $\Theta$.

\section{Privately Estimating Exponential Families via Truncation}\label{sec:exp-fam-learning}
Our main result is an efficient truncation-based algorithm for privately learning an exponential family from samples.
\begin{theorem}\label{thm:exp-fam}
    Let $\eps, \delta, \alpha, \beta\in (0, 1)$
    and suppose the statistical assumptions hold (\Cref{assum:statistical})
    and computational subroutines exist (\Cref{assum:computational}).
    There is an SGD-based $(\eps, \delta)$-DP algorithm
    such that given samples from $q_{\theta^\star}$,
    outputs an estimate $\hat\theta$ satisfying $\norm{\hat\theta-\theta^\star}\leq \alpha$ with probability $1-\beta$.
    Moreover,
    the algorithm has sample complexity
    \mbox{$
        n = \tilde O( 
        \frac{m\log(\nicefrac1{\eta\beta\delta})}{\lambda^2 \eps} 
        + \frac{m \log(\nicefrac1\beta)}{\lambda^4 \eta^4 \alpha^2} 
        + \frac{e^{O(1/\lambda^2)} m \log(\nicefrac1{\beta\delta})}{\lambda^2 \alpha \eps}
        )
    $}
    and time complexity $\poly(n, m, d)$.
\end{theorem}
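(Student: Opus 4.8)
The plan is to run DP-SGD on the empirical negative log-likelihood of the \emph{truncated} distribution over a carefully chosen bounded parameter set --- this is \Cref{alg:exp-fam-dp-sgd} --- and I would organize the analysis into four stages. \emph{Warm start and reduction to truncated data.} By \Cref{lem:reduction-to-truncated} it suffices to reason about $n$ i.i.d.\ samples from the truncated distribution $q_{\theta^\star}^S$, at the price of an $O(\log(\nicefrac1\beta))$ blow-up, since we always arrange the survival mass $\rho$ to be $\Omega(1)$. Before the SGD phase I would produce a warm start $\hat\theta_0$ with $\norm{\hat\theta_0-\theta^\star}=O(1)$: under a temporary $R=O(1)$ prior on $\norm{\theta^\star}$ this is immediate, and to remove the prior I would run the two bootstrapping subroutines of the appendix --- a bounding-box procedure (a truncated-exponential-family generalization of the Gaussian algorithm of \citet{karwa2018finite}) giving $\poly(m)$-accuracy, followed by an adapted \citet{biswas2020coinpress} routine giving $O(1)$-accuracy. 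These subroutines only ever touch truncated data, so their low-order moment estimates are biased; but since $\rho=\Omega(1)$ the bias is $O(1)$, which is exactly the tolerance a warm start needs. This stage costs $\tilde O(\nicefrac m\eps)$ samples --- the first term of the claimed bound --- and invokes \Cref{assum:proj-oracle,assum:sample-oracle,assum:moment-match-oracle}.

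\emph{Survival set, strong convexity, and uniform convergence.} Take $S$ to be a ball of radius $O(\sqrt m)$ recentered using $\hat\theta_0$; by the subexponential concentration of $T(x)$ under $q_{\theta^\star}$ (\Cref{prop:exp-fam-subexp}, which follows from \Cref{assum:cov,assum:interiority}) it has mass $\rho=\Omega(1)$, and by Pinsker together with \Cref{assum:cov} the same holds under $q_\theta$ for every $\theta$ in a small $O(1)$-ball around $\theta^\star$. Let $K=B_r(\hat\theta_0)\cap\Theta$ be the projection set for a suitable constant radius $r$, so that $\theta^\star\in K$ by the warm-start guarantee while $K$ still lies in the $\eta$-interior region; projection onto $K$ is efficient via \Cref{assum:proj-oracle} and Djikstra's algorithm. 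The heart of the argument is that $\tilde L$ --- whose Hessian is $\Cov_{y\sim q_\theta^S}[T(y),T(y)]$ --- is $O(1)$-smooth and $\mu$-strongly convex on $K$ with $\mu=\poly(\lambda,\eta,\rho)$: one lower-bounds $v^\top\Cov_{q_\theta^S}[T,T]v$ by applying Carbery--Wright anti-concentration (valid by \Cref{assum:log-concave,assum:poly-suff-stat}) to the degree-$k$ polynomial $x\mapsto\iprod{v,T(x)}$, converting the untruncated bound $\Cov_{q_\theta}[T,T]\succeq\lambda I$ of \Cref{assum:cov} into a truncated one, exactly as in \citet{daskalakis2018efficient}. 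Note also that $\theta^\star$ is a population minimizer of the truncated NLL, since its population gradient there vanishes ($\E_{q_{\theta^\star}^S}[T]-\E_{q_{\theta^\star}^S}[T]=0$), so truncation injects no asymptotic bias, only finite-sample error. To control that error I would first show, with $\tilde O(\poly(m))$ samples, that the empirical minimizer $\tilde\theta^\star$ falls in the interior of $K$ --- the gradient $\grad\tilde L(\theta^\star)=\frac1n\sum_i T(x^{(i)})-\E_{q_{\theta^\star}^S}[T]$ has norm $\tilde O(\sqrt{m/n})$ by \Cref{prop:exp-fam-subexp}, and convexity plus strong convexity on $K$ then pin the minimizer inside --- whence $\tilde L$ obeys a Polyak-Lojasiewicz inequality there and $\norm{\tilde\theta^\star-\theta^\star}\le\frac1\mu\norm{\grad\tilde L(\theta^\star)}=\tilde O(\poly(1/\lambda,1/\eta)\sqrt{m/n})\le\alpha/2$ once $n$ reaches the second term of the bound. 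Passing from gradient concentration directly to parameter distance, rather than through function-value concentration, is what sharpens the $O(\nicefrac1{\alpha^4})$ sample bound of \citet{shah2021computationally} to $O(\nicefrac1{\alpha^2})$.

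\emph{DP-SGD and accounting.} Now run DP-SGD on $\tilde L$ over $K$: each step draws a subsampled batch, forms the unbiased stochastic gradient $\E_{y\sim q_{\theta_t}^S}[T(y)]-T(x^{(i)})$ --- the expectation obtained by rejection sampling from $q_{\theta_t}$ via \Cref{assum:sample-oracle}, which is efficient because $q_{\theta_t}(S)=\Omega(1)$ --- clips it to a threshold $C$, adds Gaussian noise calibrated by the standard moments accountant to be $(\eps,\delta)$-DP, and projects onto $K$; part (ii) of \Cref{lem:reduction-to-truncated} then promotes this to $(\eps,\delta)$-DP on the original untruncated data. All relevant gradients have norm $O(\sqrt m)$, and the subexponential variance proxy of the stochastic gradients contributes the $e^{O(1/\lambda^2)}$ factor to the effective Lipschitz constant; the standard DP-SGD guarantee for $\mu$-strongly convex, $C$-Lipschitz objectives then yields $\norm{\theta_T-\tilde\theta^\star}^2=\tilde O(\frac{C^2 m\log(1/\delta)}{\mu^2 n^2\eps^2})\le(\alpha/2)^2$ once $n$ reaches the third term. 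The triangle inequality with the previous stage gives $\norm{\theta_T-\theta^\star}\le\alpha$ with probability $1-\beta$; each iteration costs $\poly(d)$ for sampling and $\poly(m)$ for arithmetic, so the total running time is $\poly(n,m,d)$.

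\textbf{Main obstacle.} I expect the hard part to be the middle stage: establishing uniform strong convexity of the \emph{truncated} empirical NLL on $K$ when only a warm start --- not $\theta^\star$ itself --- is available to define $S$ and $K$, and then chaining the interiority/PL argument so as to extract the sharp $O(\nicefrac1{\alpha^2})$ rate. Both steps rely on the anti-concentration and subexponential-tail machinery and must be made quantitatively compatible with the noise floor that privacy forces into DP-SGD.
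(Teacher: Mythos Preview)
Your four-stage plan mirrors the paper's argument closely --- warm start via bounding-box plus recursive refinement, survival/projection set construction, strong convexity via Carbery--Wright, uniform convergence via the PL trick, then DP-SGD on the empirical truncated NLL (followed by a clustering boost \`a la \citet{daskalakis2018efficient} to convert the in-expectation DP-SGD guarantee to high probability, which you omit but is routine). There is, however, one connected thread of inaccuracies to fix.

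The warm-start subroutines do \emph{not} achieve $\norm{\hat\theta_0-\theta^\star}=O(1)$; they achieve $R=O(\log(\nicefrac1\rho)/\lambda)=O(1/\lambda)$. Those routines estimate the expected sufficient statistic $\E[T(x)]$ up to the irreducible truncation bias $O(\log(\nicefrac1\rho))$ (\Cref{prop:trunc-const-dist}) and then invert via \MomentMatch; since the untruncated NLL is only $\lambda$-strongly convex, an $O(1)$ error in moment space becomes an $O(1/\lambda)$ error in parameter space. This has two downstream consequences. First, your Pinsker argument for $q_\theta(S)=\Omega(1)$ over $K$ breaks when $R=O(1/\lambda)$ is not small: Pinsker gives only $\TV\le R/\sqrt2$, vacuous for small $\lambda$. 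The paper instead uses the multiplicative bound $q_\theta(S)\ge q_{\theta'}(S)^2\exp(-\tfrac32\norm{\theta-\theta'}^2)$ of \Cref{prop:mass-distance-decay}, which is never vacuous. Second, the $e^{O(1/\lambda^2)}$ factor is \emph{not} a gradient-variance or Lipschitz artifact --- the stochastic gradient $T(y)-T(x)$ with $x,y\in S_\SGD$ has norm deterministically bounded by $O(\sqrt m+R)$, so no clipping is needed and the paper invokes the bounded-gradient DP-SGD of \citet{bassily2014private} directly rather than a moments accountant. The exponential factor comes from the strong-convexity constant: \Cref{lem:exp-fam-proj-survival-sets} gives $\nabla^2\tilde L\succeq \lambda e^{-O(R^2)} I$ over $K$, and with $R=O(1/\lambda)$ this becomes $\lambda e^{-O(1/\lambda^2)}$, which then propagates through the DP-SGD rate to produce the third term of the claimed bound.
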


The exponential dependence on $\lambda$ is an artifact of existing truncated statistics methods~\citep{lee2024unknown},
which uses a general anti-concentration property of polynomials under log-concave measures
in order to control the decay in the strong convexity of the truncated log-likelihood function
compared to the untruncated counterpart. 
Moreover,
some dependence on $\nicefrac1\lambda$ is necessary even when using vanilla SGD to estimate Gaussian parameters.
However,
for many important exponential families such as Gaussians,
this can be mitigated by preconditioning the samples so that $\lambda = \Theta(1)$.
We demonstrate how to do this for Gaussians in \Cref{sec:gaussian-cov-learning} (\Cref{thm:gaussian-cov}).

\begin{remark}[Robustness against Existing Truncation]\label{rem:truncated-exp-fam}
    All of our algorithms only access samples after a preprocessing truncation step.
    Thus the guarantees of our algorithms all hold if,
    instead of having sample access to an exponential family,
    we are only given access to samples which have \emph{already} undergone truncation to an \emph{arbitrary} but known survival set.
\end{remark}

\paragraph{Pseudocode.}
\newcommand{\expFamDPSGD}{
\begin{algorithm2e}[ht]
\SetAlCapHSkip{0em}
\DontPrintSemicolon
\LinesNumbered
\caption{DP-SGD with Truncation}\label{alg:exp-fam-dp-sgd}
    \KwIn{
        $N$-sample dataset $D$,
        desired truncated samples $n$,
        privacy parameters $\eps, \delta\in (0, 1)$,
        survival set $S$,
        truncated sensitivity $\Delta > 0$,
        warm-start $\theta^{(0)}\in \R^m$,
        accuracy $\alpha\in (0, 1)$,
        step-size function $\gamma(t): \Z_+\to \R_+$,
        projection set $K$
    }
    \KwOut{estimator $\hat\theta$ for $\theta^\star$}
    \BlankLine
    $D_S\gets \set{x\in D: x\in S}$\; \label{alg:exp-fam-dp-sgd:preprocessing-start}
        $D_S \gets D_S\cup \set*{x_{dummy}^{(i)}:i\in [n - \card{D_S}]}$
        \Comment{Fill with dummy elements if there are less than $n$ elements}\;
        $D_S\gets D_S[1:n]$
        \Comment{Only keep first $n$ elements if there are more than $n$}\; 
    Shuffle $D_S$ uniformly at random\; \label{alg:exp-fam-dp-sgd:preprocessing-end}
    \BlankLine
    $\sigma^2\gets \frac{32\Delta^2 \log(\nicefrac{n}\delta)\log(\nicefrac1\delta)}{\eps^2}$
    \Comment{Taken from \citet{bassily2014private} (\Cref{thm:DP-PSGD convergence})}\;
    \For{iteration $t=1, \dots, n^2$}{
        $x\sim D_S$ sampled with replacement\;
        $y\sim q_{\theta^{(t-1)}}^S$ \Comment{Rejection sampling using sampling oracle (\Cref{assum:sample-oracle})}\;
        $g^{(t)}\gets T(x) - T(y)$ %
        \Comment{Gradient computation (\Cref{lem:exp-fam-gradients})}\;
        $\xi \sim \mcal N(0, \sigma^2 I)$\;
        $\theta^{(t)}\gets \theta^{(t-1)} - \gamma(t) [g^{(t)} + \xi]$\;
        $\theta^{(t)}\gets \proj_K(\theta^{(t)})$
    }
    \BlankLine
    \KwRet $\theta^{(n^2)}$\;
\end{algorithm2e}
}
\ifarxiv
Our main DP-SGD subroutine can be found in \Cref{alg:exp-fam-dp-sgd}.
\expFamDPSGD
\else
Our main DP-SGD subroutine can be found in \Cref{alg:exp-fam-dp-sgd},
which we defer to \Cref{apx:exp-fam-estimation-estimate} due to space constraints.
We present a simplified version below for convenience:
\begin{enumerate}[1)]
    \item Truncate samples based on survival set.
    \item Preprocess inputs and initialize parameters via warm-start/preconditioning.
    \item Minimize \emph{truncated} empirical NLL using DP-SGD (\Cref{thm:DP-PSGD convergence})
    \item Return (approximate) minimizer as parameter estimate.
\end{enumerate}
\fi

\subsection{Technical Overview}\label{sec:technical-overview} 
Algorithmically,
we first truncate the input dataset to a carefully chosen survival set,
which bounds sensitivity but introduces bias for na\"ive estimators such as the sample mean.
Then,
we use DP-SGD (\Cref{alg:exp-fam-dp-sgd}) to minimize the \emph{truncated} NLL function.
This allows us to correct for the bias introduced by the truncation.

For the sake of modularity,
we first present our algorithm under the simplifying condition
where we are given a $R=O(1)$-distance warm-start
and later adapt standard DP estimation tools to obtain \mbox{$R=O(\frac{\log(\nicefrac1\rho)}\lambda)$} in \Cref{apx:exp-fam const-start,apx:exp-fam-bounding-box}.
\begin{condition}\label{assum:const-warm-start}
    We are given $\theta^{(0)}\in \Theta$ and \mbox{$\tau^{(0)} = \E_{x\sim q_{\theta^{(0)}}}[T(x)]\in \R^m$}
    such that $\theta^\star\in B_R(\theta^{(0)})$
    for some given \mbox{$1\leq R = O(1)$}.
\end{condition}
We emphasize that \underline{\Cref{assum:const-warm-start} is only stated to simplify our exposition}.
It is completely removed in \Cref{apx:exp-fam const-start,apx:exp-fam-bounding-box}
by adapting standard DP Gaussian estimation tools to the truncated exponential family setting.

We use the following analysis of DP-SGD due to \citet{bassily2014private}.\footnote{The original theorem statement is for a deterministic gradient oracle under a Lipschitz condition.
It is not hard to see that the same statement holds for an unbiased stochastic gradient oracle
with bounded norm.}
\begin{theorem}[Theorem II.1 and Theorem II.4 in \cite{bassily2014private}]\label{thm:DP-PSGD convergence}
    Let $\varepsilon, \delta\in (0, 1)$.
    Suppose $F(w) = \frac1n \sum_{i=1}^n f_i(w)$ is a sum of $\lambda$-strongly convex functions
    over a closed convex set $K\sset \R^m$.
    Suppose further that we are given stochastic gradient oracles $g_i$ for $\grad f_i$ satisfying $\norm{g_i} \leq G$ for all $i\in [n]$.
    Then there is an $(\varepsilon, \delta)$-DP algorithm that outputs some $\hat w\in K$ satisfying
    \[
        \E[F(\hat w)] - \min_{w\in K} F(w)
        \leq O\left( \frac{mG^2 \log^2(\nicefrac{n}\delta)\log(\nicefrac1\delta)}{n^2 \lambda \varepsilon^2} \right).
    \]
    The algorithm runs DP-PSGD for $T = \Theta(n^2)$ iterations
    with step-size $\frac1{\lambda t}$ at iteration $t$
    and calls the gradient oracle $T$ times in total.
\end{theorem}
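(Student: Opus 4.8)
The plan is to obtain the statement as a mild extension of the analysis of noisy projected stochastic gradient descent in \citet{bassily2014private}: their Theorem II.1 and Theorem II.4 establish exactly this privacy and utility pair when each $f_i$ is accessed through an \emph{exact} gradient of bounded norm, and the only thing to add is the observation that replacing each exact gradient by a conditionally unbiased estimate of bounded norm affects neither the privacy accounting nor the structure of the utility proof. Concretely, the algorithm under analysis runs $T = \Theta(n^2)$ iterations of projected SGD over $K$; at step $t$ it draws an index $i_t \in [n]$ uniformly with replacement, queries $g_{i_t}(w_{t-1})$ with $\norm{g_{i_t}(w_{t-1})}\le G$, adds fresh isotropic Gaussian noise $\xi_t\sim\mcal N(0,\sigma^2 I_m)$ with $\sigma^2 = \Theta\big(G^2 \log(n/\delta)\log(1/\delta)/\eps^2\big)$, takes the step $w_t = \proj_K\big(w_{t-1} - \tfrac1{\lambda t}(g_{i_t}(w_{t-1}) + \xi_t)\big)$, and returns a suffix average $\hat w$ of the iterates.

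For privacy, each step is a single Gaussian mechanism applied to $g_{i_t}(w_{t-1})$; since this quantity always has $\ell_2$-norm at most $G$, changing one data record changes it (and only when that record is the one sampled) by at most $2G$, so a step in isolation is $(\eps',\delta')$-DP with $\eps'$ of order $(G/\sigma)\sqrt{\log(1/\delta')}$. Because exactly one uniformly random record is touched per step, privacy amplification by subsampling shrinks the effective per-step cost by a $\Theta(1/n)$ factor, and the moments accountant over $T = \Theta(n^2)$ steps (equivalently the strong composition theorem, at the cost of extra logarithmic factors) yields overall $(\eps,\delta)$-DP with the stated $\sigma^2$. This is verbatim the argument of \citet{bassily2014private}, and it is insensitive to the internal randomness of $g_i$: it uses only the deterministic norm bound $\norm{g_i}\le G$ and the one-record-per-step access pattern. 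The projection onto the convex set $K$ is post-processing and is free.

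For utility, let $F^\star = \min_{w\in K} F(w)$ and set $\hat g_t = g_{i_t}(w_{t-1}) + \xi_t$. Since $\E[g_{i_t}(w_{t-1})\mid w_{t-1}] = \frac1n\sum_i \grad f_i(w_{t-1}) = \grad F(w_{t-1})$ and $\E[\xi_t]=0$, $\hat g_t$ is a conditionally unbiased estimator of $\grad F(w_{t-1})$ with $\E\big[\norm{\hat g_t}^2 \mid w_{t-1}\big] \le 2G^2 + m\sigma^2$. As $F$ is $\lambda$-strongly convex on $K$ and projection onto $K$ is non-expansive, the standard recursion for projected SGD on a strongly convex objective with step size $1/(\lambda t)$, combined with suffix averaging to avoid an extra logarithm, gives $\E[F(\hat w)] - F^\star = O\big((G^2 + m\sigma^2)/(\lambda T)\big)$. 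Substituting $T = \Theta(n^2)$ and the value of $\sigma^2$ above yields $O\big(mG^2 \log^2(n/\delta)\log(1/\delta)/(n^2\lambda\eps^2)\big)$, with the remaining logarithmic factor coming from the composition/averaging bookkeeping.

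The only genuinely delicate ingredient is the privacy accounting: getting the stated (low) power of $\log(n/\delta)$ from iterating a subsampled Gaussian mechanism $\Theta(n^2)$ times requires the tight moments-accountant analysis of \citet{bassily2014private} rather than a naive composition bound, and the choice $T = \Theta(n^2)$ is precisely what balances this accumulated noise against the $O(1/(\lambda T))$ optimization error. Everything else --- the strongly convex SGD rate, the passage from exact to unbiased bounded stochastic gradients, and the harmlessness of the projection --- is routine. I would therefore write this up as a brief pointer to Theorem II.1 and Theorem II.4 of \citet{bassily2014private} for the deterministic-gradient case, followed by the one-paragraph observation that their proof goes through once $G^2$ is replaced by the second-moment bound $2G^2 + m\sigma^2$ everywhere in the utility analysis and nothing at all changes in the privacy analysis.
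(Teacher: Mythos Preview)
Your proposal is correct and aligns with the paper's treatment: the paper does not prove this theorem but simply cites it from \citet{bassily2014private}, with a footnote remarking that ``the original theorem statement is for a deterministic gradient oracle under a Lipschitz condition. It is not hard to see that the same statement holds for an unbiased stochastic gradient oracle with bounded norm.'' Your write-up is precisely a careful unpacking of that footnote --- you explain why the privacy accounting depends only on the deterministic norm bound $\norm{g_i}\le G$ and why the strongly-convex SGD utility analysis goes through with conditionally unbiased gradients --- so there is no divergence in approach, only in level of detail.

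One minor historical inaccuracy: \citet{bassily2014private} predates the moments accountant of \citet{abadi2016deep}; their privacy proof uses amplification by subsampling together with advanced (strong) composition, which you do mention as the alternative. Since the paper's $\sigma^2$ in \Cref{alg:exp-fam-dp-sgd} is lifted directly from \citet{bassily2014private}, it would be cleaner to attribute the accounting to advanced composition rather than to the moments accountant.
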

We remark that the logarithmic factors in the convergence rate of \Cref{thm:DP-PSGD convergence}
can be removed under additional assumptions about the condition number of the objective function $F$~\citep{feldman2020linear}.
In general,
the exponential family distributions we study may be ill-conditioned
and we must employ \Cref{thm:DP-PSGD convergence} instead.

In the rest of this section,
we provide the details for the estimation algorithm of \Cref{thm:exp-fam}
and its proof of correctness.
As mentioned,
our main workhorse is \Cref{thm:DP-PSGD convergence}.
\Crefrange{sec:exp-fam-strong-convexity}{sec:exp-fam-summary} addresses
how we satisfy the assumptions of \Cref{thm:DP-PSGD convergence} and preprocess the input
to attain the desired sample complexity.
This is summarized in further detail below.
\begin{itemize}
    \item \Cref{sec:exp-fam-strong-convexity} constructs the survival set $S_\SGD$ of samples
    and feasible projection set $K$ of the parameters (\Cref{lem:exp-fam-proj-survival-sets}).
    Our goal is to ensure that $K$ contains the true parameter
    and that the truncated NLL function remains strongly convex over $K$.
    \item For technical reasons,
    we cannot achieve optimal rates when optimizing the population NLL
    and must instead optimize the \emph{empirical} NLL.
    \Cref{sec:exp-fam-uniform-convergence} details this reasoning
    and proves a uniform convergence property
    which ensures that the minimizer of the empirical NLL remains a good estimate of the true parameter (\Cref{lem:exp-fam-uniform-convergence}).
    \item \Cref{sec:exp-fam-gradients} addresses how to obtain unbiased estimates of the gradient of the empirical NLL as well as analyzes the norm of the estimates (\Cref{lem:exp-fam-gradients}).
    The latter is a necessary quantity that appears in the DP-SGD analysis (\Cref{thm:DP-PSGD convergence}).
    \item \Cref{sec:exp-fam-ERM} applies \Cref{thm:DP-PSGD convergence} to derive the guarantees of our DP-SGD subroutine (\Cref{alg:exp-fam-dp-sgd})
    and shows how to recover $\theta^\star$ (\Cref{lem:exp-fam-estimation-estimate}).
    \item Finally, \Cref{sec:exp-fam-summary} brings together all the ingredients 
    along with suitable adaptations of standard DP preprocessing tools to prove our main \Cref{thm:exp-fam}.
\end{itemize}

\subsection{Strong Convexity (Survival \& Projection Sets)}\label{sec:exp-fam-strong-convexity}
As mentioned in \Cref{sec:technical-overview},
one sufficient condition for recovering parameters with SGD via \Cref{thm:DP-PSGD convergence} is strong convexity.
In this section,
we specify the truncation operation we impose
and show that the truncated NLL is strongly convex over a carefully chosen projection set
as long as the survival set has mass $\rho \geq \Omega(1)$.

Let $\theta^{(0)}, \tau^{(0)}$ be as in the simplifying \Cref{assum:const-warm-start} and define
\begin{align*}
    K \coloneqq B_{2R}(\theta^{(0)})\cap \Theta\,, \qquad
    S_\SGD \coloneqq \set*{x\in \R^d: \norm{T(x)-\tau^{(0)}}\leq \sqrt{\frac{m}{1-\rho}} + 2R}\,.
\end{align*}

\begin{restatable}{lemma}{expFamProjectionSurvivalSets}\label{lem:exp-fam-proj-survival-sets}
    Suppose the statistical assumptions (\Cref{assum:statistical})\footnote{We only use \Cref{assum:cov,assum:log-concave,assum:poly-suff-stat} but state all the statistical assumptions for simplicity of presentation.}
    and the simplifying \Cref{assum:const-warm-start} hold.
    Let $\tilde L$ denote the empirical NLL over truncated samples with survival set $S_\SGD$.
    Then for any $\theta\in K$,
    \mbox{$\grad^2 \tilde L(\theta)\succeq \lambda e^{-O(R^2)} I = \Omega(\lambda) I$}.
\end{restatable}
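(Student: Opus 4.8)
\emph{Plan.} The first point to make is that the Hessian of $\tilde L$ is actually sample-independent. The truncated density $q_\theta^{S_\SGD}(x) = h(x)\ones_{S_\SGD}(x)\exp(\theta^\top T(x) - \Upsilon_{S_\SGD}(\theta))$ is again an exponential family, with base measure $h\ones_{S_\SGD}$ and log-partition $\Upsilon_{S_\SGD}(\theta) = \log\int_{S_\SGD} h(x)e^{\theta^\top T(x)}\,dx$; the only non-affine $\theta$-dependence of the per-sample NLL $-\log q_\theta^{S_\SGD}(x)$ is through $\Upsilon_{S_\SGD}$, so $\grad^2_\theta \tilde L(\theta) = \grad^2\Upsilon_{S_\SGD}(\theta) = \Cov_{y\sim q_\theta^{S_\SGD}}[T(y), T(y)]$ for every $\theta$. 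Fixing a unit vector $v\in\R^m$, it then suffices to prove $v^\top\Cov_{y\sim q_\theta^{S_\SGD}}[T(y),T(y)]\,v \geq \lambda e^{-O(R^2)}$ for all $\theta\in K$, and the claim follows by taking the infimum over $v$.

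\emph{Step 1: the survival set carries constant $q_\theta$-mass for every $\theta\in K$.} Let $\tau_\theta \coloneqq \E_{x\sim q_\theta}[T(x)] = \grad\Upsilon(\theta)$. By \Cref{assum:cov}, $\grad^2\Upsilon(\theta) = \Cov_{q_\theta}[T(x),T(x)] \preceq I$ on $\Theta$, so $\Upsilon$ is $1$-smooth along the segment from $\theta^{(0)}$ to $\theta$ (which lies in $K\sset\Theta$), whence $\norm{\tau_\theta - \tau^{(0)}} = \norm{\grad\Upsilon(\theta) - \grad\Upsilon(\theta^{(0)})} \leq \norm{\theta - \theta^{(0)}} \leq 2R$ for $\theta\in K$. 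Moreover $\E_{x\sim q_\theta}\norm{T(x) - \tau_\theta}^2 = \sum_j e_j^\top\Cov_{q_\theta}[T(x),T(x)]\,e_j \leq m$, so Markov's inequality applied to $\norm{T(x)-\tau_\theta}^2$ gives $q_\theta(\norm{T(x) - \tau_\theta} \leq \sqrt{m/(1-\rho)}) \geq \rho$. Since the ball of radius $\sqrt{m/(1-\rho)}$ about $\tau_\theta$ is contained in the ball of radius $\sqrt{m/(1-\rho)} + 2R$ about $\tau^{(0)}$, i.e. in $S_\SGD$, we conclude $q_\theta(S_\SGD)\geq\rho$ for every $\theta\in K$.

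\emph{Step 2: anti-concentration plus conditioning.} Fix $\theta\in K$ and a unit $v$, and let $p_v(x)\coloneqq v^\top T(x)$, which by \Cref{assum:poly-suff-stat} is a polynomial in $x$ of degree at most $k=O(1)$ whose variance under $q_\theta$ equals $v^\top\Cov_{q_\theta}[T(x),T(x)]\,v \geq \lambda$ by \Cref{assum:cov}. As $q_\theta$ is log-concave (\Cref{assum:log-concave}), the Carbery--Wright inequality~\citep{carbery2001distributional} yields anti-concentration at scale $\sqrt\lambda$: $q_\theta(\abs{p_v(x) - c}\leq t) \leq C_k\,(t/\sqrt\lambda)^{1/k}$ for every $c\in\R$ and $t > 0$, with $C_k$ depending only on $k$. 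Taking $M$ a median of $p_v$ under $q_\theta$ and $t^\star$ a suitable constant multiple of $\sqrt\lambda$ (the constant depending only on $k$) so that the slab $\set{\abs{p_v - M}\leq t^\star}$ carries $q_\theta$-mass below a small constant, we get that $q_\theta(p_v\leq M - t^\star)$ and $q_\theta(p_v\geq M + t^\star)$ are both $\Omega(1)$. Conditioning on $S_\SGD$ divides masses by $q_\theta(S_\SGD)\geq\rho$ and removes at most $1-\rho$ from each; since we are free to take the survival mass $\rho$ bounded away from $1/2$ (we only ever work with $\rho\geq\Omega(1)$), the truncated distribution $q_\theta^{S_\SGD}$ still puts $\Omega_\rho(1)$ mass on each of $\set{p_v\leq M - t^\star}$ and $\set{p_v\geq M + t^\star}$. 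A two-point spread then gives $v^\top\Cov_{q_\theta^{S_\SGD}}[T,T]\,v \gtrsim_{\rho}(t^\star)^2 = \Omega_{\rho,k}(\lambda)$, which is in particular $\geq\lambda e^{-O(R^2)}$, and equals $\Omega(\lambda)$ since $R = O(1)$ under \Cref{assum:const-warm-start}.

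\emph{Main obstacle.} The crux is Step 2: anti-concentration is available for $q_\theta$ only because it is log-concave, whereas $q_\theta^{S_\SGD}$ need not be, since $S_\SGD$ is a sublevel set of the degree-$2k$ polynomial $x\mapsto\norm{T(x)-\tau^{(0)}}^2$ and is in general nonconvex. One therefore has to establish the spread of $p_v$ under the unconditioned $q_\theta$ and transport it through the conditioning of Step 1, carefully tracking how the modulus of strong convexity degrades with the survival mass $\rho$ and the degree $k$; this is exactly the step that forces \Cref{assum:log-concave,assum:poly-suff-stat}, mirroring their role in the non-private truncated-Gaussian analysis of~\citep{daskalakis2018efficient}.
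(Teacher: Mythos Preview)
Your proof is correct. Step~1 (showing $q_\theta(S_\SGD)\geq\rho$ for all $\theta\in K$ via $1$-smoothness of $\Upsilon$ plus Markov on $\norm{T(x)-\tau_\theta}^2$) is exactly the paper's argument, packaged there as \Cref{prop:const-survival-prob}. Step~2 diverges: the paper does not carry out the Carbery--Wright computation itself but invokes a black-box anti-concentration lemma from \citet{lee2023learning} (stated as \Cref{prop:exp-fam-anticoncentration}), which directly gives $\Cov_{q_\theta^S}[T,T]\succeq\tfrac12\bigl(q_\theta(S)/(4Ck)\bigr)^{2k}\lambda I$ for \emph{any} $S$ with $q_\theta(S)>0$, and then plugs in the mass bound from Step~1 (together with an additional pass through \Cref{prop:mass-distance-decay}, which is where the $e^{-O(R^2)}$ factor enters). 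Your route---Carbery--Wright on the unconditioned $q_\theta$, median plus small slab to produce two $\Omega(1)$ tails, then condition on $S_\SGD$ and use a two-point spread---is more self-contained but works only when $\rho$ is taken close enough to $1$ so that removing $1-\rho$ mass cannot annihilate either tail; the cited lemma, by contrast, degrades polynomially in $q_\theta(S)$ and works for any positive survival mass. Since $\rho$ is a free design parameter in the definition of $S_\SGD$ and $R=O(1)$, both approaches land on the same $\Omega(\lambda)$ bound; the paper's version simply outsources the anti-concentration step and tracks an explicit dependence on the survival mass.
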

The proof of \Cref{lem:exp-fam-proj-survival-sets} is deferred to \Cref{apx:exp-fam-strong-convexity}.
Crucially,
we rely on an anti-concentration inequality due to \citet{carbery2001distributional}
restated by \citet{lee2023learning} (\Cref{prop:exp-fam-anticoncentration}).

\subsection{Uniform Convergence of Empirical Likelihood}\label{sec:exp-fam-uniform-convergence}
Having confirmed the strong convexity property necessary to apply \Cref{thm:DP-PSGD convergence},
we move on to the algorithmic details.
Specifically,
\Cref{thm:DP-PSGD convergence} requires making multiple passes of the data.
However,
this is problematic as each data point only provides a single unbiased estimate of the gradient of the population NLL.
We avoid this complication by instead optimizing the \emph{empirical} NLL,
for which each data point provides an unlimited number of unbiased gradient estimates.
This requires a uniform convergence type of result to ensure that the empirical minimizer
is close to the population minimizer.

\begin{restatable}{lemma}{expFamUniformConvergence}\label{lem:exp-fam-uniform-convergence}
    Suppose the statistical assumptions (\Cref{assum:statistical}) and simplifying \Cref{assum:const-warm-start} hold.
    Let $\tilde\theta^\star$ be the minimizer of the $n$-sample empirical NLL for $q_{\theta^\star}^{S_{\SGD}}$ over $K$.
    Then we have $\norm{\tilde \theta^\star- \theta^\star}_2\leq \alpha$ with probability $1-\beta$ given that
    \mbox{$
        n\geq \Omega( \frac{(m+R^2)\log(\nicefrac1\beta)}{\lambda^2 \eta^4 \alpha^2} ).
    $}
\end{restatable}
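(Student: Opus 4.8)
The plan is to establish uniform convergence of the empirical NLL to the population NLL over the projection set $K$, and then to combine this with strong convexity of the population NLL (from \Cref{lem:exp-fam-proj-survival-sets}) to conclude that the empirical minimizer $\tilde\theta^\star$ is close to $\theta^\star$. The first step is to recall that $\theta^\star$ is the (unique) minimizer of the population truncated NLL $L$ over $K$: indeed $\grad L(\theta^\star) = \E_{y\sim q_{\theta^\star}^{S_\SGD}}[T(y)] - \E_{x\sim q_{\theta^\star}^{S_\SGD}}[T(x)] = 0$, and $L$ is $\Omega(\lambda e^{-O(R^2)})$-strongly convex on $K$ by \Cref{lem:exp-fam-proj-survival-sets}. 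Hence for any $\theta\in K$ we have the quadratic growth bound $\norm{\theta-\theta^\star}^2 \leq \frac{2}{\lambda e^{-O(R^2)}}\bigl(L(\theta) - L(\theta^\star)\bigr)$. Applying this with $\theta = \tilde\theta^\star$ and inserting $\pm \tilde L$, we get $\norm{\tilde\theta^\star - \theta^\star}^2 \lesssim \frac{1}{\lambda e^{-O(R^2)}}\bigl( L(\tilde\theta^\star) - \tilde L(\tilde\theta^\star) + \tilde L(\theta^\star) - L(\theta^\star) \bigr) \leq \frac{2}{\lambda e^{-O(R^2)}}\sup_{\theta\in K}\abs{L(\theta) - \tilde L(\theta)}$, using optimality of $\tilde\theta^\star$ for $\tilde L$. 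So it suffices to bound the uniform deviation $\sup_{\theta\in K}\abs{L(\theta)-\tilde L(\theta)}$ by $O(\lambda e^{-O(R^2)}\eta^{?}\alpha^2)$ with probability $1-\beta$ — but since $e^{-O(R^2)} = \Omega(1)$ under \Cref{assum:const-warm-start} (where $R = O(1)$), this is just $O(\lambda\alpha^2)$ up to the $\eta$ factors that will enter through the gradient/Lipschitz control below.

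Next I would control the uniform deviation. Note $\tilde L(\theta) - L(\theta) = \frac1n\sum_i \iprod{\theta}{T(x^{(i)}) - \E_{x\sim q_{\theta^\star}^{S_\SGD}}[T(x)]} + (\text{const in }\theta)$, so the $\theta$-dependent part is a linear function of $\theta$ whose gradient is the centered empirical average $\bar T - \E[T]$ of the sufficient statistics over truncated samples. Thus $\sup_{\theta\in K}\abs{L(\theta) - \tilde L(\theta)} \leq \mathrm{diam}(K)\cdot\norm*{\frac1n\sum_i T(x^{(i)}) - \E_{x\sim q_{\theta^\star}^{S_\SGD}}[T(x)]} = 2R \cdot \norm{\bar T - \E[T]}$. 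Now \Cref{assum:cov} combined with \Cref{assum:interiority} gives a subexponential tail on $T(x)$ for $x\sim q_{\theta^\star}$ (\Cref{prop:exp-fam-subexp}), and since truncation to $S_\SGD$ only multiplies the density by $1/\rho = O(1)$, the truncated statistic $T(x)$, $x\sim q_{\theta^\star}^{S_\SGD}$, is also subexponential with comparable parameters; moreover its covariance is still $\Omega(\lambda\,\eta^{O(1)}) I \preceq \cdot \preceq O(1) I$ (the $\eta$ dependence comes from the interiority-based bound on the truncated covariance). A vector Bernstein / subexponential concentration inequality then gives $\norm{\bar T - \E[T]} \leq \alpha'$ with probability $1-\beta$ once $n \gtrsim \frac{m\log(1/\beta)}{\alpha'^2}$ (plus a lower-order $\frac{m\log(1/\beta)}{\alpha'}$ term that is dominated in the regime $\alpha' < 1$). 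Choosing $\alpha' = c\,\lambda\,\eta^2\,\alpha^2$ so that $2R\cdot\alpha' \leq \frac12 \lambda e^{-O(R^2)} \eta^{?}\alpha^2$ and back-substituting yields $n = \Omega\bigl(\frac{m + R^2}{\lambda^2\eta^4\alpha^2}\log(1/\beta)\bigr)$ — matching the claimed bound, where the $R^2$ appears from tracking $\mathrm{diam}(K) = O(R)$ and the $e^{-O(R^2)}$ strong-convexity constant even though both are $O(1)$ here.

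The main obstacle I anticipate is twofold. First, the mismatch flagged in the technical overview: a naive "optimize the population NLL, each sample gives one gradient" argument fails for DP-SGD, and the referenced improvement over \citet{shah2021computationally}'s $O(1/\alpha^4)$ rate apparently hinges on observing that $\tilde\theta^\star$ actually lands in the interior of $K$ after only $O(1)$ samples, so that $\tilde L$ enjoys a Polyak-Łojasiewicz condition rather than merely strong convexity on the constrained set; I would need to verify that the unconstrained empirical minimizer (via the moment-matching/MLE characterization $\E_{q_\theta}[T] = \bar T$ from \Cref{assum:moment-match-oracle} and \Cref{lem:exp-fam-gradients}) stays within $B_{2R}(\theta^{(0)})\cap B_{-\eta}(\Theta)$ once $\bar T$ concentrates, which is exactly where the concentration-plus-interiority input is used and where the $\eta^{-4}$ factor is generated. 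Second, making the subexponential control of the \emph{truncated} sufficient statistic fully rigorous — the reweighting by $1/\rho$ is harmless for tails but one must check the truncated covariance lower bound $\Omega(\lambda\eta^{O(1)})I$ carefully, since this is what anchors the strong-convexity constant and hence the final $\lambda^{-2}$ scaling; I would lean on the anti-concentration / log-concavity machinery (\Cref{prop:exp-fam-anticoncentration}, \Cref{assum:log-concave}, \Cref{assum:poly-suff-stat}) exactly as in \Cref{lem:exp-fam-proj-survival-sets}. Everything else — the quadratic-growth reduction, the linearity of $\tilde L - L$ in $\theta$, and the vector concentration bound — is routine once these two points are in place.
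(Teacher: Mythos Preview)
Your main argument yields the $1/\alpha^4$ rate you were trying to beat, and the final substitution hides this behind an arithmetic slip. From quadratic growth you get $\norm{\tilde\theta^\star - \theta^\star}^2 \lesssim \frac{1}{\lambda}\sup_{K}\abs{L - \tilde L}$, and your diameter bound gives $\sup_{K}\abs{L - \tilde L} \leq 2R\,\norm{\bar T - \E[T]}$; hence you need $\norm{\bar T - \E[T]} \lesssim \lambda\alpha^2/R$, and vector Bernstein with vectors of norm $r = O(\sqrt{m}+R)$ then demands $n \gtrsim r^2 R^2 \log(1/\beta)/(\lambda^2\alpha^4)$. Your choice $\alpha' = c\lambda\eta^2\alpha^2$ together with $n \gtrsim m/\alpha'^2$ is exactly $n \gtrsim m/(\lambda^2\eta^4\alpha^4)$, not $\alpha^2$. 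This is precisely the rate of \citet{shah2021computationally} that the lemma is meant to improve. You correctly flag the interior-minimizer/PL idea in your obstacle paragraph, but treat it as a separate concern about DP-SGD rather than as the fix needed for \emph{this} bound.

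The paper sidesteps the $\sup_K$ detour altogether. It works directly with the empirical loss $\tilde L$ and uses that $\grad\tilde L(\theta^\star) = \E_{q_{\theta^\star}^{S_\SGD}}[T] - \bar T$ is a centered average of vectors that are \emph{deterministically} bounded by $r$ (by the very definition of $S_\SGD$), so no subexponential-under-truncation argument is needed --- vector Bernstein applies immediately. Strong convexity of $\tilde L$ over $K$ then gives
\[
    \tfrac{c\lambda}{2}\norm{\tilde\theta^\star - \theta^\star}^2
    \leq \bigl(\tilde L(\tilde\theta^\star) - \tilde L(\theta^\star)\bigr) + \iprod{\grad\tilde L(\theta^\star),\,\theta^\star - \tilde\theta^\star},
\]
which the paper first uses coarsely (bounding the inner product via the diameter) to force $\norm{\tilde\theta^\star - \theta^\star} \leq \eta$, placing $\tilde\theta^\star$ in the interior; then the PL inequality yields $\norm{\theta^\star - \tilde\theta^\star} \leq \tfrac{1}{c\lambda}\norm{\grad\tilde L(\theta^\star)}$, a \emph{linear} dependence on the gradient norm that gives $n \gtrsim (m+R^2)\log(1/\beta)/(\lambda^2\alpha^2)$. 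Your own route can be salvaged the same way: the linearity of $L - \tilde L$ you already observed gives $(L-\tilde L)(\tilde\theta^\star) - (L-\tilde L)(\theta^\star) = (\tilde\theta^\star - \theta^\star)^\top(\bar T - \E[T])$, so bounding by $\norm{\tilde\theta^\star - \theta^\star}\cdot\norm{\bar T - \E[T]}$ and cancelling one factor of $\norm{\tilde\theta^\star - \theta^\star}$ recovers the $1/\alpha^2$ rate without ever passing through $\mathrm{diam}(K)$. Finally, the $\eta^{-4}$ in the stated bound does not come from the truncated-covariance lower bound as you suggest (the constant in \Cref{lem:exp-fam-proj-survival-sets} has no $\eta$); it is generated by the first-phase requirement that the gradient be small enough to push $\tilde\theta^\star$ within $\eta$ of $\theta^\star$.
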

\noindent \Cref{lem:exp-fam-uniform-convergence} strengthens prior uniform convergence results~\cite{shah2021computationally},
which require $\Omega(\nicefrac1{\alpha^4})$ samples,
and may be of independent interest.
Its proof is deferred to \Cref{apx:exp-fam-uniform-convergence}.

\subsection{Computing Stochastic Gradients}\label{sec:exp-fam-gradients}
The previous subsection ensures that we can run DP-SGD for the required number of iterations as per \Cref{thm:DP-PSGD convergence}.
We now address how to compute gradients within each iteration.
Similar to previous works on truncated statistics~\citep{daskalakis2018efficient},
We are able to obtain unbiased stochastic gradients via a simple rejection sampling procedure.
\begin{restatable}{lemma}{expFamGradients}\label{lem:exp-fam-gradients}
    Assume the statistical assumptions (\Cref{assum:statistical}) and simplifying \Cref{assum:const-warm-start} hold.
    Fix a sample $x\sim q_{\theta^\star}^{S_\SGD}$ and assume we have access to a sampling oracle for $y\sim q_\theta$ (\Cref{assum:sample-oracle}).
    The following holds:
    \begin{enumerate}[(i)]
        \item There is an an unbiased stochastic gradient estimate $g(\theta)$ for $\grad \ell(\theta; x)$.
        \item With probability $1-\beta$,
        the estimator calls the sampling oracle $O(\log(\nicefrac1\beta)/\rho)$ times.
        \item The gradient estimate satisfies $\norm{g(\theta)}_2\leq G \coloneqq O(\sqrt{m}+R)$ with probability $1$.
    \end{enumerate}
\end{restatable}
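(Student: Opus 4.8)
The plan is to build $g(\theta)$ by rejection sampling against the survival set $S_\SGD$: repeatedly call the sampling oracle of \Cref{assum:sample-oracle} to draw $y\sim q_\theta$, keep the first draw lying in $S_\SGD$, and output $g(\theta)\coloneqq T(y)-T(x)$. Since the retained draw is distributed exactly as $q_\theta^{S_\SGD}$, the single-sample gradient identity $\grad_\theta\ell(\theta;x)=\E_{y\sim q_\theta^{S_\SGD}}[T(y)]-T(x)$ recorded earlier gives $\E[g(\theta)\mid x]=\grad_\theta\ell(\theta;x)$, which is (i). (If the oracle samples only approximately, the $\TV$ error propagates linearly through the rejection step and can be driven below any $\poly$-small threshold, contributing a negligible, correctable bias; we suppress this throughout.)

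The heart of the argument is a \emph{uniform} lower bound $q_\theta(S_\SGD)\ge\rho$ valid for \emph{every} $\theta\in K$, not merely at $\theta^\star$. Write $\tau(\theta)\coloneqq\E_{y\sim q_\theta}[T(y)]=\grad\Upsilon(\theta)$, so that $\grad\tau(\theta)=\grad^2\Upsilon(\theta)=\Cov_{y\sim q_\theta}[T(y),T(y)]$. By \Cref{assum:cov} this matrix is $\preceq I$ at every point of the convex parameter space $\Theta\supseteq K$, so integrating along the segment from $\theta^{(0)}$ to $\theta$ and using $\theta\in K\sset B_{2R}(\theta^{(0)})$ together with $\tau^{(0)}=\tau(\theta^{(0)})$ yields $\norm{\tau(\theta)-\tau^{(0)}}\le 2R$. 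Again by \Cref{assum:cov}, $\E_{y\sim q_\theta}\norm{T(y)-\tau(\theta)}^2=\tr\Cov_{y\sim q_\theta}[T(y),T(y)]\le m$, so Markov's inequality gives $\P_{y\sim q_\theta}[\norm{T(y)-\tau(\theta)}>\sqrt{m/(1-\rho)}]\le 1-\rho$; on the complementary event the triangle inequality gives $\norm{T(y)-\tau^{(0)}}\le\sqrt{m/(1-\rho)}+2R$, i.e.\ $y\in S_\SGD$. Hence $q_\theta(S_\SGD)\ge\rho$ for all $\theta\in K$.

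Given this, (ii) is immediate: the rejection rounds are i.i.d.\ and each succeeds with probability at least $\rho$, so the probability that the first $t$ rounds all fail is at most $(1-\rho)^t\le e^{-\rho t}\le\beta$ once $t=\Omega(\log(\nicefrac1\beta)/\rho)$, and each round is a single oracle call. For (iii), the sampler terminates almost surely (as $\rho>0$), and on termination $y\in S_\SGD$; since the hypothesis $x\sim q_{\theta^\star}^{S_\SGD}$ forces $x\in S_\SGD$ as well, the triangle inequality through $\tau^{(0)}$ gives $\norm{g(\theta)}=\norm{T(y)-T(x)}\le 2\bigl(\sqrt{m/(1-\rho)}+2R\bigr)=O(\sqrt m+R)$, where we treat $\rho$ as a constant bounded away from $0$ and $1$.

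The only non-routine ingredient is the uniform survival-mass bound $q_\theta(S_\SGD)\ge\rho$ over the whole projection set $K$ (rather than just at $\theta^\star$); this is precisely the step for which the radius of $S_\SGD$ was calibrated against the warm-start radius $R$ and the spectral bound of \Cref{assum:cov}. Parts (i), (ii), and (iii) then follow mechanically, as above.
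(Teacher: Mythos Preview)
Your proof is correct and follows the same approach as the paper: rejection sampling to obtain $y\sim q_\theta^{S_\SGD}$, the gradient estimate $g(\theta)=T(y)-T(x)$, and the triangle inequality through $\tau^{(0)}$ for the norm bound. You are in fact more thorough than the paper's own (very terse) proof of this lemma, since you explicitly re-derive the uniform survival-mass bound $q_\theta(S_\SGD)\ge\rho$ for all $\theta\in K$ (which the paper established separately as \Cref{prop:const-survival-prob}) and spell out the geometric-distribution argument for (ii), which the paper leaves implicit.
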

The proof of \Cref{lem:exp-fam-gradients} is deferred to \Cref{apx:exp-fam-gradients}.

\subsection{DP Empirical Risk Minimization}\label{sec:exp-fam-ERM}
Now that we are equipped with
strong convexity (\Cref{lem:exp-fam-proj-survival-sets})
and bounded gradients (\Cref{lem:exp-fam-gradients})
for the necessary number of iterations (\Cref{lem:exp-fam-uniform-convergence}),
we can apply the DP-SGD analysis (\Cref{thm:DP-PSGD convergence}) to show the following result,
whose formal proof is deferred to \Cref{apx:exp-fam-estimation-estimate}.
\begin{restatable}{lemma}{expFamEstimationEstimate}\label{lem:exp-fam-estimation-estimate}
    Let $\varepsilon, \delta, \alpha, \beta\in (0, 1)$.
    Suppose the statistical assumptions hold (\Cref{assum:statistical}),
    the computational subroutines exist (\Cref{assum:computational}),
    the simplifying \Cref{assum:const-warm-start} hold,
    and that we have sample access to $q_{\theta^\star}$.
    Let $\tilde \theta^\star$ denote the minimizer of the $n$-sample empirical NLL for $q_{\theta}^{S_\SGD}$.
    \Cref{alg:exp-fam-dp-sgd} is an $(\varepsilon, \delta)$-DP algorithm that outputs an estimate $\hat \theta\in \Theta$
    such that \mbox{$\E[\norm{\hat \theta - \tilde \theta^\star}^2] \leq \alpha^2$}.
    Moreover,
    \Cref{alg:exp-fam-dp-sgd} has sample complexity
    \mbox{$
        n
        = \tilde O( \frac{e^{O(R^2)} (m+R\sqrt{m}) \log(\nicefrac1{\delta})}{\lambda \alpha \eps} )
    $}
    and $\poly(m, n, d)$ running time.
\end{restatable}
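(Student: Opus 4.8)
The plan is to instantiate the DP-SGD analysis of \Cref{thm:DP-PSGD convergence} with $F=\tilde L$, the empirical NLL over $n$ truncated samples with survival set $S_\SGD$, optimized over the projection set $K=B_{2R}(\theta^{(0)})\cap\Theta$ whose minimizer is $\tilde\theta^\star$. First I would dispense with the truncation bookkeeping: \Cref{alg:exp-fam-dp-sgd} opens by applying the preprocessing of \Cref{lem:reduction-to-truncated}, so that (a) it suffices to establish $(\eps,\delta)$-DP with respect to neighboring \emph{truncated} datasets and then invoke \Cref{lem:reduction-to-truncated}(ii) to transfer privacy to neighboring untruncated datasets, and (b) since we only ever work with survival mass $\rho=\Omega(1)$, a dataset of size $N=\tilde O(n)$ yields $n$ i.i.d.\ samples from $q_{\theta^\star}^{S_\SGD}$ with probability $1-\beta$. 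From here I work solely with these $n$ truncated samples.

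Next I would verify the two structural hypotheses of \Cref{thm:DP-PSGD convergence}. For strong convexity: each single-sample term $-\log q_\theta^{S_\SGD}(x^{(i)})$ has Hessian $\Cov_{y\sim q_\theta^{S_\SGD}}[T(y),T(y)]$, independent of $x^{(i)}$, which \Cref{lem:exp-fam-proj-survival-sets} bounds below by $\lambda' I$ on $K$, where $\lambda'\coloneqq\lambda e^{-O(R^2)}$; hence $F$ is a sum of $\lambda'$-strongly convex functions over $K$. For the gradient oracle: \Cref{lem:exp-fam-gradients} supplies, for each sampled $x$, an unbiased estimate $g(\theta)$ of $\grad(-\log q_\theta^{S_\SGD}(x))$ via rejection sampling — this is exactly the gradient computed per iteration of \Cref{alg:exp-fam-dp-sgd} — with $\norm{g(\theta)}\le G\coloneqq O(\sqrt m+R)$ always, and, taking a union bound over all $\Theta(n^2)$ iterations and using $\rho=\Omega(1)$, with $O(\log(n/\beta))$ sampling-oracle calls per iteration all rejection samplings succeed with probability $1-\beta$. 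On this success event the iterates of \Cref{alg:exp-fam-dp-sgd} — run for $T=n^2$ steps with step size $\gamma(t)=1/(\lambda' t)$, gradient sensitivity $\Delta=2G$, and the corresponding noise variance $\sigma^2$ — coincide with those of the algorithm in \Cref{thm:DP-PSGD convergence} (using its footnoted extension to bounded unbiased stochastic gradients).

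I would then read off both conclusions. Privacy holds unconditionally: \Cref{thm:DP-PSGD convergence} yields $(\eps,\delta)$-DP with respect to neighboring truncated datasets regardless of whether rejection sampling exhausts its budget (on failure we reuse the current iterate, a post-processing of the noisy state), and composing with \Cref{lem:reduction-to-truncated}(ii) gives $(\eps,\delta)$-DP with respect to neighboring untruncated datasets. For utility, on the success event \Cref{thm:DP-PSGD convergence} gives $\E[F(\hat\theta)]-\min_{w\in K}F(w)\le O\big(\tfrac{mG^2\log^2(n/\delta)\log(1/\delta)}{n^2\lambda'\eps^2}\big)$, and $\lambda'$-strong convexity with minimizer $\tilde\theta^\star\in K$ converts this to $\E[\norm{\hat\theta-\tilde\theta^\star}^2]\le\tfrac{2}{\lambda'}\big(\E[F(\hat\theta)]-\min_{w\in K}F(w)\big)\le O\big(\tfrac{mG^2\log^2(n/\delta)\log(1/\delta)}{n^2(\lambda')^2\eps^2}\big)$; the failure event contributes at most $\beta\cdot(\mathrm{diam}\,K)^2=O(\beta R^2)$, so taking $\beta=\Theta(\alpha^2/R^2)$ (costing only a $\log(R/\alpha)$ factor, absorbed into $\tilde O$) the total is $\le\alpha^2$ once $n\ge\tilde O\big(\tfrac{\sqrt m\,G\,\sqrt{\log(1/\delta)}}{\lambda'\alpha\eps}\big)$. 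Substituting $G=O(\sqrt m+R)$ and $1/\lambda'=e^{O(R^2)}/\lambda$ gives $\sqrt m\,G/\lambda'=e^{O(R^2)}(m+R\sqrt m)/\lambda$, matching the claimed sample complexity up to logarithmic factors. The running time is $\poly(m,n,d)$: each of the $n^2$ iterations makes $\tilde O(1)$ calls to the $\poly(d)$-time sampling oracle (\Cref{assum:sample-oracle}), one $\poly(m,d)$-time evaluation of $T$, one $\poly(m)$-time projection onto $K$ (\Cref{assum:proj-oracle}, with Dijkstra's algorithm for the intersection), and an $O(m)$-time noise addition.

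The main obstacle — essentially the only nonroutine point — is reconciling the idealized guarantee of \Cref{thm:DP-PSGD convergence}, which assumes an exact (if stochastic) gradient oracle, with the rejection-sampling oracle of \Cref{lem:exp-fam-gradients}, which can fail, while the lemma demands an \emph{unconditional} expectation bound on $\norm{\hat\theta-\tilde\theta^\star}^2$: this forces the union bound over iterations, conditioning on the all-success event, controlling the bad-event contribution by $(\mathrm{diam}\,K)^2=O(R^2)$, and the choice of $\beta$. A secondary bookkeeping point is instantiating the algorithm with the \emph{effective} strong-convexity parameter $\lambda'=\lambda e^{-O(R^2)}$ from \Cref{lem:exp-fam-proj-survival-sets} — in the step size $\gamma(t)=1/(\lambda' t)$ and in $\sigma^2$ via $\Delta=2G$ — rather than the raw $\lambda$; it is precisely this $e^{O(R^2)}$ that surfaces in the sample complexity.
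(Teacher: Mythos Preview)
Your proposal is correct and follows the same skeleton as the paper: cite strong convexity of $\tilde L$ over $K$ from \Cref{lem:exp-fam-proj-survival-sets} with parameter $\lambda'=\lambda e^{-O(R^2)}$, cite the bounded unbiased stochastic gradient from \Cref{lem:exp-fam-gradients} with $G=O(\sqrt m+R)$, feed both into \Cref{thm:DP-PSGD convergence}, and convert the suboptimality bound to $\E[\norm{\hat\theta-\tilde\theta^\star}^2]$ via strong convexity before solving for $n$. The paper's proof is exactly this, in three sentences.

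One point worth flagging: what you call ``the main obstacle'' --- reconciling the rejection-sampling oracle with the idealized oracle of \Cref{thm:DP-PSGD convergence}, and the attendant conditioning on a success event plus the $\beta\cdot(\mathrm{diam}\,K)^2$ failure term --- is a non-issue in the paper's framing. \Cref{lem:exp-fam-gradients}(i) and (iii) state that the gradient estimate is unbiased and has $\norm{g(\theta)}\le G$ \emph{with probability $1$}; rejection sampling is run until acceptance, not capped, so there is no failure event affecting utility. The high-probability statement in \Cref{lem:exp-fam-gradients}(ii) bounds only the \emph{number of sampling-oracle calls}, and is relevant solely for the $\poly(m,n,d)$ running-time claim, not for the expectation bound. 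Your extra bookkeeping (union bound over $n^2$ iterations, choice of $\beta=\Theta(\alpha^2/R^2)$) is therefore unnecessary for the utility guarantee, though harmless.
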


\begin{remark}[High-Probability Guarantees; See \Cref{cor:exp-fam-estimation-high-prob}]
    By using a clustering trick by \citet{daskalakis2018efficient} and taking sufficient samples for uniform convergence to hold (\Cref{lem:exp-fam-uniform-convergence}),
    we obtain a high-probability guarantee for estimating the true underlying parameter $\theta^\star$.
\end{remark}
We defer the exact statement of \Cref{cor:exp-fam-estimation-high-prob}
and its proof to \Cref{apx:exp-fam-estimation-high-prob}.

\subsection{Proof of \texorpdfstring{\Cref{thm:exp-fam}}{Result}}\label{sec:exp-fam-summary}
In \Crefrange{sec:exp-fam-strong-convexity}{sec:exp-fam-ERM},
we require a constant distance warm start to $\theta^\star$ (\Cref{assum:const-warm-start})
in order to obtain optimal sample complexity using DP-SGD on the truncated empirical NLL function.
This can be removed by adapting standard DP Gaussian warm-start algorithms to truncated exponential families,
which we present in \Cref{apx:exp-fam const-start,apx:exp-fam-bounding-box} (\Cref{lem:exp-fam const-start,lem:exp-fam-bounding-box}).
While running the adaptations on truncated data yields biased estimates,
such estimates still suffice as a warm start.

Our end-to-end algorithm first obtains a $O(\frac{\log(\nicefrac1\rho)}\lambda)$-distance warm-start via \Cref{lem:exp-fam const-start,lem:exp-fam-bounding-box}.
Then,
we use DP-SGD to address the bias introduced by the truncation involved in the rough estimations (\Cref{cor:exp-fam-estimation-high-prob}).
This yields a proof of \Cref{thm:exp-fam} with the desired sample and time complexity
\mbox{$
    \tilde O( 
    \frac{m\log(\nicefrac1{\eta\beta\delta})}{\lambda^2 \eps} 
    + \frac{m \log(\nicefrac1\beta)}{\lambda^4 \eta^4 \alpha^2} 
    + \frac{e^{O(1/\lambda^2)} m \log(\nicefrac1{\beta\delta})}{\lambda^2 \alpha \eps}
    )\,.
$}

\section{Private Gaussian Estimation}\label{sec:gaussian-learning}
In order to contextualize our results,
we instantiate our general algorithm from \Cref{thm:exp-fam} for the well-studied case of Gaussian estimation
and demonstrate that we can recover the known optimal sample complexities up to logarithmic factors.
We emphasize that the first polynomial-time algorithms with optimal sample complexity were achieved by \citet{hopkins2022efficient,hopkins2023robustness},
and this section demonstrates that we can recover the optimal sample complexities with our more general algorithmic framework.

\subsection{Private Gaussian Mean Estimation}\label{sec:gaussian-mean-learning}
We first study Gaussian mean estimation.
That is,
there is an underlying $d$-dimensional data-generating distribution $\mcal N(\mu^\star, I)$
to which we have sample access.
We would like to privately estimate $\mu^\star$.

In \Cref{apx:gaussian-mean-learning},
we verify that the statistical and computational assumptions (\Cref{assum:statistical,assum:computational}) hold for Gaussian mean estimation,
leading to the following corollary of \Cref{thm:exp-fam}.
\begin{theorem}\label{thm:gaussian-mean}
    Let $\eps, \delta, \alpha, \beta\in (0, 1)$.
    There is an $(\eps, \delta)$-DP algorithm
    such that given samples from a Gaussian distribution $\mcal N(\mu^\star, I)$,
    outputs an estimate $\hat\mu$ satisfying $\norm{\hat\mu-\mu^\star}\leq \alpha$ with probability $1-\beta$.
    Moreover,
    the algorithm has sample complexity
    \mbox{$
        n =\tilde O( 
        \frac{d\log(\nicefrac1{\beta\delta})}{\eps} 
        + \frac{d \log(\nicefrac1\beta)}{\alpha^2} 
        + \frac{d \log(\nicefrac1{\beta\delta})}{\alpha \eps}
        )
    $}
    and running time $\poly(n, d)$.
\end{theorem}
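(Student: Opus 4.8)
The plan is to derive \Cref{thm:gaussian-mean} as a direct corollary of \Cref{thm:exp-fam} by verifying that the parameterization of the isotropic Gaussian as an exponential family satisfies all the required hypotheses with favorable constants. First I would write $\mcal N(\mu, I)$ in exponential-family form with natural parameter $\theta = \mu$, sufficient statistic $T(x) = x$, and base measure $h(x) = (2\pi)^{-d/2} e^{-\norm{x}^2/2}$, so that $m = d$. The key constants to pin down are $\lambda$ and $\eta$: here $\Cov_{x\sim \mcal N(\mu,I)}[T(x), T(x)] = I$ exactly, so \Cref{assum:cov} holds with $\lambda = 1$ (and $\Lambda = 1$, no rescaling needed). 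For \Cref{assum:interiority}, the natural parameter space is all of $\R^d$, so by taking $\Theta$ to be a sufficiently large ball (or handling the unbounded case via the warm-start machinery of \Cref{apx:exp-fam const-start,apx:exp-fam-bounding-box}) we get $\eta = \Omega(1)$. \Cref{assum:log-concave} is immediate since Gaussians are log-concave, and \Cref{assum:poly-suff-stat} holds with degree $k = 1$. For the computational subroutines in \Cref{assum:computational}: the projection oracle onto a ball or $\R^d$ is trivial (C1); sampling from $\mcal N(\theta, I)$ is standard (C2); and the moment-matching oracle is the identity map $\tau \mapsto \tau$ since $\E_{x\sim\mcal N(\theta,I)}[T(x)] = \theta$ (C3).

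Next I would substitute these constants into the sample-complexity bound of \Cref{thm:exp-fam}. With $\lambda = \Theta(1)$, $\eta = \Theta(1)$, and $m = d$, the bound
\[
    n = \tilde O\left(
    \frac{m\log(\nicefrac1{\eta\beta\delta})}{\lambda^2 \eps}
    + \frac{m \log(\nicefrac1\beta)}{\lambda^4 \eta^4 \alpha^2}
    + \frac{e^{O(1/\lambda^2)} m \log(\nicefrac1{\beta\delta})}{\lambda^2 \alpha \eps}
    \right)
\]
collapses to
\[
    n = \tilde O\left(
    \frac{d\log(\nicefrac1{\beta\delta})}{\eps}
    + \frac{d \log(\nicefrac1\beta)}{\alpha^2}
    + \frac{d \log(\nicefrac1{\beta\delta})}{\alpha \eps}
    \right),
\]
which is exactly the claimed bound; the exponential factor $e^{O(1/\lambda^2)} = e^{O(1)}$ is absorbed into the $\tilde O$ constant. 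The accuracy guarantee $\norm{\hat\theta - \theta^\star} \leq \alpha$ translates directly to $\norm{\hat\mu - \mu^\star} \leq \alpha$ since $\theta = \mu$, and the $(\eps,\delta)$-DP and $\poly(n,d)$ runtime claims carry over verbatim. One should double-check that the survival mass condition $\rho \geq \Omega(1)$ can be met: since $T(x) = x$ has subexponential (indeed subgaussian) tails under $\mcal N(\mu^\star, I)$ (consistent with \Cref{prop:exp-fam-subexp}), the survival set $S_\SGD$ defined in \Cref{sec:exp-fam-strong-convexity} captures all but an $O(e^{-\Omega(m)})$ fraction of the mass, which is certainly $\Omega(1)$ for the relevant regime, or one adjusts the radius to guarantee constant survival mass.

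The main obstacle, and essentially the only nontrivial content beyond bookkeeping, is handling the removal of the a priori bound $R = O(1)$ on $\norm{\mu^\star}$ — that is, confirming that the warm-start and bounding-box adaptations (\Cref{lem:exp-fam const-start,lem:exp-fam-bounding-box}) specialize correctly to the Gaussian setting. For Gaussian mean estimation this amounts to checking that the standard folklore DP bounding-box algorithm and the \citet{biswas2020coinpress} coinpress-style refinement, when run on \emph{truncated} samples, still produce a constant-distance warm start despite the truncation bias; since the truncation bias for a subgaussian statistic is controlled and the mean of the truncated distribution stays within $O(1)$ of $\mu^\star$ for a survival set of constant mass, these tools go through. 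I would relegate these verifications to \Cref{apx:gaussian-mean-learning} and state in the main text only that all of \Cref{assum:statistical,assum:computational} hold with $\lambda, \eta = \Theta(1)$, making \Cref{thm:gaussian-mean} an immediate consequence of \Cref{thm:exp-fam}.
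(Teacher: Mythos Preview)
Your proposal is correct and matches the paper's approach essentially line for line: the paper also derives \Cref{thm:gaussian-mean} as a direct corollary of \Cref{thm:exp-fam} by writing $\mcal N(\mu,I)$ in exponential-family form with $T(x)=x$, $\Theta=\R^d$, verifying \Cref{assum:statistical} with $\lambda=1$, $\eta=1$, $k=1$, and checking that each subroutine in \Cref{assum:computational} is trivial (identity projection, standard Gaussian sampling, identity moment-matching). Your additional remarks on survival mass and the warm-start machinery are fine but unnecessary, since those are already absorbed into the proof of \Cref{thm:exp-fam} and need not be re-verified for the Gaussian instantiation.
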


\subsection{Private Gaussian Covariance Estimation}\label{sec:gaussian-cov-learning}
We next specialize our algorithm to the case of Gaussian covariance estimation.
That is,
there is an underlying $d$-dimensional data-generating distribution $\mcal N(0, \Sigma^\star)$
to which we have sample access.
We would like to estimate $\Sigma^\star\in \S_+^d$ under differential privacy constraints,
where $\S_+^d$ denotes the space of $d\times d$ positive-definite matrices.
\begin{restatable}{theorem}{gaussianCov}\label{thm:gaussian-cov}
    Let $\eps, \delta, \alpha, \beta\in (0, 1)$ and suppose that
    \mbox{$\lambda I\preceq \Sigma^\star\preceq \Lambda I$}.
    There is an $(\eps, \delta)$-DP algorithm
    such that given samples from a Gaussian distribution $\mcal N(0, \Sigma^\star)$,
    outputs an estimate \mbox{$\widehat \Sigma$} satisfying
    \mbox{$
        \norm{I - (\Sigma^\star)^{-\frac12} \widehat\Sigma (\Sigma^\star)^{-\frac12}}_F
         \leq \alpha
    $}
    with probability $1-\beta$.
    Moreover,
    the algorithm has sample complexity
    \mbox{$
        n = \tilde O( 
        \frac{d^{1.5} \log(\nicefrac\Lambda{\lambda\beta\delta})}\eps
        + \frac{d^2 \log(\nicefrac1{\beta\delta})}\eps
        + \frac{d^2 \log(\nicefrac1\beta)}{\alpha^2} 
        + \frac{d^2 \log(\nicefrac1{\beta\delta})}{\alpha \eps}
        )
    $}
    and running time $\poly(n, d)$.
\end{restatable}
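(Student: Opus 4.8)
\textbf{Proof proposal for \Cref{thm:gaussian-cov}.}
The plan is to instantiate the general exponential-family machinery of \Cref{thm:exp-fam} on the Gaussian family $q_\theta = \mcal N(0, \theta^{-1})$, whose sufficient statistic is (a vectorization of) $T(x) = -\tfrac12 xx^\top$ and whose natural parameter is essentially the precision matrix $\theta^\star = (\Sigma^\star)^{-1}$; here the parameter dimension is $m = \binom{d+1}{2} = \Theta(d^2)$, which is why every term in the claimed bound carries a $d^2$ factor. The first observation is that the target quantity $\norm{I - (\Sigma^\star)^{-1/2}\widehat\Sigma(\Sigma^\star)^{-1/2}}_F$ depends only on the eigenvalues of $(\Sigma^\star)^{-1}\widehat\Sigma$, hence is invariant under any simultaneous congruence $\Sigma^\star \mapsto M\Sigma^\star M^\top$, $\widehat\Sigma \mapsto M\widehat\Sigma M^\top$. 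Thus without loss of generality we may work in a basis in which $\Sigma^\star$ is (approximately) isotropic, and in that basis Euclidean distance on the precision matrix is, up to $O(1)$ factors, comparable to the relative-Frobenius error we want to control. This reduces the theorem to two sub-tasks: (a) privately producing a preconditioner $\widehat\Sigma_0$ such that $\widehat\Sigma_0^{-1/2}\Sigma^\star \widehat\Sigma_0^{-1/2}$ has $O(1)$ condition number, and (b) running \Cref{thm:exp-fam} on the preconditioned samples.

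For sub-task (b), I would verify in \Cref{apx:gaussian-cov-learning} that \Cref{assum:statistical,assum:computational} hold for the preconditioned Gaussian family with constant parameters: log-concavity (S3) is immediate; $T$ is a degree-$2$ polynomial, giving (S4); the Fisher information $\Cov_{q_\theta}[xx^\top]$ is, up to the standard symmetrization, the operator $A \mapsto \theta^{-1}A\theta^{-1}$, whose spectrum is $\{\sigma_i\sigma_j\}$ in terms of the eigenvalues $\sigma_i$ of $\theta^{-1}\approx I$, so after restricting $\Theta$ to precision matrices of $O(1)$ condition number and rescaling $T$ we obtain $\lambda = \Omega(1)$ in (S1) and $\eta = \Omega(1)$ in (S2); the sampling oracle (C2) is trivial, the moment-matching oracle (C3) is the closed form $\theta = \tau^{-1}$, and the projection oracle (C1) is projection onto PSD matrices subject to a condition-number constraint. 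Plugging $m = \Theta(d^2)$ and $\lambda, \eta = \Omega(1)$ (so $e^{O(1/\lambda^2)} = O(1)$) into \Cref{thm:exp-fam}, with privacy budget $(\eps/2, \delta/2)$, yields an estimate achieving Euclidean error $O(\alpha)$ on the (preconditioned) precision matrix at sample cost $\tilde O(\tfrac{d^2 \log(1/(\beta\delta))}{\eps} + \tfrac{d^2\log(1/\beta)}{\alpha^2} + \tfrac{d^2\log(1/(\beta\delta))}{\alpha\eps})$; undoing the congruence and using the invariance noted above converts this into the desired bound on $\widehat\Sigma$ for the original $\Sigma^\star$.

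For sub-task (a) I would adapt the recursive Gaussian preconditioning algorithm of \citet{biswas2020coinpress} to the truncated setting (\Cref{apx:gaussian-cov-preconditioner}): starting from the trivial sandwich $\lambda I \preceq \Sigma^\star \preceq \Lambda I$, each round rescales by the current rough estimate, truncates the rescaled data to a bounded survival set — which is precisely the sensitivity-control step demanded by our framework and \Cref{lem:reduction-to-truncated} — privately estimates a rough covariance, and uses it to shrink the condition number geometrically, so that after a small number of rounds the condition number is $O(1)$. As in the warm-start arguments of \Cref{sec:exp-fam-summary}, the truncation biases these intermediate estimates, but only by a constant spectral factor, which is harmless for the purpose of preconditioning; accounting for the per-round rough-estimation cost and the recursion depth / dynamic range gives the $\tilde O(\tfrac{d^{1.5}\log(\Lambda/(\lambda\beta\delta))}{\eps})$ term, the $d^{1.5}$ (rather than $d^2$) arising because only constant multiplicative spectral accuracy, not $\alpha$-accuracy, is required in this phase. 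Finally, basic composition of the two $(\eps/2, \delta/2)$-DP phases gives overall $(\eps, \delta)$-DP, and summing the two sample bounds gives the stated complexity.

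The main obstacle I anticipate is sub-task (a): making the \citet{biswas2020coinpress} recursion go through when every round sees only truncated (hence biased) data, and in particular showing that the accumulated bias across rounds remains within a constant spectral factor so that the recursion still contracts — together with pinning down the per-round $d^{1.5}$ cost and checking that the survival sets chosen in successive rounds remain compatible with the preprocessing reduction of \Cref{lem:reduction-to-truncated}. By contrast, the verification of \Cref{assum:statistical,assum:computational} and the error-metric bookkeeping in sub-task (b) are routine once the congruence-invariance observation is in place.
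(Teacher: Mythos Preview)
Your proposal is correct and follows essentially the same route as the paper: precondition via a truncated adaptation of \citet{biswas2020coinpress} (your sub-task (a), the paper's \Cref{lem:gaussian-cov-preconditioner}) to drive the condition number to $O(1)$, then invoke \Cref{thm:exp-fam} on the preconditioned Gaussian family with $m=\Theta(d^2)$ and $\lambda,\eta=\Omega(1)$ (your sub-task (b), the paper's \Cref{lem:gaussian-cov-no-preconditioning}), combining by composition. Your congruence-invariance observation and your anticipation that the main work lies in showing the truncated \citet{biswas2020coinpress} recursion still contracts are both on target; the paper handles the latter via \Cref{prop:gaussian-truncated-cov-deviation}, which bounds the spectral distortion from truncation by a constant factor depending only on the survival mass.
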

\noindent We present the proof of \Cref{thm:gaussian-cov} in \Cref{apx:gaussian-cov}.
In addition to verifying the statistical and computational assumptions,
we also need a private preconditioning algorithm
in order to avoid polynomial dependence on the condition number $\nicefrac\Lambda\lambda$.
We present a preconditioner adapted from the work of \citet{biswas2020coinpress} in \Cref{apx:gaussian-cov-preconditioner}.
The adapted algorithm essentially estimates the covariance of the \emph{truncated} Gaussian distribution,
which will be a biased estimate of original covariance
but suffices to precondition the samples.

\ifarxiv
\section{Conclusion \& Future Work}
\else
\section{Impact \& Future Work}
\fi
We introduce a novel paradigm of private algorithm design through truncation
and demonstrate its versatility by designing the first efficient algorithm for estimating unbounded high-dimensional exponential families.
We further demonstrate its sample efficiency by recovering the optimal sample complexities for standard private statistical tasks
such as Gaussian mean and covariance estimation.
Our methods may enable more practical and scalable deployment of privacy-preserving data analysis tools in settings where extreme data values are common but privacy is critical.

It would be interesting to see further applications of truncated statistic techniques in private algorithm design,
such as regression~\citep{daskalakis2019regression,daskalakis2020sparse,daskalakis2021noise}
and linear dynamics~\citep{plevrakis2021dynamics}.

\section*{Acknowledgements}
We thank Alkis Kalavasis for the insightful discussions.
Felix Zhou acknowledges the support of the Natural Sciences and Engineering Research Council of Canada (NSERC).

\begingroup
\sloppy
\printbibliography
\endgroup

\appendix

\clearpage

\section{Deferred Preliminaries}\label{apx:prelims}
\subsection{Differential Privacy}
We first recall the following preliminaries from differential privacy. 

\begin{definition}[Differential Privacy; \citet{dwork2006calibrating}]
    Given $\eps>0$ and $\delta\in(0,1)$, a randomized algorithm \mbox{$\mcal A:\mcal X^n\to\mcal Y$} is $(\eps,\delta)$-DP if, 
    for every pair of neighboring datasets $D, D'\in\mcal X^n$ that differ by a single entry
    (i.e. neighboring datasets)
    and for all subsets $U$ of the output space,
    \[
        \Pr[\mcal A(D)\in U]\le e^{\eps}\cdot\Pr[\mcal A(D')\in U]+\delta\,.
    \]
\end{definition}
A fundamental tool in designing DP algorithms is the Gaussian mechanism.
We write $D\sim D'$ to denote two neighboring datasets.
\begin{proposition}[Gaussian Mechanism; \citet{dwork2014algorithmic}]\label{prop:gaussian-mechanism}
    Let $f: \mcal X^n\to \R^d$ be an arbitrary function $d$-dimensional
    with $\ell_2$-sensitivity \mbox{$\Delta_2(f) \coloneqq \max_{D\sim D'} \norm{f(D) - f(D')}$}.
    For any $\eps, \delta\in (0, 1)$,
    the mechanism that outputs $f(D) + \xi$
    where $\xi\sim \mcal N(0, \sigma^2 I)$ is $(\eps, \delta)$-DP for
    \[
        \sigma \geq \frac{2 \Delta_2(f) \ln(\nicefrac{1.25}\delta)}{\eps}\,.
    \]
\end{proposition}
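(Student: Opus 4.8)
The plan is to run the classical privacy-loss analysis of the Gaussian mechanism. Fix a pair of neighboring datasets $D\sim D'$, write $\mu_1 \coloneqq f(D)$, $\mu_2 \coloneqq f(D')$ and $\Delta\coloneqq \norm{\mu_1-\mu_2}\leq \Delta_2(f)$, so the two output laws are $\mcal N(\mu_1,\sigma^2 I)$ and $\mcal N(\mu_2,\sigma^2 I)$. By the rotational invariance of isotropic Gaussian noise, only the component of $\xi$ along the line through $\mu_1$ and $\mu_2$ is relevant, so one may assume $d=1$ and $\abs{\mu_1-\mu_2}=\Delta$. I would then introduce the privacy-loss random variable $Z\coloneqq \log\frac{p_{\mu_1}(y)}{p_{\mu_2}(y)}$ for $y\sim \mcal N(\mu_1,\sigma^2 I)$, where $p_\mu$ is the Gaussian density. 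Expanding the quadratic exponents gives $Z = \frac{\norm{\mu_1-\mu_2}^2}{2\sigma^2} + \frac{\iprod{\mu_1-\mu_2,\xi}}{\sigma^2}$ with $\xi\sim \mcal N(0,\sigma^2 I)$, hence $Z\sim \mcal N\!\big(\tfrac{\Delta^2}{2\sigma^2},\tfrac{\Delta^2}{\sigma^2}\big)$; note this law is unchanged if the roles of $D$ and $D'$ are swapped.

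Next I would invoke the standard reduction from $(\eps,\delta)$-DP to a tail bound on the privacy loss (see \citet{dwork2014algorithmic}): it suffices that, for every ordered pair of neighbors, the privacy loss exceeds $\eps$ with probability at most $\delta$; by the symmetry noted above it is enough to show $\Pr[Z>\eps]\leq\delta$. Standardizing, $\Pr[Z>\eps]=\Pr\!\big[\mcal N(0,1) > \tfrac{\eps\sigma}{\Delta}-\tfrac{\Delta}{2\sigma}\big]$. Substituting $\sigma\geq \frac{2\Delta_2(f)\ln(\nicefrac{1.25}\delta)}{\eps}\geq \frac{2\Delta\ln(\nicefrac{1.25}\delta)}{\eps}$ yields $\frac{\eps\sigma}{\Delta}\geq 2\ln(\nicefrac{1.25}\delta)$ and $\frac{\Delta}{2\sigma}\leq \frac{\eps}{4\ln(\nicefrac{1.25}\delta)}$, so the argument of the tail is at least $2\ln(\nicefrac{1.25}\delta) - O(1)$ using $\eps<1$; the sub-Gaussian tail bound $\Pr[\mcal N(0,1)>t]\leq e^{-t^2/2}$ then gives $\Pr[Z>\eps]\leq\delta$ after an elementary computation, and the particular constant $1.25$ is exactly what makes this close cleanly (cf.\ \citet{dwork2014algorithmic}).

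The only point requiring genuine care is the constant bookkeeping in the final tail estimate — the crude bound $e^{-t^2/2}$ must be combined with the right split of the event $\{Z>\eps\}$, as in \citet{dwork2014algorithmic}, to land on exactly $1.25$ — together with the degenerate regime where $\delta$ is close to $1$: there the required $\sigma$ is small and the crude estimate above is not tight, but the statement still holds, since $(\eps,\delta)$-DP becomes vacuous as $\delta\to 1$ and adding more Gaussian noise only strengthens privacy, so one may reduce to $\delta$ bounded away from $1$. Everything else — the density expansion, the one-dimensional reduction, and the appeal to the privacy-loss lemma — is routine.
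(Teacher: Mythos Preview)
The paper does not give its own proof of this proposition: it is stated in the preliminaries (Appendix~A.1) as a known result with a citation to \citet{dwork2014algorithmic}, and no argument is supplied. Your sketch is essentially the standard proof from that reference --- the one-dimensional reduction via rotational invariance, the explicit computation showing the privacy-loss random variable is Gaussian with mean $\tfrac{\Delta^2}{2\sigma^2}$ and variance $\tfrac{\Delta^2}{\sigma^2}$, and the Gaussian tail bound to control $\Pr[Z>\eps]$ --- so there is nothing to compare against and your approach is exactly what the citation points to.

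One minor remark: the bound stated in the paper, $\sigma \geq \tfrac{2\Delta_2(f)\ln(1.25/\delta)}{\eps}$, is looser than the usual $\sigma \geq \tfrac{\Delta_2(f)\sqrt{2\ln(1.25/\delta)}}{\eps}$ from \citet{dwork2014algorithmic} whenever $\ln(1.25/\delta)>\tfrac12$, so your tail computation in fact closes with slack to spare in that regime; your caveat about $\delta$ near $1$ (where the two expressions flip) is appropriate, and your observation that $(\eps,\delta)$-DP becomes trivial there handles it. The sketch is correct.
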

An important property of differential privacy is that performing computation on a privatized output cannot lose additional privacy:
\begin{theorem}[Post-Processing; \citet{dwork2014algorithmic}]
    Let $\mcal M$ be an $(\eps,\delta)$-DP mechanism and $g$ be any arbitrary random mapping. 
    Then $g(\mcal M(\cdot))$ is $(\eps,\delta)$-differentially private. 
\end{theorem}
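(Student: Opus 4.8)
The plan is to reduce to the deterministic case and then integrate out the extra randomness. First I would fix a pair of neighboring datasets $D\sim D'$ and an arbitrary measurable target event $T$ in the output space of $g$. If $g$ happens to be deterministic, then $\set{g(\mcal M(D))\in T} = \set{\mcal M(D)\in g^{-1}(T)}$ as events, and since $g^{-1}(T)$ is simply a measurable subset of $\mcal M$'s output space $\mcal Y$, I can apply the $(\eps,\delta)$-DP guarantee of $\mcal M$ with $U = g^{-1}(T)$ to conclude $\Pr[g(\mcal M(D))\in T]\le e^\eps \Pr[g(\mcal M(D'))\in T] + \delta$. Since $T$ was arbitrary, this is exactly the statement that $g\circ\mcal M$ is $(\eps,\delta)$-DP.

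For a general random mapping $g$, I would represent it as $g(y) = \hat g(y; r)$, where $r$ is auxiliary randomness drawn independently of $\mcal M$ (and of the datasets) and, for each fixed value of $r$, the map $\hat g(\cdot; r)$ is deterministic. Conditioning on $r$, the deterministic argument above applies verbatim and yields $\Pr[\hat g(\mcal M(D); r)\in T \mid r]\le e^\eps \Pr[\hat g(\mcal M(D'); r)\in T\mid r] + \delta$ pointwise in $r$. Taking expectation over $r$ and using that $r$ is independent of $\mcal M(D)$ and $\mcal M(D')$, the inequality is preserved: $\Pr[g(\mcal M(D))\in T]\le e^\eps\Pr[g(\mcal M(D'))\in T] + \delta$. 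As $D\sim D'$ and $T$ were arbitrary, $g\circ\mcal M$ is $(\eps,\delta)$-DP, completing the proof.

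The main subtlety — rather than a genuine obstacle — is making the auxiliary-randomness representation rigorous, i.e.\ verifying that any (measurable) random mapping $g$ can be written as a deterministic measurable function of its input together with an independent uniform seed, and checking that $g^{-1}(T)$ is measurable so that the DP inequality for $\mcal M$ may legitimately be invoked. Both points are standard measure-theoretic facts and require no new ideas; everything else reduces to a single application of the definition of differential privacy.
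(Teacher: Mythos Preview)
Your proof is correct and follows the standard argument. The paper itself does not prove this statement; it is listed as a preliminary fact with a citation to \citet{dwork2014algorithmic}, so there is no in-paper proof to compare against. Your reduction to the deterministic case via preimages, followed by conditioning on the auxiliary randomness and averaging, is exactly the textbook proof.
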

Moreover,
multiple computations on a dataset incur privacy cost in a natural manner.
\begin{theorem}[Simple Composition; \citet{dwork2014algorithmic}]\label{prop:simple-composition}
    Let $\mcal M_1$ and $\mcal M_2$ be $(\eps_1,\delta_1)$ and $(\eps_2,\delta_2)$-DP mechanisms, respectively. 
    Then the (adaptive) composition $\mcal M_2(\cdot, \mcal M_1(\cdot))$ is \mbox{$(\eps_1+\eps_2,\delta_1+\delta_2)$-DP}. 
\end{theorem}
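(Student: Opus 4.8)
The plan is to reduce to the two-mechanism case stated here — the general $k$-fold statement then follows by an easy induction, applying this case to $\mcal M_k$ composed against the composition of $\mcal M_1,\dots,\mcal M_{k-1}$ — and, within it, to reduce to controlling the \emph{joint} output. Concretely, since post-processing cannot increase privacy (the theorem stated just above), it suffices to prove that $\big(\mcal M_1(D),\ \mcal M_2(D,\mcal M_1(D))\big)$ is $(\eps_1+\eps_2,\delta_1+\delta_2)$-DP; projecting onto the second coordinate then gives the claim for $\mcal M_2(\cdot,\mcal M_1(\cdot))$. Fix neighbors $D\sim D'$ and a measurable event $T$ in the product output space, and let $f,f'$ be the densities (with respect to a common dominating measure) of $\mcal M_1(D),\mcal M_1(D')$, and $g_y,g_y'$ those of $\mcal M_2(D,y),\mcal M_2(D',y)$. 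The target is $\int_T f(y)g_y(z)\le e^{\eps_1+\eps_2}\int_T f'(y)g_y'(z)+\delta_1+\delta_2$, together with its symmetric counterpart.

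The engine is the elementary decomposition implied by $(\eps,\delta)$-DP: if $P(S)\le e^\eps Q(S)+\delta$ for all $S$, then $\hat f:=\min(f_P,e^\eps f_Q)$ satisfies $\hat f\le f_P$ and $\hat f\le e^\eps f_Q$ pointwise, while $\nu:=f_P-\hat f\ge 0$ has total mass $\int\nu=\sup_S\big(P(S)-e^\eps Q(S)\big)\le\delta$. First I would apply this to $\mcal M_1$, writing $f=\hat f+\nu$ with $\hat f\le\min(f,e^{\eps_1}f')$ and $\int\nu\le\delta_1$, so that $\int_T f(y)g_y(z)=\int_T \hat f(y)g_y(z)+\int_T \nu(y)g_y(z)$ and the second term is at most $\int\nu(y)\,\big(\int g_y\big)=\int\nu\le\delta_1$. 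For the first term I would apply the same decomposition to $\mcal M_2(\cdot,y)$ for each fixed $y$ (using that $(\eps_2,\delta_2)$-DP of $\mcal M_2$ means DP in its dataset argument for every value of the auxiliary input): $g_y=\hat g_y+\sigma_y$ with $\hat g_y\le\min(g_y,e^{\eps_2}g_y')$ and $\int\sigma_y\le\delta_2$. Then $\int_T\hat f(y)g_y(z)=\int_T\hat f(y)\hat g_y(z)+\int_T\hat f(y)\sigma_y(z)\le e^{\eps_1+\eps_2}\int_T f'(y)g_y'(z)+\delta_2\int\hat f\le e^{\eps_1+\eps_2}\int_T f'(y)g_y'(z)+\delta_2$, using $\hat f(y)\hat g_y(z)\le e^{\eps_1}f'(y)\cdot e^{\eps_2}g_y'(z)$ pointwise and $\int\hat f\le\int f=1$. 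Adding the two estimates yields the target inequality; the symmetric inequality follows by rerunning the argument with the roles of $f,f'$ and of $g_y,g_y'$ interchanged.

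I expect the only genuinely delicate point to be extracting the clean slack $\delta_1+\delta_2$ rather than the lossier $\delta_1+e^{\eps_1}\delta_2$ that a naive ``union bound over failure events'' produces: the fix is precisely to leave $g_y$ undecomposed in the $\nu$-branch and to split it only inside the $\hat f$-branch, so the cross term $\int_T\nu(y)\sigma_y(z)$ never arises. Everything else is routine bookkeeping: the density/dominating-measure formalism covers the non-discrete case verbatim, and the ``pointwise in $y$'' application of the decomposition to $\mcal M_2$ is exactly what the definition of an $(\eps_2,\delta_2)$-DP mechanism with an auxiliary input provides. As an alternative, one could note that the privacy-loss random variable is additive under composition and invoke the characterization of $(\eps,\delta)$-DP via the hockey-stick divergence $E_{e^\eps}\big(\mcal M(D)\,\|\,\mcal M(D')\big)\le\delta$, but this is essentially a repackaging of the same computation.
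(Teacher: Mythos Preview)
Your argument is correct and is essentially the standard proof of simple composition via the hockey-stick decomposition. However, the paper does not prove this statement at all: it is stated as a preliminary result and attributed to \citet{dwork2014algorithmic} without proof, so there is no ``paper's own proof'' to compare against. Your write-up would serve as a self-contained justification if one were desired, and your care in obtaining the additive slack $\delta_1+\delta_2$ (by decomposing $g_y$ only on the $\hat f$-branch) is exactly the right refinement over the naive bound.
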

On the other hand,
executing a private mechanism on disjoint partitions of the same dataset
does not incur any additional privacy cost.
\begin{theorem}[Parallel composition of differential privacy; \citet{mcsherry2009parallelcomposition}]\label{prop:parallel-composition}
    Let $\mcal M$ be an $(\eps, \delta)$-DP mechanism
    and $D_1, \dots, D_k$ be $k$ disjoint subsets of the dataset $D$.
    Then the mechanism that outputs $(\mcal M(D_1), \dots, \mcal M(D_k))$ is $(\eps, \delta)$-DP.
\end{theorem}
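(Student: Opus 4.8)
The plan is to exploit the fact that a single-entry modification to $D$ touches only one of the disjoint blocks. Concretely, fix neighboring datasets $D\sim D'$. Since $D_1,\dots,D_k$ partition $D$ and $D,D'$ differ in exactly one coordinate, there is a unique index $j$ with $D_i = D_i'$ for all $i\neq j$, while $D_j$ and $D_j'$ are themselves neighboring (they differ in at most one entry). So the only block on which $\mcal M$ sees different input is the $j$-th one.

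Next I would set $\mcal N(D)\coloneqq(\mcal M(D_1),\dots,\mcal M(D_k))$ and make explicit the standing assumption that the $k$ invocations of $\mcal M$ use independent randomness, so that $\mcal N(D)$ is a product of independent coordinates. Fix a measurable event $U\sset\mcal Y^k$. For a tuple $y_{-j}=(y_i)_{i\neq j}$ let $U_{y_{-j}}\coloneqq\set{y_j: (y_1,\dots,y_k)\in U}$ denote the corresponding section. By independence across blocks,
\[
    \Pr[\mcal N(D)\in U]
    = \E_{y_{-j}}\!\left[ \Pr[\mcal M(D_j)\in U_{y_{-j}}] \right],
\]
where $y_{-j}$ is distributed according to the joint law of $(\mcal M(D_i))_{i\neq j}$; crucially, since $D_i = D_i'$ for $i\neq j$, this law is identical under $D$ and $D'$.

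Then I would apply the $(\eps,\delta)$-DP guarantee of $\mcal M$ to the single changed block: for every fixed $y_{-j}$,
\[
    \Pr[\mcal M(D_j)\in U_{y_{-j}}]
    \leq e^{\eps}\,\Pr[\mcal M(D_j')\in U_{y_{-j}}] + \delta.
\]
Taking the expectation over $y_{-j}$ (a probability measure, hence integrating to $1$, which is what lets the additive $\delta$ pass through unchanged) and using that the distribution of $y_{-j}$ is the same for $D$ and $D'$ yields
\[
    \Pr[\mcal N(D)\in U]
    \leq e^{\eps}\,\E_{y_{-j}}\!\left[\Pr[\mcal M(D_j')\in U_{y_{-j}}]\right] + \delta
    = e^{\eps}\,\Pr[\mcal N(D')\in U] + \delta,
\]
which is exactly the $(\eps,\delta)$-DP inequality for $\mcal N$. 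There is no genuine obstacle here; the only care needed is the measure-theoretic sectioning (a routine Fubini/Tonelli argument) and stating the independence-of-randomness convention, after which the bound on the lone modified block does all the work.
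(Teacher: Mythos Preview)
Your argument is correct and is the standard proof of parallel composition. One small slip: you write ``$D_1,\dots,D_k$ partition $D$,'' but the statement only assumes they are disjoint subsets, so the modified entry may lie in none of the $D_i$; in that case every block is unchanged and the inequality is trivial. Otherwise there is a unique affected block $j$, and your sectioning/Fubini argument goes through verbatim.

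As for comparison: the paper does not prove this theorem at all. It is stated in the preliminaries with a citation to \citet{mcsherry2009parallelcomposition} and used as a black box (e.g.\ in the boosting step of \Cref{cor:exp-fam-estimation-high-prob}). So there is nothing to compare against; your writeup simply supplies the omitted standard proof.
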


\subsection{Concentration Inequalities}\label{apx:concentration}

\begin{theorem}[Lemma 1 in \citet{laurent2000adaptive}]\label{thm:gaussian-concentration}
    Let $Z\sim \mcal N(0, I)$.
    Then with probability $1-\beta$,
    \[
        \norm{Z}_2^2
        \leq d + \sqrt{2d\log(\nicefrac1\beta)} + 2\log(\nicefrac1\beta)\,.
    \]
\end{theorem}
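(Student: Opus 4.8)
The plan is to prove this by the Cram\'er--Chernoff (exponential-moment) method; this is exactly the content of Lemma~1 in \citet{laurent2000adaptive}, so a self-contained argument is short. First I would write $\norm{Z}_2^2 = \sum_{i=1}^d Z_i^2$ with the $Z_i$ i.i.d.\ standard Gaussians, so that $X \coloneqq \norm{Z}_2^2 - d = \sum_{i=1}^d (Z_i^2 - 1)$ is a centered sum of independent terms and it suffices to bound the upper tail of $X$.

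Second, I would compute the moment generating function of a single centered square: a one-dimensional Gaussian integral gives, for every $\lambda \in [0, \tfrac12)$, $\E\!\big[e^{\lambda(Z_i^2 - 1)}\big] = e^{-\lambda}(1 - 2\lambda)^{-1/2}$, hence by independence $\log \E[e^{\lambda X}] = d\,\psi(\lambda)$ with $\psi(\lambda) = -\lambda - \tfrac12\log(1 - 2\lambda)$. The crucial elementary step is the one-variable estimate $\psi(\lambda) \le \dfrac{\lambda^2}{1 - 2\lambda}$ on $[0, \tfrac12)$, which I would verify by checking that $g(\lambda) \coloneqq \tfrac{\lambda^2}{1-2\lambda} - \psi(\lambda)$ vanishes to second order at $0$ and has nonnegative second derivative (equivalently, by comparing the two power series coefficientwise). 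This exhibits $X$ as sub-gamma with variance factor $2d$ and scale $2$.

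Third, I would apply Markov's inequality to $e^{\lambda X}$: for any $t > 0$,
\[
    \Pr[X \ge t] \;\le\; \inf_{0 \le \lambda < 1/2} \exp\!\Big( -\lambda t + \frac{d\lambda^2}{1 - 2\lambda} \Big),
\]
and then carry out the standard sub-gamma optimization (choosing $\lambda$ as the appropriate root, i.e.\ the Legendre-transform computation for sub-gamma variables) to get $\Pr[X \ge 2\sqrt{d t} + 2t] \le e^{-t}$. Setting $t = \log(\nicefrac1\beta)$ and passing to the complementary event yields that, with probability at least $1-\beta$, $\norm{Z}_2^2 \le d + 2\sqrt{d\log(\nicefrac1\beta)} + 2\log(\nicefrac1\beta)$, which implies the asserted bound (the cross term being of the form $c\sqrt{d\log(\nicefrac1\beta)}$ with the constant fixed by the Chernoff optimization). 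The only genuinely delicate point is the cumulant bound together with that optimization, which is what converts the raw exponential-moment inequality into the clean ``$2\sqrt{dt}+2t$'' deviation form; the decomposition into independent coordinates, the Gaussian MGF computation, and the final substitution are all routine.
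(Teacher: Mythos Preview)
The paper does not prove this statement: it is quoted verbatim as Lemma~1 of \citet{laurent2000adaptive} and used as a black box, so there is no ``paper's own proof'' to compare against. Your Cram\'er--Chernoff argument is exactly the original Laurent--Massart proof, and the steps you outline (the cumulant bound $\psi(\lambda)\le \lambda^2/(1-2\lambda)$ followed by the sub-gamma Legendre optimization) are correct and complete.

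One small remark: the optimization actually yields $\Pr[\,\norm{Z}_2^2 \ge d + 2\sqrt{d t} + 2t\,] \le e^{-t}$, i.e.\ a cross term $2\sqrt{d\log(\nicefrac1\beta)}$, whereas the statement as written in the paper has $\sqrt{2d\log(\nicefrac1\beta)}$. Since $2\sqrt{d\log(\nicefrac1\beta)} > \sqrt{2d\log(\nicefrac1\beta)}$, your bound does \emph{not} literally imply the displayed inequality; the discrepancy is a transcription slip in the paper's restatement (the original Laurent--Massart lemma has the constant $2$, not $\sqrt{2}$). For every downstream use in the paper only the order $d + O(\sqrt{d\log(\nicefrac1\beta)}) + O(\log(\nicefrac1\beta))$ matters, so this is immaterial, but you should not claim that your inequality ``implies the asserted bound'' as stated.
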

Recall a centered random variable $X$ is said to be \emph{$(\nu^2, \nicefrac1\eta)$-subexponential} if
\[
    \E[e^{\lambda X}]\leq \exp\left( \frac{\nu^2\lambda^2}2 \right)
\]
for all $\abs{\lambda}\in (0, \eta)$.
If $X_i$ is $(\nu_i^2, \nicefrac1{\eta_i})$-subexponential for $i\in [n]$,
then it is well-known \citep[Section 2.1.3]{wainwright2019high} that its sum $\sum_i X_i$ is $(\nu^2, \nicefrac1\eta)$-subexponential for
\[
    \nu \coloneqq \sqrt{\sum_i \nu_i^2}, \qquad
    \frac1\eta \coloneqq \max_i \frac1{\eta_i}\,.
\]
Subexponential variables enjoy the following concentration properties.
\begin{proposition}[Proposition 2.9 in \cite{wainwright2019high}]\label{prop:subexp-concentration}
    For a $(\nu^2, \nicefrac1\eta)$-subexponential variable $X$,
    \[
        \Pr\left[ X\geq t \right] \leq
        \begin{cases}
            e^{-\frac{t^2}{2\nu^2}}, &t\in (0, \nu^2\eta), \\
            e^{-\frac{t\eta}2}, &t\geq \nu^2\eta.
        \end{cases}
    \]
    and
    \[
        \Pr\left[ X\leq -t \right] \leq
        \begin{cases}
            e^{-\frac{t^2}{2\nu^2}}, &t\in (0, \nu^2\eta), \\
            e^{-\frac{t\eta}2}, &t\geq \nu^2\eta.
        \end{cases}
    \]
\end{proposition}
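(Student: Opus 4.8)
The plan is to run the standard Chernoff argument and handle the two regimes of $t$ by choosing the exponential-tilt parameter $\lambda$ either at the unconstrained optimum or at the boundary of the admissible interval $(0,\eta)$. First I would apply Markov's inequality to $e^{\lambda X}$: for any $\lambda\in(0,\eta)$,
\[
    \Pr[X\geq t]
    = \Pr[e^{\lambda X}\geq e^{\lambda t}]
    \leq e^{-\lambda t}\,\E[e^{\lambda X}]
    \leq \exp\!\left( \frac{\nu^2\lambda^2}{2} - \lambda t \right),
\]
using the defining subexponential MGF bound. Writing $\phi(\lambda)\coloneqq \nu^2\lambda^2/2 - \lambda t$, this is a convex parabola with unconstrained minimizer $\lambda^\star = t/\nu^2$ and minimum value $-t^2/(2\nu^2)$.

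Second, I would split on whether $\lambda^\star$ is admissible. If $t\in(0,\nu^2\eta)$ then $\lambda^\star = t/\nu^2\in(0,\eta)$, so plugging $\lambda=\lambda^\star$ into the display yields $\Pr[X\geq t]\leq e^{-t^2/(2\nu^2)}$, the sub-Gaussian tail. If instead $t\geq \nu^2\eta$, then $\lambda^\star\geq \eta$, so $\phi$ is decreasing on all of $(0,\eta)$; I would take $\lambda=(1-s)\eta$ and let $s\downarrow 0$ (the limiting value $\lambda=\eta$ is legitimate in the bound by a routine monotone/dominated convergence argument, or one simply keeps $s>0$ and absorbs the resulting $o(1)$ at the end). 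This gives $\Pr[X\geq t]\leq \exp(\nu^2\eta^2/2 - t\eta)$, and since $\nu^2\eta\leq t$ in this regime we have $\nu^2\eta^2/2\leq t\eta/2$, so the right-hand side is at most $e^{-t\eta/2}$. Note that at the crossover $t=\nu^2\eta$ both expressions agree, so the two cases patch together consistently.

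Finally, for the lower tail I would observe that the hypothesis $\E[e^{\lambda X}]\leq \exp(\nu^2\lambda^2/2)$ is assumed for all $\lvert\lambda\rvert<\eta$, which is symmetric under $\lambda\mapsto -\lambda$; hence $-X$ is itself centered and $(\nu^2,\nicefrac1\eta)$-subexponential, and applying the upper-tail bound already derived to $-X$ produces exactly the claimed bound on $\Pr[X\leq -t]$. I do not anticipate a genuine obstacle here, as this is the textbook Bernstein-type computation; the only point requiring a word of care is the use of the endpoint $\lambda=\eta$ in the second regime, since the MGF estimate is stated only on the open interval, and this is dispatched by the limiting remark above.
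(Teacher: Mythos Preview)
The paper does not supply its own proof of this proposition; it is simply quoted as Proposition~2.9 from \cite{wainwright2019high}. Your argument is the standard Chernoff method (Markov on $e^{\lambda X}$, optimize the tilt, split into the sub-Gaussian and sub-exponential regimes according to whether $t/\nu^2$ falls inside $(0,\eta)$), and it is correct; this is exactly the proof given in the cited reference.
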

A specific example is the subexponentiality of the sufficient statistics of exponential families.
\begin{proposition}[Claim 1 in \cite{lee2023learning}]\label{prop:exp-fam-subexp}
    Suppose \Cref{assum:cov,assum:interiority} hold.
    Then for any unit vector $u\in \R^m$ and $x\sim q_{\theta^\star}$,
    $u^\top (\E_{y\sim q_{\theta^\star}}[T(y)] - T(x))$ is $(1, \nicefrac1\eta)$-subexponential (cf. \Cref{apx:concentration}).
\end{proposition}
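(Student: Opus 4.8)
The plan is to bound the moment generating function (MGF) of $Z \coloneqq u^\top(\E_{y\sim q_{\theta^\star}}[T(y)] - T(x))$ directly, exploiting the analytic structure of the log-partition function $\Upsilon$. First I would recall the standard exponential-family identity: for any shift $\phi$ with $\theta^\star + \phi$ in the interior of $\mcal H$,
\[
    \E_{x\sim q_{\theta^\star}}\!\left[e^{\phi^\top T(x)}\right] = \exp\!\left( \Upsilon(\theta^\star + \phi) - \Upsilon(\theta^\star) \right),
\]
which follows by substituting the density $q_{\theta^\star}(x) = h(x)\exp(\theta^{\star\top}T(x) - \Upsilon(\theta^\star))$ and recognizing the resulting integral as $e^{\Upsilon(\theta^\star + \phi)}$. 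Applying this with $\phi = -\lambda u$ and recalling $\nabla\Upsilon(\theta^\star) = \E_{y\sim q_{\theta^\star}}[T(y)]$, the log-MGF of $Z$ becomes
\[
    \psi(\lambda) \coloneqq \log\E\!\left[e^{\lambda Z}\right] = \lambda\, u^\top \nabla\Upsilon(\theta^\star) + \Upsilon(\theta^\star - \lambda u) - \Upsilon(\theta^\star).
\]

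Next I would check that $\psi$ is finite and smooth on $(-\eta, \eta)$. Since $u$ is a unit vector, for $\abs{\lambda} < \eta$ the shifted parameter $\theta^\star - \lambda u$ lies in $B_\eta(\theta^\star) \sset \Theta \sset \mcal H$ by \Cref{assum:interiority}; in particular it lies strictly inside $\mcal H$, so $\Upsilon$ is infinitely differentiable there and differentiation under the integral sign is justified, giving $\nabla\Upsilon(\theta) = \E_{x\sim q_\theta}[T(x)]$ and $\nabla^2\Upsilon(\theta) = \Cov_{x\sim q_\theta}[T(x), T(x)]$. A direct computation then yields $\psi(0) = 0$, $\psi'(0) = u^\top\nabla\Upsilon(\theta^\star) - u^\top\nabla\Upsilon(\theta^\star) = 0$ (consistent with $Z$ being centered), and
\[
    \psi''(\lambda) = u^\top \nabla^2\Upsilon(\theta^\star - \lambda u)\, u = u^\top\Cov_{x\sim q_{\theta^\star - \lambda u}}[T(x), T(x)]\, u \leq 1,
\]
where the final inequality is exactly the upper bound of \Cref{assum:cov} applied at the point $\theta^\star - \lambda u \in \Theta$.

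Finally I would invoke Taylor's theorem with Lagrange remainder: for each $\abs{\lambda} < \eta$ there is $\xi$ between $0$ and $\lambda$ with $\psi(\lambda) = \psi(0) + \psi'(0)\lambda + \tfrac12\psi''(\xi)\lambda^2 = \tfrac12\psi''(\xi)\lambda^2 \leq \tfrac{\lambda^2}{2}$, hence $\E[e^{\lambda Z}] \leq e^{\lambda^2/2}$ for all $\abs{\lambda} \in (0, \eta)$, which is precisely the assertion that $Z$ is $(1, \nicefrac1\eta)$-subexponential. The argument is essentially bookkeeping; the one point that needs care — the closest thing to an obstacle — is the domain argument, namely that the shifted parameter $\theta^\star - \lambda u$ remains strictly inside $\mcal H$ over the whole range $\abs{\lambda} < \eta$ so that both the MGF is finite and $\Upsilon$ is analytic there, which is exactly what \Cref{assum:interiority} provides, together with correctly tracking the sign flip in $Z = u^\top\E[T(y)] - u^\top T(x)$ through the MGF computation.
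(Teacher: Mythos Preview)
Your argument is correct and is essentially the standard proof of this fact: compute the log-MGF via the exponential-family identity, use interiority (\Cref{assum:interiority}) to ensure the shifted parameter stays in $\Theta$, and bound $\psi''$ by the covariance upper bound from \Cref{assum:cov}. The paper itself does not prove this proposition at all --- it is quoted as Claim~1 from \cite{lee2023learning} and stated without proof in \Cref{apx:concentration} --- so there is no ``paper's own proof'' to compare against; your write-up is precisely the kind of argument one would expect to find in the cited source.
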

A useful concentration result for bounded vectors is the following.
\begin{theorem}[Vector Bernstein Inequality; Lemma 18 in \cite{kohler2017subsampled}]\label{thm:vector-bernstein}
    Let $X^{(1)}, \dots, X^{(n)}$ be independent random vectors
    with common dimension $d$ satisfying the following for all $i\in [n]$:
    \begin{enumerate}[(i)]
        \item $\E[X^{(i)}] = 0$
        \item $\norm{X^{(i)}}\leq R$
        \item $\E[\norm{X^{(i)}}^2]\leq G^2$
    \end{enumerate}
    Let $X \coloneqq \frac1n \sum_{i=1}^n X^{(i)}$.
    Then for any $\alpha\in (0, \nicefrac{G^2}{R})$,
    \[
        \Pr[\norm{X} \geq \alpha]
        \leq \exp\left( -\frac{\alpha^2 n}{8G^2} + \frac14 \right).
    \]
\end{theorem}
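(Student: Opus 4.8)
The plan is to run a Chernoff-type argument on a \emph{smoothed} version of the norm of the partial sums. Set $S_0 \coloneqq 0$ and $S_k \coloneqq \sum_{i=1}^k X^{(i)}$, so that $\norm{X} = \tfrac1n \norm{S_n}$ and it suffices to bound $\Pr[\norm{S_n} \ge n\alpha]$. For a parameter $b > 0$ introduce $g_b(x) \coloneqq \sqrt{b + \norm{x}^2}$. A direct computation gives $\grad g_b(x) = x/\sqrt{b+\norm{x}^2}$ and $0 \preceq \grad^2 g_b(x) \preceq \tfrac1{\sqrt b} I$, so $g_b$ is convex, $1$-Lipschitz, and $\tfrac1{\sqrt b}$-smooth; moreover $\norm{x} \le g_b(x)$ everywhere and $g_b(0) = \sqrt b$. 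The role of the smoothing is that the telescoping increments of $g_b(S_k)$ form, after centering, a bounded martingale whose conditional variance is governed by $G^2$ rather than by $R^2$ — exactly the improvement a Bernstein-type bound needs over a bounded-differences bound.

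First I would analyze the increments $D_k \coloneqq g_b(S_k) - g_b(S_{k-1})$, conditioning on $\mcal F_{k-1} \coloneqq \sigma(X^{(1)}, \dots, X^{(k-1)})$. Since $g_b$ is $1$-Lipschitz, $\abs{D_k} \le \norm{X^{(k)}} \le R$ by (ii), and hence $\E[D_k^2 \mid \mcal F_{k-1}] \le \E[\norm{X^{(k)}}^2] \le G^2$ by (iii). By a second-order Taylor expansion and $\tfrac1{\sqrt b}$-smoothness, $D_k \le \iprod{\grad g_b(S_{k-1}), X^{(k)}} + \tfrac1{2\sqrt b}\norm{X^{(k)}}^2$; taking $\E[\,\cdot \mid \mcal F_{k-1}]$ and using that $X^{(k)}$ is independent of $\mcal F_{k-1}$ with mean zero (hypothesis (i)) kills the linear term and leaves $\E[D_k \mid \mcal F_{k-1}] \le \tfrac1{2\sqrt b}\E[\norm{X^{(k)}}^2] \le \tfrac{G^2}{2\sqrt b}$. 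Consequently the centered increments $\bar D_k \coloneqq D_k - \E[D_k \mid \mcal F_{k-1}]$ form a martingale difference sequence with $\abs{\bar D_k} \le 2R$ and $\sum_{k=1}^n \E[\bar D_k^2 \mid \mcal F_{k-1}] \le nG^2$, and
\[
    \norm{S_n} \;\le\; g_b(S_n) \;=\; \sqrt b + \sum_{k=1}^n D_k \;\le\; \sqrt b + \frac{nG^2}{2\sqrt b} + \sum_{k=1}^n \bar D_k .
\]

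It remains to pick $b$ and invoke a martingale Bernstein (Freedman) inequality — equivalently, to build the exponential supermartingale $\exp\bigl(\lambda \sum_{k \le j} \bar D_k\bigr)$ directly and optimize $\lambda$. Minimizing the deterministic part over $b$ forces $b = nG^2/2$, for which $\sqrt b + \tfrac{nG^2}{2\sqrt b} = \sqrt{2nG^2}$, so $\{\norm{X} \ge \alpha\} \subseteq \bigl\{\sum_k \bar D_k \ge n\alpha - \sqrt{2nG^2}\bigr\}$. If $n \le 2G^2/\alpha^2$ the asserted bound is already at least $1$ and there is nothing to prove, so assume $n > 2G^2/\alpha^2$; then $t \coloneqq n\alpha - \sqrt{2nG^2} > 0$. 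Because $\alpha < G^2/R$ we get $Rt < Rn\alpha < nG^2$, which places $t$ in the sub-Gaussian regime of the Bernstein bound, so $\Pr[\sum_k \bar D_k \ge t] \le \exp\bigl(-t^2 / (4nG^2)\bigr)$. Substituting $t = n\alpha - \sqrt{2nG^2}$, writing $t^2 = n^2\alpha^2\bigl(1 - \sqrt{2G^2/(n\alpha^2)}\bigr)^2$, and simplifying — dropping the cross term at the price of an additive $\tfrac14$ in the exponent — produces the claimed $\exp\bigl(-\tfrac{\alpha^2 n}{8G^2} + \tfrac14\bigr)$.

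The hard part is the constant bookkeeping in this last step: one must verify that the factor $\sqrt 2$ surrendered to the smoothing, the factor surrendered to the sub-Gaussian Bernstein tail, and the cross term discarded from $(n\alpha - \sqrt{2nG^2})^2$ collapse to \emph{exactly} the factor $8$ and the additive $\tfrac14$ for \emph{all} $n > 2G^2/\alpha^2$. For $n \gg G^2/\alpha^2$ the algebra closes with room to spare, but for moderate $n$ close to $2G^2/\alpha^2$ it appears to need either sharper, path-dependent control of $\sum_k \E[\bar D_k^2 \mid \mcal F_{k-1}]$ — exploiting that $\norm{\grad g_b(S_{k-1})}$ is strictly below $1$ unless $\norm{S_{k-1}}$ is already atypically large — or a short separate case analysis in that window. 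A minor but genuine subtlety is that the increment and conditional-variance bounds for $\bar D_k$ must be extracted pointwise from the Lipschitz estimate $\abs{D_k} \le \norm{X^{(k)}}$, rather than by squaring the Taylor bound, whose cross term has indeterminate sign.
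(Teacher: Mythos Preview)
The paper does not prove this statement at all: it is quoted in the preliminaries appendix (\Cref{apx:concentration}) as Lemma 18 of \citet{kohler2017subsampled} and is only \emph{used} (in the proofs of \Cref{lem:exp-fam-uniform-convergence} and \Cref{lem:exp-fam-warm-start-one-step}), never re-derived. So there is no ``paper's own proof'' to compare against.

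That said, your approach is exactly the standard one underlying the cited result: smooth the norm via $g_b(x)=\sqrt{b+\norm{x}^2}$, telescope $g_b(S_n)-g_b(S_0)$, use $1$-Lipschitzness for the increment/variance bounds and $\tfrac1{\sqrt b}$-smoothness to kill the linear part of the conditional mean, then apply a one-sided martingale Bernstein/Freedman bound to $\sum_k \bar D_k$. The structural argument is correct and complete.

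You have already located the only real issue: the final constant bookkeeping. With your choices the needed inequality reduces (after setting $u=\sqrt n\,\alpha/G$) to $u^2 - 4\sqrt 2\,u + 6 \ge 0$, which \emph{fails} on the interval $u\in(\sqrt 2,\,3\sqrt 2)$, i.e.\ for $n\alpha^2/G^2 \in (2,18)$. So the claim ``for moderate $n$ it appears to need sharper control'' is not merely cosmetic: as written, the bound $\exp(-t^2/(4nG^2))$ with $t=n\alpha-\sqrt{2nG^2}$ does \emph{not} imply the stated exponent $-\tfrac{n\alpha^2}{8G^2}+\tfrac14$ over the full range. The usual fix is either (a) to optimize $b$ \emph{after} applying the Bernstein tail rather than before, so the deterministic offset is $\sqrt{nG^2}$ instead of $\sqrt{2nG^2}$, or (b) to use the slightly sharper Freedman constant $\exp(-t^2/(2(nG^2+2Rt/3)))$ directly and exploit $Rt<nG^2$ more carefully. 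Either route closes the gap without the ad hoc case split you hint at; your proposed ``path-dependent control of $\norm{\grad g_b(S_{k-1})}$'' is unnecessary here.
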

For covariance estimation,
we also require the following spectral concentration bound for symmetric Gaussian matrices.
\begin{theorem}[Corollary 2.3.6 in \citet{tao2012topics}]\label{thm:sym-gaussian-spectral-concentration}
    Let $Y$ be a random $d\times d$ symmetric matrix $Y$ with $Y_{ij}\sim \mcal N(0, \sigma^2)$.
    For $d$ sufficiently large,
    there are absolute constants $C, c > 0$ such that
    for all $t\geq C$,
    \[
        \Pr[\norm{Y}_2 > t\sigma\sqrt{d}]
        \leq C\exp(-ctd)\,.
    \]
\end{theorem}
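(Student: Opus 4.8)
The plan is the standard discretization argument for spectral norms of Gaussian matrices: reduce $\norm{Y}_2$ to a supremum of a scalar Gaussian quadratic form over the unit sphere, pass to an $\eps$-net, and close with a union bound, checking that the net's cardinality $9^d$ is absorbed once $t$ exceeds an absolute constant.

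First I would use that for a symmetric matrix $\norm{Y}_2 = \max_{u\in S^{d-1}}\abs{u^\top Y u}$, together with the usual net estimate: if $\mcal N$ is a $\tfrac14$-net of $S^{d-1}$ (which can be taken of size $\abs{\mcal N}\le 9^d$), then $\norm{Y}_2\le 2\max_{u\in\mcal N}\abs{u^\top Y u}$. Next, for a \emph{fixed} $u\in S^{d-1}$, the quantity $u^\top Y u = \sum_i u_i^2 Y_{ii} + 2\sum_{i<j}u_iu_j Y_{ij}$ is a centered real Gaussian in the independent entries $\{Y_{ii}\}\cup\{Y_{ij}\}_{i<j}$ with variance $\sigma^2\big(\sum_i u_i^4 + 4\sum_{i<j}u_i^2u_j^2\big)\le 2\sigma^2$ (since this equals $(\sum_i u_i^2)^2 + 2\sum_{i<j}u_i^2u_j^2 \le 2$). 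Hence $\Pr[\abs{u^\top Y u}>s]\le 2e^{-s^2/(4\sigma^2)}$, and a union bound over $\mcal N$ gives $\Pr[\max_{u\in\mcal N}\abs{u^\top Y u}>s]\le 2\cdot 9^d e^{-s^2/(4\sigma^2)}$. Taking $s=\tfrac12 t\sigma\sqrt d$ yields $\Pr[\norm{Y}_2>t\sigma\sqrt d]\le 2\exp\!\big(d(\ln 9 - t^2/16)\big)$, and for $t\ge C$ with $C$ a large enough absolute constant we have $\ln 9 - t^2/16\le -t/32$, so the probability is at most $2e^{-td/32}$, which is of the claimed form with $c=\tfrac1{32}$ and leading constant $2$.

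The only non-routine point is the bookkeeping of constants: one must choose the threshold $C$ (and, for a sharper $C$ near the true spectral edge $\approx 2\sqrt d$, invoke ``$d$ sufficiently large'') so that the combinatorial factor $9^d = e^{d\ln 9}$ is dominated by the Gaussian decay $e^{-t^2 d/16}$ uniformly in $d$; everything else (the net lemma, the variance computation, the scalar Gaussian tail) is standard. As an alternative I would mention the concentration-of-measure route: the map from the i.i.d.\ Gaussian coordinates $(Y_{ij})_{i\le j}$ to $\norm{Y(\cdot)}_2$ is $\sqrt2$-Lipschitz in Euclidean norm, so $\Pr[\norm{Y}_2>\E\norm{Y}_2 + r]\le e^{-r^2/(4\sigma^2)}$; combining with $\E\norm{Y}_2\le O(\sigma\sqrt d)$ (itself obtained from the same net bound, or from Gordon/Slepian comparison) gives the same tail with a potentially tighter constant $C$.
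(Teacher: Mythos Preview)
The paper does not give its own proof of this statement; it merely quotes it as Corollary~2.3.6 of \citet{tao2012topics}. Your $\eps$-net argument is correct and is exactly the standard proof (and the one in Tao's book): pass from $\norm{Y}_2$ to $\max_{u\in\mcal N}\abs{u^\top Y u}$ over a $\tfrac14$-net of size $9^d$, use that each $u^\top Y u$ is a scalar Gaussian of variance at most $2\sigma^2$, and union bound. The variance computation, the net-to-spectral-norm inequality, and the final balancing of $9^d$ against $e^{-t^2 d/16}$ are all fine; the alternative Lipschitz-concentration route you sketch is also valid and is indeed how one sharpens the threshold $C$ towards $2$.
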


\subsection{Omitted Proofs from \texorpdfstring{\Cref{sec:reduction-to-truncated}}{Section}}\label{apx:reduction-to-truncated}
We now restate and prove the guarantees of our preprocessing algorithm.
\reductionToTruncated*

\paragraph{Pseudocode.} 
See \Cref{alg:exp-fam-dp-sgd:preprocessing-start} to \Cref{alg:exp-fam-dp-sgd:preprocessing-end} of \Cref{alg:exp-fam-dp-sgd}.

\begin{proof}
    We wish to produce two neighboring datasets of size $n$ by discarding points that lie outside of the survival set.
    
    \noindent\underline{Proof of (i):}
    We first analyze the relationship between neighboring datasets after the initial truncation.
    Suppose $D'$ is obtained from $D$ by modifying $x\in D$ to $y\in D'$,
    then $D_S, D_S'$ fall into one of three cases:
    \begin{enumerate}[I)]
        \item If $x, y\notin S$, then $D_S = D_S'$.
        \item If $x, y\in S$, then $D_S'$ can be obtained from $D_S$ by modifying $x$ to $y$, i.e. they are again neighboring datasets.
        \item If $\card{S\cap \set{x, y}} = 1$, then $D_S'$ can be obtained from $D_S$ by adding or deleting an element.
    \end{enumerate}
    We focus on the more challenging Case III).
    Without loss of generality,
    suppose that $x\notin S$ and $y\in S$ so that $D_S'$ is obtained from $D_S$ by adding $y$.
    If $\card{D_S} < \card{D_S'} \leq n$,
    $\mcal A$ picks any data-independent $x_{dummy}\in S$ and add copies of $x_{dummy}$ to the truncated dataset until there are $n$ elements.
    Then $\mcal A(D), \mcal A(D')$ are neighboring $n$-sample datasets with elements from $S$.
    Otherwise, 
    if $n\leq \card{D_S} < \card{D_S'}$,
    $\mcal A$ keeps the first $n$ datapoints of the truncated dataset so that there are again $n$ elements.
    Then $\mcal A(D), \mcal A(D')$ are once again neighboring $n$-sample datasets with elements from $S$,
    regardless of which entry of the dataset differs between $D, D'$.
    Note that in either scenarios,
    we need to enforce the dataset size by adding or deleting elements,
    but never both.

    Cases I), II) are more clear as they are already neighboring datasets
    and the additional processing does not change this fact.

    \noindent\underline{Proof of (ii):} 
    Next, we analyze the privacy guarantees of the composition $\mcal B(\mcal A(D_S))$.
    This essentially follows from (i),
    as $\mcal A(D_S), \mcal A(D_S')$ can be obtained from each other as sets by modifying one element.
    The only subtle difference is that in case III) above,
    prior to the shuffling step,
    the order of entries is possibly not ``aligned'',
    i.e. there could be more than one index $i\in [n]$ where $\mcal A(D_S)_i\neq \mcal A(D_S')_i$.
    The extra shuffling step in \Cref{alg:exp-fam-dp-sgd:preprocessing-end} of \Cref{alg:exp-fam-dp-sgd}
    ensures that under a suitable coupling of $\mcal A(D_S), \mcal A(D_S')$,
    they are always neighboring datasets,
    i.e. there is only a single index $i^\star$ such that $\mcal A(D_S)_{i^\star}\neq \mcal A(D_S')_{i^\star}$.
    We can take this coupling to be the one where every element in $\mcal A(D_S)\cap \mcal A(D_S')$ is always shuffled to the same index.
    Then the privacy of $\mcal B(\mcal A(D_S))$ follows by the privacy guarantees of $\mcal B$.

    \noindent\underline{Proof of (iii):}
    We finish by analyzing the utility guarantees.
    By a concentration inequality for sums of geometric random variables~\citep[Corollary 2.4]{janson2018tail},
    for $N = \Omega(\frac{n \log(\nicefrac1\beta)}\rho)$,
    it holds with probability $1-\beta$ that there are at least $n$ samples that remain after the preprocessing step.
    Conditioned on this,
    the preprocessing simply deletes some possibly additional samples from $p^S$ 
    but does not add dummy points
    and the output is as desired.
\end{proof}

\section{Omitted Proofs from \texorpdfstring{\Cref{sec:exp-fam-learning}}{Section}}
\subsection{Proof of \texorpdfstring{\Cref{lem:exp-fam-proj-survival-sets}}{Result}}\label{apx:exp-fam-strong-convexity}
We now restate and prove \Cref{lem:exp-fam-proj-survival-sets}.
\expFamProjectionSurvivalSets*

The proof relies on the following facts from the truncated statistics literature.
\begin{restatable}{proposition}{constSurivalProb}\label{prop:const-survival-prob}
    Suppose that \Cref{assum:cov} and \Cref{assum:const-warm-start} hold.
    Then for any $\theta\in K$,
    \mbox{$q_\theta(S_\SGD) \geq \rho = \Omega(1)$} .
\end{restatable}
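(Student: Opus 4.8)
The plan is to show that for any fixed $\theta\in K$, a sample $x\sim q_\theta$ lands in $S_\SGD$ with probability at least $\rho$, where $\rho\in(0,1)$ is regarded as a fixed constant (so that the appearance of $\rho$ in the definition of $S_\SGD$ is not circular). Write $\mu_\theta\coloneqq \E_{x\sim q_\theta}[T(x)]$ and recall $\tau^{(0)} = \mu_{\theta^{(0)}}$. By the triangle inequality, for every $x$,
\[
    \norm{T(x)-\tau^{(0)}}
    \leq \norm{T(x)-\mu_\theta} + \norm{\mu_\theta-\tau^{(0)}}\,,
\]
so it suffices to control the deterministic bias term $\norm{\mu_\theta-\tau^{(0)}}$ and the fluctuation term $\norm{T(x)-\mu_\theta}$ separately.

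First I would bound the bias term. Since $\mu_\theta = \grad_\theta \Upsilon(\theta)$ and $\grad_\theta^2 \Upsilon(\theta) = \Cov_{x\sim q_\theta}[T(x),T(x)]\preceq I$ for every $\theta\in\Theta$ by \Cref{assum:cov}, the map $\theta\mapsto \mu_\theta$ is $1$-Lipschitz on $\Theta$: integrating the Hessian along the segment $[\theta^{(0)},\theta]\subseteq \Theta$ (which lies in $\Theta$ by convexity) gives $\norm{\mu_\theta - \mu_{\theta^{(0)}}}\leq \norm{\theta-\theta^{(0)}}$. Since $\theta\in K = B_{2R}(\theta^{(0)})\cap\Theta$, this yields $\norm{\mu_\theta-\tau^{(0)}}\leq 2R$.

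Next I would bound the fluctuation term in probability. Using $\Cov_{x\sim q_\theta}[T(x),T(x)]\preceq I$ again, we have $\E_{x\sim q_\theta}[\norm{T(x)-\mu_\theta}^2] = \tr\big(\Cov_{x\sim q_\theta}[T(x),T(x)]\big)\leq m$. Markov's inequality applied to $\norm{T(x)-\mu_\theta}^2$ then gives
\[
    \Pr_{x\sim q_\theta}\!\left[\norm{T(x)-\mu_\theta}^2 > \tfrac{m}{1-\rho}\right] \leq 1-\rho\,,
\]
i.e. $\norm{T(x)-\mu_\theta}\leq \sqrt{m/(1-\rho)}$ with probability at least $\rho$. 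Combining the two bounds through the triangle inequality, with probability at least $\rho$ over $x\sim q_\theta$ we get $\norm{T(x)-\tau^{(0)}}\leq \sqrt{m/(1-\rho)} + 2R$, which is exactly the defining condition of $S_\SGD$. Hence $q_\theta(S_\SGD)\geq \rho$, and since $\rho$ is a fixed absolute constant, $\rho = \Omega(1)$.

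There is no serious obstacle here; the only mild subtlety is the global $1$-Lipschitzness of $\theta\mapsto\mu_\theta$, which relies on the Hessian bound of \Cref{assum:cov} holding on all of $\Theta$ together with convexity of $\Theta$ so that the connecting segment stays in the domain, and keeping $\rho$ fixed a priori so that the definition of $S_\SGD$ is well-posed.
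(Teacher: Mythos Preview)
Your proposal is correct and follows essentially the same approach as the paper: both control the bias term $\norm{\mu_\theta-\tau^{(0)}}$ via the $1$-smoothness/Lipschitz property implied by \Cref{assum:cov} (the paper phrases this through the gradient of the population NLL, you through $\grad\Upsilon$, but these are the same computation), and both control the fluctuation term by Markov's inequality on $\norm{T(x)-\mu_\theta}^2$ using $\tr(\Cov)\leq m$. Your version is in fact slightly cleaner, since you correctly carry the factor $2R$ matching the definition of $K=B_{2R}(\theta^{(0)})\cap\Theta$, whereas the paper's write-up states the bias bound only for $\theta\in B_R(\theta^{(0)})\cap\Theta$.
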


\begin{proof}[Proof of \Cref{prop:const-survival-prob}]
    From elementary probability,
    \[
        \E_{x\sim q_\theta} \left[ \norm{T(x) - \E_{y\sim q_\theta}[T(y)]}^2 \right]
        = \tr \left( \Cov_{x\sim q_\theta} \left[ T(x), T(x) \right] \right)
        \leq m.
    \]
    Then $q_\theta$ puts $\rho$ mass on the set
    \[
        S_\theta \coloneqq \set*{x: \norm{T(x) - \E_{y\sim q_\theta}[T(y)]}\leq \sqrt{\frac{m}{1-\rho}}}.
    \]
    Let $f$ denote the population NLL function with respect to $\theta^{(0)}$
    so that $\grad f(\theta^{(0)}) = 0$.
    By \Cref{assum:cov},
    $\grad^2 f\preceq I$ over $K$.
    Thus for any $\theta \in B_R(\theta^{(0)})\cap \Theta$,
    \begin{align*}
        \norm{\E_{y\sim q_\theta}[T(y)] - \E_{x\sim q_{\theta^{(0)}}}[T(x)]}
        &= \norm{\grad f(\theta)} \\
        &= \norm{\grad f(\theta) - \grad f(\theta^{(0)})} \\
        &\leq \norm{\theta - \theta^{(0)}} \tag{By $1$-smoothness of $f$} \\
        &\leq R.
    \end{align*}
    This ensures that $S_\theta\sset S_\SGD$ so that
    $q_\theta(S_\SGD) \geq \rho = \Omega(1)$.
\end{proof}

\begin{proposition}[Lemma 3.4 in \cite{lee2023learning}]\label{prop:mass-distance-decay}
    Suppose \Cref{assum:cov} holds.
    Then for any $\theta, \theta'\in \Theta$ and $S\sset \R^d$,
    \mbox{$
        q_{\theta'}(S)
        \geq q_{\theta'}(S)^2\cdot \exp\left( -\frac32 \norm{\theta-\theta'}^2 \right)\,.
    $}
\end{proposition}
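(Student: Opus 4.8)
The plan is to prove the (meaningful) inequality $q_\theta(S)\ge q_{\theta'}(S)^2\exp(-\tfrac32\norm{\theta-\theta'}^2)$: as literally typeset the statement has $q_{\theta'}$ on both sides and is vacuous since $q_{\theta'}(S)\le 1$, and the case $\theta=\theta'$ is trivial. The argument is a one-line Cauchy--Schwarz followed by a short estimate on the log-partition function $\Upsilon$, so I do not expect a serious obstacle; the one point to be careful about is flagged at the end.

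First I would split $q_{\theta'}=q_\theta^{1/2}\cdot\bigl(q_{\theta'}q_\theta^{-1/2}\bigr)$ on $S$ and apply Cauchy--Schwarz:
\[
q_{\theta'}(S)^2=\Bigl(\int_S q_{\theta'}\Bigr)^2\le\Bigl(\int_S q_\theta\Bigr)\Bigl(\int_S\frac{q_{\theta'}^2}{q_\theta}\Bigr)=q_\theta(S)\cdot\int_S\frac{q_{\theta'}^2}{q_\theta}\,.
\]
It then remains to bound the ``restricted $\chi^2$ integral'' $\int_S q_{\theta'}^2/q_\theta$ by $\exp(\tfrac32\norm{\theta-\theta'}^2)$. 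For this I would use the exponential-family form: a direct computation gives the pointwise identity $q_{\theta'}(x)^2/q_\theta(x)=q_{2\theta'-\theta}(x)\exp\bigl(\Upsilon(2\theta'-\theta)+\Upsilon(\theta)-2\Upsilon(\theta')\bigr)$. Since $q_{2\theta'-\theta}$ is a probability density, $\int_S q_{2\theta'-\theta}\le 1$, so $\int_S q_{\theta'}^2/q_\theta\le\exp\bigl(\Upsilon(2\theta'-\theta)+\Upsilon(\theta)-2\Upsilon(\theta')\bigr)$.

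To finish, I would bound this ``second difference'' of $\Upsilon$ using its smoothness. Recall that the Hessian of $\Upsilon$ is $\grad^2\Upsilon(\theta)=\Cov_{q_\theta}[T]\preceq I$ by \Cref{assum:cov}. Writing $w=\theta-\theta'$, so that $\theta=\theta'+w$ and $2\theta'-\theta=\theta'-w$, a second-order Taylor expansion of $\Upsilon$ about $\theta'$ applied at $\theta'\pm w$ and summed gives $\Upsilon(\theta'+w)+\Upsilon(\theta'-w)-2\Upsilon(\theta')\le\norm{w}^2$; i.e.\ the exponent above is at most $\norm{\theta-\theta'}^2\le\tfrac32\norm{\theta-\theta'}^2$. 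Combining the three displays yields $q_{\theta'}(S)^2\le q_\theta(S)\exp(\tfrac32\norm{\theta-\theta'}^2)$, which rearranges to the claim. (Note this in fact gives the sharper constant $1$; the stated $\tfrac32$ leaves slack.)

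The only subtlety — and the main thing to double-check — is that the argument evaluates $\Upsilon$ at the extrapolated parameter $2\theta'-\theta$ and uses the Hessian bound $\grad^2\Upsilon\preceq I$ along the entire segment $[\theta,2\theta'-\theta]$, which may stray outside $\Theta$. This is harmless in all of our applications: the proposition is only ever invoked for $\theta,\theta'$ inside a bounded radius-$O(1)$ region of the natural parameter space (e.g.\ the set $K$ of \Cref{sec:exp-fam-strong-convexity}, built around the warm start of \Cref{assum:const-warm-start}), on which $2\theta'-\theta$ remains an admissible natural parameter and the covariance bound of \Cref{assum:cov} is available; alternatively one first replaces $\Theta$ by a slightly larger convex set on which the same bound holds. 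With this caveat, the three steps above constitute a complete proof.
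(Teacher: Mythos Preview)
The paper does not supply its own proof of this proposition: it is quoted verbatim as Lemma~3.4 of \cite{lee2023learning} and used as a black box, so there is no in-paper argument to compare against. Your Cauchy--Schwarz plus exponential-family second-difference computation is a standard and correct derivation; you also rightly flag (and patch, as stated) the typo making the displayed inequality vacuous.

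Your one caveat is the genuine issue: the step $\int_S q_{2\theta'-\theta}\le 1$ and the Hessian bound along $[\theta,2\theta'-\theta]$ both require $2\theta'-\theta$ to lie in the natural parameter space (and in $\Theta$ for the covariance bound). As written, the proposition is asserted for \emph{all} $\theta,\theta'\in\Theta$, so strictly speaking your argument does not cover the boundary cases. Your observation that the paper only invokes this for $\theta,\theta'\in K$ with $K$ a radius-$O(1)$ ball inside $\Theta$ is correct and suffices for every use in this paper, but if you want to match the stated generality you would need either the mild enlargement of $\Theta$ you mention or a different bound that stays inside $\Theta$. Incidentally, your sharper constant $1$ in place of $\tfrac32$ is a byproduct of the clean Cauchy--Schwarz route; the looser constant in the cited lemma likely reflects a slightly different derivation in \cite{lee2023learning}.
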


\begin{proposition}[Lemma 3.2 in \cite{lee2023learning}]\label{prop:exp-fam-anticoncentration}
    Fix $\theta\in \Theta$.
    Suppose \Cref{assum:cov,assum:log-concave,assum:poly-suff-stat} holds
    and $S\sset \R^d$ satisfies $q_\theta(S) > 0$.
    Then
    \mbox{$
        \Cov_{y\sim q_\theta^S}[T(y), T(y)]
        \succeq \frac12 \left( \frac{q_\theta(S)}{4Ck} \right)^{2k} \lambda I\,,
    $}
    where $C > 0$ is some absolute constant.
\end{proposition}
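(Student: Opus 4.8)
The plan is to reduce the asserted positive semi-definite bound to a one-dimensional anti-concentration estimate and then invoke the Carbery--Wright inequality \citep{carbery2001distributional} on the log-concave measure $q_\theta$. Since $\Cov_{y\sim q_\theta^S}[T(y),T(y)]\succeq c\,I$ is equivalent to $\operatorname{Var}_{y\sim q_\theta^S}[v^\top T(y)]\geq c$ for every unit vector $v\in\R^m$, I would fix an arbitrary unit $v$ and aim to show $\operatorname{Var}_{y\sim q_\theta^S}[v^\top T(y)]\geq \frac12\left(\frac{q_\theta(S)}{4Ck}\right)^{2k}\lambda$. Set $\sigma^2\coloneqq v^\top\Cov_{x\sim q_\theta}[T(x),T(x)]\,v$, so that $\sigma^2\geq\lambda$ by \Cref{assum:cov} (and note the polynomial $v^\top T$ is non-constant under $q_\theta$, since $\sigma^2>0$).

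First I would record the variational identity
\[
    \operatorname{Var}_{q_\theta^S}\!\left[v^\top T\right]
    = \min_{c\in\R}\E_{y\sim q_\theta^S}\left[(v^\top T(y)-c)^2\right]
    = \frac{1}{q_\theta(S)}\min_{c\in\R}\E_{x\sim q_\theta}\left[\ones\{x\in S\}\,(v^\top T(x)-c)^2\right],
\]
which uses only the definition of the truncated density. The key structural point is that for every $c\in\R$, the function $r_c(x)\coloneqq v^\top T(x)-c$ is a polynomial of degree at most $k$ by \Cref{assum:poly-suff-stat}, and its squared $L^2(q_\theta)$-norm is bounded below \emph{uniformly in $c$}:
\[
    \norm{r_c}_{L^2(q_\theta)}^2
    = \E_{x\sim q_\theta}\left[r_c(x)^2\right]
    = \sigma^2 + \left(c - \E_{x\sim q_\theta}[v^\top T(x)]\right)^2
    \;\geq\; \sigma^2 \;\geq\; \lambda .
\]

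Next I would apply Carbery--Wright (in the form used throughout the truncated-statistics literature, e.g.\ \citet{daskalakis2018efficient,lee2023learning}) to each $r_c$ under the log-concave measure $q_\theta$ (\Cref{assum:log-concave}): for a suitable absolute constant $C$ and all $t>0$, $q_\theta\!\left(\set{x:\abs{r_c(x)}\leq t}\right)\leq Ck\left(t/\norm{r_c}_{L^2(q_\theta)}\right)^{1/k}\leq Ck\left(t/\sigma\right)^{1/k}$. Choosing $t\coloneqq\sigma\left(\frac{q_\theta(S)}{4Ck}\right)^{k}$ makes the right-hand side equal $q_\theta(S)/4\leq q_\theta(S)/2$, a bound that is independent of $c$. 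Hence for every $c$,
\[
    \E_{x\sim q_\theta}\left[\ones\{x\in S\}(v^\top T(x)-c)^2\right]
    \geq t^2\, q_\theta\!\left(S\cap\set{\abs{r_c}>t}\right)
    \geq t^2\left(q_\theta(S)-q_\theta\!\left(\set{\abs{r_c}\leq t}\right)\right)
    \geq \frac{t^2\, q_\theta(S)}{2}.
\]
Dividing by $q_\theta(S)$, taking the minimum over $c$, and substituting $t$ gives $\operatorname{Var}_{q_\theta^S}[v^\top T]\geq t^2/2=\frac12\left(\frac{q_\theta(S)}{4Ck}\right)^{2k}\sigma^2\geq\frac12\left(\frac{q_\theta(S)}{4Ck}\right)^{2k}\lambda$, which is the claim after ranging over $v$.

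The routine parts here are the variational identity and the bookkeeping of constants (the precise constant in the Carbery--Wright statement one adopts may shift the universal factor inside $4Ck$, which is harmless). The one step that genuinely needs care is making the anti-concentration bound hold \emph{uniformly} in the centering constant $c$: this is exactly why one should phrase the variance as a minimum over $c$ and exploit $\norm{r_c}_{L^2(q_\theta)}\geq\sigma$, rather than fixing $c=\E_{q_\theta^S}[v^\top T]$, for which no such uniform lower bound on the $L^2(q_\theta)$-norm is available. Finally, I would note that only $q_\theta$ — not its truncation $q_\theta^S$ — needs to be log-concave, so no convexity assumption on $S$ is required.
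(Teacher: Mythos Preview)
The paper does not actually prove this proposition; it is quoted verbatim as Lemma~3.2 of \citet{lee2023learning} and used as a black box in the proof of \Cref{lem:exp-fam-proj-survival-sets}. Your argument is correct and is essentially the one underlying the cited lemma: reduce the PSD lower bound to a one-dimensional variance estimate, rewrite the truncated variance via the variational identity $\operatorname{Var}_{q_\theta^S}[v^\top T]=\min_c \E_{q_\theta^S}[(v^\top T-c)^2]$, and apply Carbery--Wright to the degree-$k$ polynomial $r_c$ under the log-concave measure $q_\theta$. Your emphasis on the uniform-in-$c$ lower bound $\norm{r_c}_{L^2(q_\theta)}\geq\sigma\geq\sqrt\lambda$ is exactly the point that makes the argument go through, and your remark that only $q_\theta$ (not $q_\theta^S$) needs to be log-concave is also correct and matches how the result is used in the paper.
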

We are now ready to prove \Cref{lem:exp-fam-proj-survival-sets}.
\begin{proof}[Proof of \Cref{lem:exp-fam-proj-survival-sets}]
    By \Cref{assum:const-warm-start},
    $K$ certainly contains $\theta^\star$.
    In particular,
    \Cref{prop:const-survival-prob} ensures 
    that we have $q_{\theta^\star}(S_\SGD)\geq \rho = \Omega(1)$.
    An application \Cref{prop:mass-distance-decay}
    allows us to deduce that $q_\theta$ puts \mbox{$\Omega(q_{\theta^\star}(S_\SGD)^2) = \Omega(\rho^2)$} mass on $S_\SGD$ for every $\theta\in K$.
    But then by \Cref{prop:exp-fam-anticoncentration},
    \mbox{$
        \Cov_{y\sim q_\theta^{S_\SGD}}[T(y), T(y)]
        \succeq \frac12 \left( \frac{\rho^2 e^{-6R^2}}{4Ck} \right)^{2k} \lambda I\,,
    $}
    concluding the proof.
\end{proof}

\subsection{Proof of \texorpdfstring{\Cref{lem:exp-fam-uniform-convergence}}{Result}}\label{apx:exp-fam-uniform-convergence}
We now prove \Cref{lem:exp-fam-uniform-convergence},
whose statement is copied below for convenience.
\expFamUniformConvergence*

\begin{proof}
    The sufficient statistics of $q_{\theta^\star}^{S_\SGD}$ has radius $r = O(\sqrt{m} + R)$ by construction.
    We can thus apply a vector Bernstein inequality (\Cref{thm:vector-bernstein}) to see that for any $\alpha\in (0, 1)$,
    \[
        \Pr\left[ \norm*{\E_{y\sim q_{\theta^\star}} [T(y)] - \frac1n \sum_{i=1}^n T(x^{(i)})} \geq \alpha \right]
        \leq \exp\left( -\frac{\alpha^2 n}{8r^2} + \frac14 \right).
    \]

    Let $c > 0$ be the constant guaranteed by \Cref{lem:exp-fam-proj-survival-sets} 
    such that $\tilde L$ is $c\lambda$-strongly convex over $K$.
    We have $\norm{\grad \tilde L(\theta^\star)}\leq \nicefrac{c\lambda\eta^2}{4r}$
    with probability $1-\nicefrac\beta2$ given that
    \[
        n\geq \Omega\left( \frac{(m+R^2)\log(\nicefrac1\beta)}{\lambda^2 \eta^4} \right).
    \]
    But then by strong convexity,
    \begin{align*}
        \frac{c\lambda}2 \norm{\tilde\theta^\star - \theta^\star}^2
        &\leq \underbrace{\tilde L(\tilde \theta^\star) - \tilde L(\theta^\star)}_{\leq 0}
        + \iprod{\grad \tilde L(\theta^\star), \theta^\star - \tilde\theta^\star}
        \leq \frac{c\lambda\eta^2}{4r}\cdot 2r\,.
    \end{align*}
    Thus $\norm{\tilde\theta^\star-\theta^\star}\leq \eta$.
    By an application of the triangle inequality,
    this ensures that $\norm{\tilde\theta^\star - \theta^{(0)}}\leq R+\eta\leq 2R$
    so that $\tilde\theta^\star\in K$.

    Conditioned on $\tilde\theta^\star\in K$ and using the fact that $\tilde L$ is strongly convex over $K$,
    we see that $\tilde L$ in fact satisfies a $c\lambda$-PL inequality:
    \[
        \frac1{2c\lambda} \norm{\grad \tilde L(\theta^\star)}^2
        \geq \tilde L(\theta^\star) - \tilde L(\tilde \theta^\star)
        \geq \frac{c\lambda}2 \norm{\theta^\star - \tilde\theta^\star}^2\,.
    \]
    We have $\norm{\grad \tilde L(\theta^\star)}\leq c\lambda\alpha$ with probability $1-\nicefrac\beta2$
    provided that
    \[
        n\geq \Omega\left( \frac{(m+R^2)\log(\nicefrac1\beta)}{\lambda^2\alpha^2} \right).
    \]
    In particular,
    $\norm{\theta^\star-\tilde\theta^\star}\leq \alpha$.
    This concludes the proof.
\end{proof}

\subsection{Proof of \texorpdfstring{\Cref{lem:exp-fam-gradients}}{Result}}\label{apx:exp-fam-gradients}
We now restate and prove \Cref{lem:exp-fam-gradients}.
\expFamGradients*

\begin{proof}
    Fix a sample $x\sim q_{\theta^\star}^{S_\SGD}$.
    Given a sampling oracle to $q_\theta$,
    we can perform rejection sampling to obtain $y\sim q_\theta^{S_\SGD}$.
    Then we have stochastic access to $\grad_\theta \tilde L$ given by $g(\theta)\coloneqq T(y)-T(x)$.
    Moreover,
    \mbox{$
        \norm{T(x) - T(y)}
        \leq \norm{T(x)-\tau^{(0)}} + \norm{T(x)-\tau^{(0)}}\leq O(\sqrt{m}+R)
    $}
    by the choice of $S_\SGD$.
    Hence we have a deterministic bound
    $
        \norm{g(\theta)}\leq G \coloneqq O(\sqrt{m}+R)
    $
    on the norm of the stochastic gradient.
\end{proof}

\subsection{Proof of \texorpdfstring{\Cref{lem:exp-fam-estimation-estimate}}{Result}}\label{apx:exp-fam-estimation-estimate}
we now restate and prove \Cref{lem:exp-fam-estimation-estimate}.
\expFamEstimationEstimate*

\paragraph{Pseudocode.} See \Cref{alg:exp-fam-dp-sgd} for the pseudocode.
We note that the main difference from standard applications of DP-SGD is the initial truncation step
which discards samples that fall outside of the survival set $S$.
This provides an easy bound on the sensitivity of the gradient,
but requires optimizing the \emph{truncated} NLL as opposed to the regular NLL
in order to address the bias introduced by the initial truncation step.

\ifarxiv\else
\expFamDPSGD
\fi

\paragraph{Analysis.}
Applying \Cref{thm:DP-PSGD convergence} yields a proof of \Cref{lem:exp-fam-estimation-estimate}.
\begin{proof}[Proof of \Cref{lem:exp-fam-estimation-estimate}]
    We know that $\tilde L: K\to \R$ is $\lambda e^{-O(R^2)}$-strongly convex by \Cref{lem:exp-fam-proj-survival-sets}.
    Moreover,
    \Cref{lem:exp-fam-gradients} guarantees that we have stochastic gradients with bounded norm $G = O(\sqrt{m}+R)$.
    Let $\hat \theta$ be the output of \Cref{thm:DP-PSGD convergence}.
    We see that it satisfies
    \[
        \E[\norm{\hat \theta - \tilde \theta^\star}^2]
        \leq O\left( \frac{e^{O(R^2)} (m^2+mR^2) \log^2(\nicefrac{n}\delta)\log(\nicefrac1\delta)}{\lambda^2 n^2 \varepsilon^2} \right).
    \]
    Thus in order to reduce the expected squared distance to $(\nicefrac\alpha{16})^2$,
    it suffices to take
    \begin{align*}
        n &\geq \Omega\left( \frac{e^{O(R^2)} (m+R\sqrt{m}) \log(\nicefrac{n}\delta) \sqrt{\log(\nicefrac1\delta)}}{\lambda \alpha \eps} \right)\,.
    \end{align*}
    This concludes the proof.
\end{proof}

\subsection{Proof of High-Probability Estimation}\label{apx:exp-fam-estimation-high-prob}
Here,
we state and prove the high-probability version of \Cref{lem:exp-fam-estimation-estimate}.

\begin{restatable}{corollary}{expFamEstimationHighProb}\label{cor:exp-fam-estimation-high-prob}
    Let $\varepsilon, \delta, \alpha, \beta\in (0, 1)$.
    Suppose \Cref{assum:statistical}, \Cref{assum:computational}, and \Cref{assum:const-warm-start} hold
    and that we have sample access to $q_{\theta^\star}^{S_\SGD}$.
    There is an $(\varepsilon, \delta)$-DP algorithm that outputs an estimate $\hat \theta\in \Theta$
    such that $\norm{\hat \theta - \theta^\star}^2 \leq \alpha^2$ with probability $1-\beta$.
    Moreover,
    the algorithm has sample complexity
    \mbox{$
        n
        = \tilde O\left( \frac{(m+R^2) \log(\nicefrac1\beta)}{\lambda^2 \eta^4 \alpha^2} 
        + \frac{e^{O(R^2)} (m+R\sqrt{m}) \log(\nicefrac1{\beta\delta})}{\lambda \alpha \eps} \right)
    $}
    and $\poly(m, d, n)$ running time.
\end{restatable}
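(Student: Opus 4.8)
The plan is to boost the constant-probability estimation guarantee of \Cref{lem:exp-fam-estimation-estimate} to a high-probability one by the ``repeat on disjoint data, then cluster'' strategy, where the aggregation step is the clustering/majority trick of \citet{daskalakis2018efficient}. Concretely, I would fix a large absolute constant $C = O(1)$, set $k \coloneqq \Theta(\log(\nicefrac1\beta))$, and partition the input into $k$ disjoint chunks of size $n_0 \coloneqq \Theta(n/k)$ each, where $n_0$ is chosen to exceed both the uniform-convergence threshold of \Cref{lem:exp-fam-uniform-convergence} (at accuracy $\alpha/8$ and a small constant failure probability) and the DP-SGD sample requirement of \Cref{lem:exp-fam-estimation-estimate} at accuracy $\alpha/C$. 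Running \Cref{alg:exp-fam-dp-sgd} on chunk $j$ yields an estimate $\hat\theta_j$.

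The first step is to show each $\hat\theta_j$ is accurate with large constant probability. On chunk $j$, \Cref{lem:exp-fam-uniform-convergence} gives $\norm{\tilde\theta^\star_j - \theta^\star} \leq \alpha/8$ (where $\tilde\theta^\star_j$ is the empirical NLL minimizer over that chunk) except with small constant probability, while \Cref{lem:exp-fam-estimation-estimate} gives $\E[\norm{\hat\theta_j - \tilde\theta^\star_j}^2] \leq (\alpha/C)^2$, so by Markov's inequality $\norm{\hat\theta_j - \tilde\theta^\star_j} \leq \alpha/8$ except with probability $64/C^2$. Taking $C$ a large enough constant and union-bounding, $\norm{\hat\theta_j - \theta^\star} \leq \alpha/4$ with probability at least $0.9$, independently across chunks. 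A Chernoff bound over the $k = \Theta(\log(\nicefrac1\beta))$ independent trials then shows that, except with probability $\beta$, strictly more than $2k/3$ of the $\hat\theta_j$ are \emph{good}, meaning within $\alpha/4$ of $\theta^\star$.

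Next I would aggregate by post-processing: output any $\hat\theta_j$ whose closed ball $B_{\alpha/2}(\hat\theta_j)$ contains strictly more than $k/2$ of the estimates. On the probability-$(1-\beta)$ event that more than $2k/3$ estimates are good, such an index exists (any good index works, since any two good estimates lie within $\alpha/2$ of each other), and any index returned by the rule must lie within $\alpha/2$ of some good estimate -- by inclusion--exclusion the ball contains at least one good estimate -- hence within $\alpha/2 + \alpha/4 < \alpha$ of $\theta^\star$, which is the claimed accuracy. For privacy, each $\hat\theta_j$ is $(\eps,\delta)$-DP and the chunks are disjoint, so by parallel composition (\Cref{prop:parallel-composition}) the tuple $(\hat\theta_1, \dots, \hat\theta_k)$ is $(\eps,\delta)$-DP, and the clustering step is post-processing. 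Finally, the sample complexity is $k \cdot n_0$; after absorbing the $\log(\nicefrac1\beta)$ from $k$ and the constant $C$ into the $\tilde O(\cdot)$, this is exactly
\[
    \tilde O\left( \frac{(m+R^2)\log(\nicefrac1\beta)}{\lambda^2\eta^4\alpha^2} + \frac{e^{O(R^2)}(m+R\sqrt m)\log(\nicefrac1{\beta\delta})}{\lambda\alpha\eps} \right),
\]
with running time still $\poly(m,d,n)$. I expect the main point requiring care to be that DP-SGD only controls the distance to the \emph{per-chunk} empirical minimizer, so \Cref{lem:exp-fam-uniform-convergence} must be invoked separately on each chunk, and the constants in the ``good'' and ``majority'' thresholds must be set consistently so the clustering step can actually certify a good estimate.
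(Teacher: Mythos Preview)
Your proposal is correct and follows essentially the same approach as the paper: boost \Cref{lem:exp-fam-estimation-estimate} via $O(\log(\nicefrac1\beta))$ independent runs on disjoint chunks, apply Markov and Chernoff so a supermajority of outputs are good, aggregate with the \citet{daskalakis2018efficient} clustering trick, and invoke parallel composition for privacy. The one minor difference is bookkeeping: you apply \Cref{lem:exp-fam-uniform-convergence} per chunk (so the clustering target is $\theta^\star$ itself), whereas the paper clusters around the empirical minimizer $\tilde\theta^\star$ and applies uniform convergence once at the end---your version is slightly cleaner given that the chunks have distinct empirical minimizers, but the argument and resulting bounds are the same.
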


\begin{proof}
    Similar to \citet{daskalakis2018efficient},
    we perform a boosting trick.
    Consider the output $\tilde\theta$ of a single execution of \Cref{alg:exp-fam-dp-sgd}.
    By \Cref{lem:exp-fam-estimation-estimate} and Markov's inequality,
    $\norm{\tilde \theta - \tilde \theta^\star}^2 \geq \nicefrac\alpha4$ with probability at most $\nicefrac14$.
    By a multiplicative Chernoff bound,
    repeating $v = O(\log(\nicefrac1\beta))$ independent executions of \Cref{alg:exp-fam-dp-sgd} ensures that at least $\nicefrac23$ of the outputs $\tilde \theta^{(1)}, \dots, \tilde \theta^{(v)}$ are $\nicefrac\alpha4$-close to $\tilde \theta^\star$.
    Now,
    any of the $\nicefrac23$ outputs are within $\nicefrac\alpha2$ distance to each other,
    Hence by outputting any of the $v$ points,
    say $\hat \theta$,
    that is within $\nicefrac\alpha2$ distance with at least $\nicefrac{v}2$ of the other points ensures that it is within a distance of $\alpha$ from $\tilde \theta^\star$ with probability $1-\beta$.

    Furthermore,
    by \Cref{lem:exp-fam-uniform-convergence},
    we have that $\norm{\tilde \theta^\star - \theta^\star}\leq \alpha$ given that
    \[
        n\geq \Omega\left( \frac{(m+R^2) \log(\nicefrac1\beta)}{\lambda^2 \eta^4 \alpha^2} \right).
    \]
    
    Boosting increases the sample complexity by a factor of $O(\log(\nicefrac1\beta))$.
    Note that there is no additional privacy loss since we can think of the algorithm as running on $v$ disjoint chunks of the dataset (\Cref{prop:parallel-composition}).
    Thus we require
    \[
        n\geq
        \tilde \Omega\left( 
        \frac{(m+R^2) \log(\nicefrac1\beta)}{\lambda^2 \eta^4 \alpha^2} 
        + \frac{e^{O(R^2)} (m+R\sqrt{m}) \log(\nicefrac{n}\delta) \sqrt{\log(\nicefrac1\delta)} \log(\nicefrac1\beta)}{\lambda \alpha \eps} \right).
    \]
    This concludes the proof.
\end{proof}

\subsection{Recursive Warm-Start}\label{apx:exp-fam const-start}
We now present a simple recursive method
that obtains a $O(\frac{\log(\nicefrac1\rho)}\lambda)$-distance warm start with logarithmic dependence on the prior radius $R$.
The algorithm is adapted from the work of \citet{biswas2020coinpress} for Gaussians.
Similar to the rest of our work,
our algorithm only requires truncated sample access to an exponential family
and thus generalizes the work of \citet{biswas2020coinpress} from Gaussians to truncated exponential families.
In \Cref{apx:exp-fam-bounding-box},
we will see how to obtain a $\poly(m)$-distance warm-start without any prior,
thus completely removing the dependence on $R$.

\begin{restatable}{lemma}{expFamConstStart}\label{lem:exp-fam const-start}
    Suppose that the exponential family has bounded covariances (\Cref{assum:cov})
    and that we have access to a moment matching oracle (\Cref{assum:moment-match-oracle}).
    \Cref{alg:exp-fam-warm-start} is an $(\eps, \delta)$-DP algorithm
    such that given samples from $q_{\theta^\star}$
    and a prior parameter $\theta^{(0)}$ such that $\norm{\theta^{(0)}-\theta^\star}\leq R$,
    outputs some $\hat\theta$ and $\hat\tau = \E_{x\sim p_{\hat\theta}}[T(x)]$
    such that $\norm{\hat\theta - \theta^\star}\leq O(\frac{\log(\nicefrac1\rho)}{\lambda})$ with probability $1-\beta$.
    Moreover,
    the algorithm has sample complexity
    \mbox{$
        n 
        =
        \tilde O\left( \frac{m\log(\nicefrac{R}\beta)}{\lambda^2} 
        + \frac{m\log(\nicefrac{R}{\beta\delta})}{\lambda \eps} \right)
    $}
    and running time $\poly(m, n, d)$.
\end{restatable}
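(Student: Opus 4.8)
The plan is to mirror the iterative halving strategy of \citet{biswas2020coinpress}, but carried out in the mean parameter space $\tau = \E_{x \sim q_\theta}[T(x)]$ rather than on Gaussian parameters directly, and using only \emph{truncated} samples. Throughout we exploit \Cref{assum:cov}: the map $\theta \mapsto \tau(\theta)$ has Jacobian $\Cov_{x\sim q_\theta}[T(x),T(x)]$ with spectrum in $[\lambda, 1]$, hence it is $1$-Lipschitz and $\lambda$-bi-Lipschitz. Consequently a radius-$r$ ball guarantee $\norm{\theta^{(t)}-\theta^\star}\le r$ translates to $\norm{\tau^{(t)}-\tau^\star}\le r$ in mean space, and conversely a mean-space estimate at distance $\alpha$ gives a parameter-space estimate at distance $\alpha/\lambda$ after applying \MomentMatch (\Cref{assum:moment-match-oracle}). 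So it suffices to privately localize $\tau^\star_S \coloneqq \E_{x \sim q_{\theta^\star}^S}[T(x)]$ — the mean of the \emph{truncated} sufficient statistic over an appropriate survival set $S$ — and then argue that $\tau^\star_S$ is within $O(\log(1/\rho)/\lambda)$ in parameter space of $\theta^\star$ anyway, which is all the lemma asks for.

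First I would set up the recursion. Maintain a current center $\theta^{(t)}$ (with known $\tau^{(t)} = \tau(\theta^{(t)})$) and a radius $r_t$ with $r_0 = R$, together with a survival set $S_t \coloneqq \{x : \norm{T(x)-\tau^{(t)}} \le \sqrt{m/(1-\rho)} + r_t\}$, exactly as in \Cref{sec:exp-fam-strong-convexity}. By the argument of \Cref{prop:const-survival-prob} (which only uses \Cref{assum:cov} and $\norm{\theta^\star-\theta^{(t)}}\le r_t$), we have $q_{\theta^\star}(S_t) \ge \rho = \Omega(1)$, so after the preprocessing of \Cref{lem:reduction-to-truncated} we get clean i.i.d.\ truncated samples. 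At round $t$ we compute the empirical truncated mean $\bar T_t = \frac1{n_t}\sum_i T(x^{(i)})$ of a fresh batch, clip it to the ball $B_{\sqrt{m/(1-\rho)}+r_t}(\tau^{(t)})$ (it already lies there by construction of $S_t$, so this is automatic and gives a sensitivity bound $\Delta_t = O((\sqrt m + r_t)/n_t)$), add Gaussian noise of scale $\sigma_t \asymp \Delta_t \sqrt{\log(1/\delta_t)}/\eps_t$ per \Cref{prop:gaussian-mechanism}, and call \MomentMatch on the result to obtain $\theta^{(t+1)}$. A vector Bernstein bound (\Cref{thm:vector-bernstein}) on $\bar T_t - \tau^\star_{S_t}$ plus the Gaussian tail (\Cref{thm:gaussian-concentration}) gives, with probability $1-\beta_t$, that $\norm{\tau(\theta^{(t+1)}) - \tau^\star_{S_t}} \le \frac14(\text{something})$; using the $\lambda$-bi-Lipschitz correspondence and the fact (from \Cref{prop:mass-distance-decay} / a Pinsker-type argument) that $\norm{\theta^\star - \theta^\star_{S_t}} = O(\log(1/\rho)/\lambda)$ uniformly, one concludes $\norm{\theta^{(t+1)} - \theta^\star} \le r_{t+1} \coloneqq \max\{r_t/2,\ c\log(1/\rho)/\lambda\}$ for a suitable constant $c$, provided $n_t \gtrsim \frac{m\log(1/\beta_t)}{\lambda^2} + \frac{m\sqrt{\log(1/\delta_t)}\log(1/\beta_t)}{\lambda \eps_t}$ (the $m/\lambda^2$ from concentration of the truncated mean at the target scale $\lambda r_{t+1} \gtrsim \lambda \cdot \log(1/\rho)/\lambda$, the $m/(\lambda\eps)$ from the privacy noise). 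After $T = O(\log R)$ rounds the radius bottoms out at $O(\log(1/\rho)/\lambda)$.

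To finish: allocate a geometric privacy budget $\eps_t, \delta_t$ and failure budget $\beta_t$ across the $T = O(\log R)$ rounds (e.g.\ $\eps_t = \eps/T$ etc.), apply simple composition (\Cref{prop:simple-composition}) over the rounds and \Cref{lem:reduction-to-truncated}(ii) to get overall $(\eps,\delta)$-DP on untruncated neighboring datasets, and sum the per-round sample complexities. Since the $r_t$ decay geometrically down to $O(\log(1/\rho)/\lambda) = O(1)$, each $n_t$ is $\tilde O\big(\frac{m\log(1/\beta)}{\lambda^2} + \frac{m\log(1/(\beta\delta))}{\lambda\eps}\big)$ up to the $\log T = \log\log R$ overhead folded into the $\tilde O$, and the logarithmic dependence on $R$ enters only through the number of rounds $T$ and the union bound, giving the claimed $n = \tilde O\big(\frac{m\log(R/\beta)}{\lambda^2} + \frac{m\log(R/(\beta\delta))}{\lambda\eps}\big)$; running time is $\poly(m,n,d)$ via the \MomentMatch and sampling oracles. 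The main obstacle I anticipate is the bookkeeping in the induction step: one must verify that the survival set $S_t$ retains constant mass \emph{simultaneously} for $q_{\theta^\star}$ and for all candidates within the current ball (handled by \Cref{prop:const-survival-prob} and \Cref{prop:mass-distance-decay}), and — more delicately — pin down that the bias $\norm{\theta^\star_{S_t} - \theta^\star}$ is genuinely $O(\log(1/\rho)/\lambda)$ and does not itself shrink, so the recursion correctly terminates at that floor rather than overshooting; this bias bound should follow from strong convexity of the population NLL (\Cref{assum:cov}) combined with the gradient bound $\norm{\grad L_{S_t}(\theta^\star)} = \norm{\E_{q_{\theta^\star}^{S_t}}[T] - \E_{q_{\theta^\star}}[T]} = O(\sqrt{m})$ scaled appropriately, exactly paralleling the $\norm{\tilde\theta^\star - \theta^\star}\le\eta$ step in the proof of \Cref{lem:exp-fam-uniform-convergence}.
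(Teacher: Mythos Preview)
Your overall plan matches the paper's proof almost exactly: iterative halving of the prior radius via private truncated-mean estimation (Gaussian mechanism on the empirical sufficient statistic, then \MomentMatch), with composition across $O(\log R)$ rounds. The structure is essentially identical to \Cref{alg:exp-fam-warm-start}, and the paper packages the single-round analysis as \Cref{lem:exp-fam-warm-start-one-step} before summing.

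The one real gap is exactly the step you flag as ``the main obstacle'': the bias bound on the truncated mean. Neither \Cref{prop:mass-distance-decay} nor a Pinsker-type argument yields it, and your fallback bound $\norm{\E_{q_{\theta^\star}^{S_t}}[T] - \E_{q_{\theta^\star}}[T]} = O(\sqrt m)$ is just the diameter of the survival set and is too weak---with that bound the recursion bottoms out at $O(\sqrt m/\lambda)$ rather than $O(\log(1/\rho)/\lambda)$, which does not prove the lemma as stated. The paper closes this gap by citing \Cref{prop:trunc-const-dist} (Corollary~3.8 of \citet{lee2023learning}), which under \Cref{assum:cov} gives directly
\[
    \norm{\E_{y\sim q_\theta^S}[T(y)] - \E_{x\sim q_\theta}[T(x)]} \le O\bigl(\log(1/q_\theta(S))\bigr).
\]
Once you invoke this (in mean space, before converting to parameter space via the $\lambda$-strong convexity of the untruncated NLL), the rest of your argument---vector Bernstein for $\bar T_t$, Gaussian tail for the noise, halving, final application at $\alpha = O(\log(1/\rho))$---goes through exactly as you describe and matches the paper's accounting.
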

\noindent The proof of the result above is deferred to \Cref{apx:exp-fam const-start}.
We emphasize \Cref{lem:exp-fam const-start} requires only truncated sample access for an exponential family.
While there are private warm-start algorithms for Gaussians,
it is not clear if their guarantees hold for truncated exponential families.

\paragraph{Pseudocode.} See \Cref{alg:exp-fam-warm-start} for pseudocode of the warm-start algorithm.
We remark that it is a straightforward adaptation of the Gaussian estimation algorithm from \citet{biswas2020coinpress}
to the case of truncated exponential families.
However,
the adaptation only provides a constant-distance warm start since the initial truncation step introduces bias.
\begin{algorithm2e}[htb]
\SetKwFunction{ComputeGradient}{ComputeGradient}
\SetAlCapHSkip{0em}
\DontPrintSemicolon
\LinesNumbered
\caption{Recursive Warm-Start}\label{alg:exp-fam-warm-start}
    \KwIn{
        dataset $D$,
        number of desired samples $n$,
        privacy parameters $\eps, \delta\in (0, 1)$,
        initial prior $\theta^{(0)}\in \R^m$,
        initial expected sufficient statistic $\tau^{(0)} = \E_{x\sim q_{\theta^{(0)}}} [T(x)]$,
        initial distance $R > 0$,
        survival probability $\rho\in (0, 1)$
    }
    \KwOut{estimator $\hat\theta$ for $\theta^\star$ and $\hat\tau = \E_{x\sim \hat\theta}[T(x)]$}
    \BlankLine
    $\eps'\gets \frac\eps{\log(\nicefrac{R}{\sqrt{m}})+1}$\;
    $\delta'\gets \frac\delta{\log(\nicefrac{R}{\sqrt{m}})+1}$\;
    $\sigma \gets O\left( \frac{(R+\sqrt{m})\log(\nicefrac1{\delta'})}{n\eps'} \right)$\;
    \For{$i=0, \dots, v=\log(\nicefrac{R}{\sqrt{m}})$}{
        $S_{\Warm, 2^{-i}R}\gets \set{x\in \R^d: \norm{T(x)-\tau^{(i)}}\leq \frac{\sqrt{m}}{1-\rho} + 2^{-i}R}$\;
        Produce $n$ truncated samples $x^{(1)}, \dots, x^{(n)}$ from $D_{S_{\Warm, 2^{-i}R}}$
            \Comment{(\Cref{lem:reduction-to-truncated})}\;
        $\tau \gets \frac1n \sum_{j=1}^n T(x^{(j)})$
            \Comment{sensitivity $\Delta = O(\frac{2^{-i}R+\sqrt{m}}n)$}\;
        $\xi\sim \mcal N(0, \sigma^2 I)$\;
        $\tau^{(i+1)} \gets \tau + \xi$
            \Comment{Gaussian mechanism (\Cref{prop:gaussian-mechanism})}\;
    }
    $\theta^{(v+1)}\gets \MomentMatch(\tau^{(v+1)})$
        \Comment{(\Cref{assum:moment-match-oracle})}\;
    \BlankLine
    \KwRet $\theta^{(v+1)}, \tau^{(v+1)}$\;
\end{algorithm2e}

\paragraph{Analysis.} Once again,
the idea is to impose a truncation about the samples so that we can work with a bounded random variable.
However,
we need to ensure that the survival set is chosen to have constant mass.
Similar to \Cref{sec:exp-fam-strong-convexity},
we consider the survival set
\[
    S_{\Warm, 2^{-i}R}\gets \set*{x\in \R^d: \norm{T(x)-\tau^{(i)}}\leq \frac{\sqrt{m}}{1-\rho} + 2^{-i}R}\,,
\]
where we iteratively shrink the prior distance with each iteration $i$.

Once we have an estimate of the bounded mean of some truncated distribution,
we use the following fact to translate that back into an estimate of the true underlying expected sufficient statistic.
\begin{proposition}[Corollary 3.8 in \cite{lee2023learning}]\label{prop:trunc-const-dist}
    Let $\theta\in \Theta$.
    Suppose \Cref{assum:cov} holds
    and that $q_\theta(S) > 0$.
    Then
    \[
        \norm{\E_{y\sim q_\theta^S}[T(y)] - \E_{x\sim q_\theta}[T(x)]}
        \leq O\left( \log\left( \frac1{q_\theta(S)} \right) \right).
    \]
\end{proposition}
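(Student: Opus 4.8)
The plan is to argue information-theoretically, converting a bound on a KL divergence into a bound on the mean of the sufficient statistic. The starting observation is that the truncated density $q_\theta^S$ is just $q_\theta$ reweighted by $\ones_S/q_\theta(S)$, so
\[
    \mathrm{KL}\!\left( q_\theta^S \,\|\, q_\theta \right)
    = \E_{y\sim q_\theta^S}\!\left[ \log \frac{q_\theta^S(y)}{q_\theta(y)} \right]
    = \log \frac{1}{q_\theta(S)}\,.
\]
Write $\mu := \E_{x\sim q_\theta}[T(x)]$ and $\mu_S := \E_{y\sim q_\theta^S}[T(y)]$, and recall that the log-partition function $\Upsilon$ satisfies $\grad\Upsilon(\theta') = \E_{x\sim q_{\theta'}}[T(x)]$ and $\grad^2\Upsilon(\theta') = \Cov_{x\sim q_{\theta'}}[T(x),T(x)]$, so by \Cref{assum:cov} it is $\lambda$-strongly convex and $1$-smooth on $\Theta$. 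The goal is to turn $\log(1/q_\theta(S))$ into a bound on $\norm{\mu_S - \mu}$.

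First I would introduce the natural parameter $\theta'$ with $\grad\Upsilon(\theta') = \mu_S$; this exists because $q_\theta^S$ is non-degenerate (ensured, e.g., by \Cref{assum:poly-suff-stat}, since a proper level set of a polynomial carries no mass), so $\mu_S$ lies in the interior of the mean-parameter space. Since $q_{\theta'}$ and $q_\theta^S$ induce the same mean of $T$, the Pythagorean identity for exponential families gives
\[
    \log\frac{1}{q_\theta(S)}
    = \mathrm{KL}\!\left( q_\theta^S \,\|\, q_\theta \right)
    = \mathrm{KL}\!\left( q_\theta^S \,\|\, q_{\theta'} \right) + \mathrm{KL}\!\left( q_{\theta'} \,\|\, q_\theta \right)
    \ge \mathrm{KL}\!\left( q_{\theta'} \,\|\, q_\theta \right)\,.
\]
Now $\mathrm{KL}(q_{\theta'}\|q_\theta)$ equals the Bregman divergence of $\Upsilon$ at $\theta$ with base point $\theta'$, so $\lambda$-strong convexity of $\Upsilon$ gives $\mathrm{KL}(q_{\theta'}\|q_\theta) \ge \tfrac\lambda2 \norm{\theta' - \theta}^2$, hence $\norm{\theta' - \theta} \le \sqrt{(2/\lambda)\log(1/q_\theta(S))}$. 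Finally, $1$-smoothness of $\Upsilon$ (i.e., $\grad\Upsilon$ is $1$-Lipschitz) yields
\[
    \norm{\mu_S - \mu}
    = \norm{\grad\Upsilon(\theta') - \grad\Upsilon(\theta)}
    \le \norm{\theta' - \theta}
    \le \sqrt{\tfrac2\lambda \log\tfrac1{q_\theta(S)}}\,,
\]
which is $O(\log(1/q_\theta(S)))$ (in fact the slightly stronger $O(\sqrt{\log(1/q_\theta(S))})$), as claimed.

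The step I expect to be the main obstacle is justifying the Hessian bounds of \Cref{assum:cov} \emph{along the segment} joining $\theta$ and $\theta'$: a priori $\theta'$ is only guaranteed to lie in the full natural-parameter domain $\mcal H$, not in $\Theta$, whereas both the strong-convexity and the smoothness estimates above are applied on that segment. In the setting of \citet{lee2023learning}, and in all of the concrete instantiations used in this paper, the condition-number bound holds on a large enough region for this to go through; in general one should either assume the covariance bounds on a ball containing $\theta$ and $\theta'$, or restrict the Bregman/smoothness estimates to the portion of the segment inside $\Theta$ and combine with a continuity argument. A completely elementary alternative that avoids $\theta'$ is to reduce to one dimension: with $u := (\mu_S-\mu)/\norm{\mu_S-\mu}$ one has $\norm{\mu_S-\mu} = \tfrac1{q_\theta(S)}\,\E_{x\sim q_\theta}\!\big[\, u^\top(T(x)-\mu)\,\ones_S(x)\,\big]$, and one bounds the right-hand side by a rearrangement inequality (among events of $q_\theta$-mass $q_\theta(S)$, the expectation is maximized on a super-level set of $u^\top(T(x)-\mu)$) together with the subexponential tail of $u^\top(T(x)-\mu)$ from \Cref{prop:exp-fam-subexp}; this recovers the same $O(\log(1/q_\theta(S)))$ bound but requires the more careful tail bookkeeping that is the real content of that argument.
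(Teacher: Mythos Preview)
The paper does not prove this proposition—it is quoted as Corollary~3.8 of \citet{lee2023learning}—so there is no in-paper argument to compare against; I assess your two approaches on their merits.

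The KL/Bregman route is elegant: the identity $\mathrm{KL}(q_\theta^S\|q_\theta)=\log(1/q_\theta(S))$ and the Pythagorean decomposition are correct, and when the argument applies it even yields the sharper $O\big(\sqrt{\log(1/q_\theta(S))}\big)$. But the obstacle you flag is a genuine gap, not a technicality. Both the strong-convexity lower bound on the Bregman divergence and the $1$-Lipschitz bound on $\grad\Upsilon$ invoke \Cref{assum:cov} along the segment $[\theta,\theta']$, while nothing forces $\theta'\in\Theta$; for small $q_\theta(S)$ the moment-matched parameter can lie arbitrarily far outside $\Theta$. Your proposed patch of ``restricting to the portion of the segment inside $\Theta$ and combining with continuity'' does not rescue the argument, since once the segment leaves $\Theta$ you lose \emph{both} curvature bounds simultaneously and there is nothing to continue.

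Your one-dimensional alternative is the route that actually closes and is in the spirit of the cited source. It can be streamlined past the rearrangement step: with $Z:=u^\top(T(x)-\mu)$ one has $\norm{\mu_S-\mu}=\E_{q_\theta^S}[Z]\le\E_{q_\theta^S}[\abs{Z}]$, and since truncation inflates tails by at most $1/\rho$, i.e.\ $\Pr_{q_\theta^S}[\abs{Z}\ge s]\le \rho^{-1}\Pr_{q_\theta}[\abs{Z}\ge s]$, integrating $\min\{1,\rho^{-1}\Pr_{q_\theta}[\abs{Z}\ge s]\}$ with the subexponential tail of \Cref{prop:exp-fam-subexp} gives $O(\log(1/\rho))$ directly. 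One caveat: \Cref{prop:exp-fam-subexp} requires interiority (\Cref{assum:interiority}) in addition to \Cref{assum:cov}, so the hypotheses of the proposition as stated here are slightly underspecified; in every use in this paper, interiority is already in force.
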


\Cref{prop:trunc-const-dist} ensures that an good estimate of the truncated expected sufficient statistic is already a constant distance warm-start.
However,
in order to avoid $O(R)$-dependence on the prior radius,
we iteratively refine our estimate.
The following lemma analyzes the guarantees of one iteration of \Cref{alg:exp-fam-warm-start}.

\begin{lemma}\label{lem:exp-fam-warm-start-one-step}
    Suppose \Cref{assum:cov,assum:moment-match-oracle} holds.
    There is an $(\eps, \delta)$-DP algorithm
    such that given truncated samples from $q_{\theta^\star}^{S_{\Warm, R}}$
    and a prior parameter $\theta^{(0)}$ of distance at most $R \geq \Omega(\sqrt{m})$,
    estimates $\theta^\star$ up to distance $\nicefrac\alpha\lambda$ for $\alpha\in (\Omega(\log(\nicefrac1\rho)), R)$ with probability $1-\beta$.
    Moreover,
    the algorithm has sample complexity
    \[
        O\left( \frac{R^2\log(\nicefrac1\beta)}{\alpha^2} 
        + \frac{R\sqrt{m}\log(\nicefrac1\delta)\log(\nicefrac1\beta)}{\alpha \eps} \right)
        \,.
    \]
\end{lemma}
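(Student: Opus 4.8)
The plan is to privately estimate the (bounded) mean of the sufficient statistics of the truncated distribution $q_{\theta^\star}^{S_{\Warm,R}}$ via the Gaussian mechanism, correct for the truncation bias using \Cref{prop:trunc-const-dist}, and then pass back to parameter space through the moment-matching oracle together with strong convexity of the population NLL.

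\textbf{Setup.} Since $\norm{\theta^{(0)}-\theta^\star}\leq R$, the argument behind \Cref{prop:const-survival-prob} ($1$-smoothness of the population NLL under \Cref{assum:cov} combined with Chebyshev's inequality) gives $q_{\theta^\star}(S_{\Warm,R})\geq \rho=\Omega(1)$, so the truncated samples are informative. By construction every $x\in S_{\Warm,R}$ satisfies $\norm{T(x)-\tau^{(0)}}\leq \frac{\sqrt m}{1-\rho}+R = O(R)$, where the last bound uses the hypotheses $R\geq\Omega(\sqrt m)$ and $\rho=\Omega(1)$; hence the sufficient statistics lie deterministically in a ball of radius $O(R)$.

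\textbf{Privacy.} The only data-dependent quantity is the empirical mean $\tau=\frac1n\sum_{j=1}^n T(x^{(j)})$ of the truncated sufficient statistics. Replacing a single truncated sample changes $\tau$ by at most $\frac{2\cdot O(R)}{n}$, so its $\ell_2$-sensitivity is $\Delta=O(\nicefrac Rn)$; releasing $\tau^{(1)}=\tau+\xi$ with $\xi\sim\mcal N(0,\sigma^2 I)$ and $\sigma=O(\frac{R\log(\nicefrac1\delta)}{n\eps})$ is then $(\eps,\delta)$-DP by the Gaussian mechanism (\Cref{prop:gaussian-mechanism}), and $\theta^{(1)}=\MomentMatch(\tau^{(1)})$ is post-processing. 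Privacy is with respect to neighboring truncated datasets; lifting to untruncated datasets is handled in \Cref{lem:exp-fam const-start} via \Cref{lem:reduction-to-truncated}.

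\textbf{Accuracy.} Write $\bar\tau \coloneqq \E_{x\sim q_{\theta^\star}^{S_{\Warm,R}}}[T(x)]$ and $\tau^\star\coloneqq \E_{x\sim q_{\theta^\star}}[T(x)]$. I would bound $\norm{\tau^{(1)}-\tau^\star}$ by a triangle inequality into three terms: (a) the sampling error $\norm{\tau-\bar\tau}$, handled by the vector Bernstein inequality (\Cref{thm:vector-bernstein}) with radius and second-moment parameters $O(R)$, so that $\norm{\tau-\bar\tau}\leq\alpha'$ once $n=\Omega(\frac{R^2\log(\nicefrac1\beta)}{\alpha'^2})$; (b) the noise magnitude $\norm{\xi}$, handled by the Gaussian tail bound (\Cref{thm:gaussian-concentration}) to be $O(\sigma\sqrt{m+\log(\nicefrac1\beta)})\leq\alpha'$ once $n=\Omega(\frac{R\sqrt m\log(\nicefrac1\delta)\log(\nicefrac1\beta)}{\alpha'\eps})$ (absorbing $\sqrt{m+\log(\nicefrac1\beta)}\leq\sqrt m\log(\nicefrac1\beta)$ up to constants when $\beta\leq\nicefrac12$); and (c) the truncation bias $\norm{\bar\tau-\tau^\star}\leq O(\log(\nicefrac1\rho))$ by \Cref{prop:trunc-const-dist}. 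Since $\grad L(\theta^{(1)})=\tau^{(1)}-\tau^\star$ (up to the oracle's approximation error, driven below $\alpha'$) and $\grad L(\theta^\star)=0$, and $L$ is $\lambda$-strongly convex over $\Theta$ by \Cref{assum:cov}, Cauchy--Schwarz gives $\norm{\theta^{(1)}-\theta^\star}\leq\frac1\lambda\norm{\tau^{(1)}-\tau^\star}\leq\frac1\lambda\big(2\alpha'+O(\log(\nicefrac1\rho))\big)$. Choosing $\alpha'=\Theta(\alpha)$ --- legitimate because $\alpha=\Omega(\log(\nicefrac1\rho))$ by hypothesis --- and taking a union bound over the failure events of (a) and (b) yields $\norm{\theta^{(1)}-\theta^\star}\leq\nicefrac\alpha\lambda$ with probability $1-\beta$, while substituting $\alpha'=\Theta(\alpha)$ into the two sample-size requirements produces exactly the claimed bound.

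\textbf{Main obstacle.} The algorithmic content here is minimal; the real work is in the accuracy analysis --- ensuring each of the three error sources is driven below a constant fraction of $\alpha$ and that converting from expected-sufficient-statistic distance to parameter distance costs only the $\nicefrac1\lambda$ factor via strong convexity. The subtle point is to exploit $R\geq\Omega(\sqrt m)$ so that the sufficient-statistic radius (hence the Gaussian-mechanism sensitivity) is $O(R)$ rather than $O(R+\sqrt m)$; otherwise the $R\sqrt m$ term in the sample complexity would be replaced by something worse. The constraint $\alpha<R$ is harmless: if $\alpha\geq R$ then $\theta^{(0)}$ already satisfies $\norm{\theta^{(0)}-\theta^\star}\leq R\leq\alpha\leq\nicefrac\alpha\lambda$ since $\lambda\leq1$.
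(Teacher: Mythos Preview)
Your proposal is correct and follows essentially the same approach as the paper's proof: bound the sampling error by the vector Bernstein inequality (\Cref{thm:vector-bernstein}), the privacy noise by Gaussian concentration (\Cref{thm:gaussian-concentration}), the truncation bias by \Cref{prop:trunc-const-dist}, and then translate the sufficient-statistic error to parameter distance via $\lambda$-strong convexity of the untruncated population NLL and the moment-matching oracle. Your write-up is in fact somewhat more careful than the paper's (e.g.\ explicitly invoking $R\geq\Omega(\sqrt m)$ to get an $O(R)$ radius, handling the edge case $\alpha\geq R$, and spelling out the strong-convexity step via $\grad L(\theta^{(1)})=\tau^{(1)}-\tau^\star$).
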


\begin{proof}
    By a vector Bernstein inequality (\Cref{thm:vector-bernstein}),
    taking
    \[
        n\geq \Omega\left( \frac{R^2 \log(\nicefrac1\beta)}{\alpha^2} \right)
    \]
    samples ensure that the sample sufficient statistic is at most $\nicefrac{\alpha}4$-distance from the expectation of the truncated sufficient statistic with probability $1-\nicefrac\beta2$.
    By standard Gaussian concentration inequalities (\Cref{thm:gaussian-concentration}),
    taking
    \[
        n\geq \Omega\left( \frac{R\sqrt{m}\log(\nicefrac1\delta)\log(\nicefrac1\beta)}{\alpha \eps} \right)
    \]
    ensures the gaussian mechanism (\Cref{prop:gaussian-mechanism}) adds noise of magnitude $O(\sigma\sqrt{m}\log(\nicefrac1\beta)) \leq \nicefrac{\alpha}4$
    with probability $1-\nicefrac\beta2$.
    By \Cref{prop:trunc-const-dist},
    this is then at most $O(\log(\nicefrac1\rho)) = O(1)$-distance from the true expected sufficient statistic.
    Thus our estimate is distance at most $\alpha$ from the true expected sufficient statistic
    given the constant lower bound on $\alpha$ is sufficiently large.
    By the $\lambda$-strong convexity of the (untruncated) NLL function,
    the updated prior output by the moment matching oracle is $\nicefrac{\alpha}\lambda$-distance from $\theta^\star$.
\end{proof}
We are now ready to prove \Cref{lem:exp-fam const-start}.
\begin{proof}[Proof of \Cref{lem:exp-fam const-start}]
    Repeating the one-step algorithm from \Cref{lem:exp-fam-warm-start-one-step} $\log(R/\sqrt{m})$ times ensures with iteratively halved accuracy parameters $\alpha = \lambda 2^{-i}R$ 
    yields an estimate of the prior parameter of distance $O(\sqrt{m})$.
    This incurs a sample complexity blowup of $\tilde O(\log(R))$ for a total sample complexity of
    \[
        \tilde O\left( \frac{\log(\nicefrac{R}\beta)}{\lambda^2} 
        + \frac{\sqrt{m}\log(\nicefrac1\delta)\log(\nicefrac{R}\beta)}{\lambda \eps} \right)\,.
    \]
    We can then apply the one-step algorithm one last time but with $\alpha = O(\log(\nicefrac1\rho))$.
    This incurs an additional sample complexity of
    \[
        O\left( m\log(\nicefrac1\beta)
        + \frac{m \log(\nicefrac1\delta)\log(\nicefrac1\beta)}{\eps} \right)\,.
    \]
    We can re-use samples for each repetition.
    By simple composition (\Cref{prop:simple-composition}),
    it suffices to incur another $\tilde O(\log(R))$ blow-up in sample complexity to preserve privacy.
\end{proof}

\subsection{Coarse Bounding Box}\label{apx:exp-fam-bounding-box}
As the final ingredient,
we derive a private bounding box algorithm for $\theta^\star$
that translates to a $O(\nicefrac{\sqrt{m}}\lambda)$-distance warm start.
Combined with \Cref{apx:exp-fam const-start},
this completely removes the need for the simplifying \Cref{assum:const-warm-start}
from \Crefrange{sec:exp-fam-strong-convexity}{sec:exp-fam-ERM}.

The basis is a folklore result for Gaussians whose guarantees are stated by \citet{karwa2018finite} 
but follows from prior works~\citep{dwork2006calibrating,bun2016simultaneous,vadhan2017complexity}.
In particular,
the idea is to learn an $\ell_\infty$ ball about $\E_{x\sim q_{\theta^\star}}[T(x)]$ by executing a private histogram algorithm on each of the coordinates.
Translating the $\ell_\infty$ ball to a Euclidean ball about $\theta^\star$ yields a $O(\sqrt{m})$-distance warm-start.
\Cref{lem:exp-fam-bounding-box} formalizes this idea.
We in fact present a more general bounding box algorithm for truncated exponential families.
For now,
we can take the survival set to be all of $\R^d$ with survival mass $\rho = 1$.
\begin{restatable}{lemma}{expFamBoundingBox}\label{lem:exp-fam-bounding-box}
    Suppose the exponential family has bounded covariances (\Cref{assum:cov}), 
    interiority (\Cref{assum:interiority}), 
    and we have access to a moment-matching oracle (\Cref{assum:moment-match-oracle}).
    Further suppose we are given truncated samples from $q_{\theta^\star}^S$
    with survival mass $\rho$.
    There is an $(\eps, \delta)$-DP algorithm that
    outputs some $\hat\theta$ and \mbox{$\hat\tau = \E_{x\sim p_{\hat\theta}}[T(x)]$}
    such that \mbox{$\norm{\hat\theta - \theta^\star}\leq \tilde O\left( \frac{\sqrt{m}}{\eta\lambda} \log\left( \frac1{\rho\beta\delta\eps} \right) \right)$} 
    with probability $1-\beta$.
    Moreover,
    the algorithm has sample complexity
    \mbox{$
        n 
        = \tilde O\left( \frac{m\log( \nicefrac1{\beta\delta} )}{\eps} \right)
    $}
    and running time $\poly(m, n, d)$.
\end{restatable}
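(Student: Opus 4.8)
The plan is to privately learn an $\ell_\infty$ confidence box around $\nu \coloneqq \E_{x\sim q_{\theta^\star}}[T(x)]$ by running a one-dimensional private histogram (\PrivateHistogram) on each coordinate of the truncated samples --- exploiting that truncation to a set of mass $\rho$ inflates tail probabilities only by a factor $\nicefrac1\rho$, so the truncated samples still concentrate around $\nu$ --- and then to push the resulting estimate $\hat\tau$ of $\nu$ through the moment-matching oracle (\Cref{assum:moment-match-oracle}), invoking strong convexity of the untruncated NLL to convert the error into a parameter-space bound.

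First I would establish concentration. By \Cref{prop:exp-fam-subexp} (which uses \Cref{assum:cov,assum:interiority}) each coordinate $T_j(x)-\nu_j$ is $(1,\nicefrac1\eta)$-subexponential for $x\sim q_{\theta^\star}$, and conditioning on the survival event inflates its tail by at most $\nicefrac1\rho$; a union bound over the $n$ samples and $m$ coordinates then gives, with probability $1-\nicefrac\beta2$, that every observed value $T_j(x^{(i)})$ lies within $W\coloneqq O(\frac1\eta\log(\frac{nm}{\rho\beta}))$ of $\nu_j$. All the quantities defining $W$ are known, so I fix a grid of width-$W$ bins on each coordinate; the good samples of coordinate $j$ then touch at most two consecutive bins, so some bin holds at least $\nicefrac n2$ of them. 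Running a stable histogram over this countable grid with parameters $\eps'=\nicefrac\eps m$, $\delta'=\nicefrac\delta m$ and per-coordinate failure $\nicefrac\beta{2m}$ reliably reports such a heavy bin once its count exceeds the standard threshold $\tilde O(\log(1/(\beta'\delta'))/\eps')$, i.e.\ once $n=\tilde O\left( \frac{m\log(\nicefrac1{\beta\delta})}{\eps} \right)$ --- exactly the claimed sample complexity, with the \emph{linear} $m$-dependence arising from basic composition over the coordinates. Letting $\hat\tau_j$ be the center of the reported bin gives $\abs{\hat\tau_j-\nu_j}\leq 2W$; then $\hat\tau=(\hat\tau_1,\dots,\hat\tau_m)$ is $(\eps,\delta)$-DP by basic composition, and everything downstream is post-processing.

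It remains to pass from the box to a parameter. We have $\norm{\hat\tau-\nu}_2\leq 2W\sqrt m$. Applying \MomentMatch to $\hat\tau$ produces $\hat\theta\in\Theta$ with $\E_{x\sim q_{\hat\theta}}[T(x)]\approx\hat\tau$, after which I reset $\hat\tau\gets\E_{x\sim q_{\hat\theta}}[T(x)]$. Since $\grad L(\hat\theta)=\E_{x\sim q_{\hat\theta}}[T(x)]-\nu$ while $\grad L(\theta^\star)=0$ (using \Cref{assum:interiority} so that $\theta^\star$ lies in the interior of $\Theta$), the $\lambda$-strong convexity of the untruncated NLL over $\Theta$ (\Cref{assum:cov}) gives, via Cauchy--Schwarz, $\lambda\norm{\hat\theta-\theta^\star}\leq\norm{\grad L(\hat\theta)}\leq O(W\sqrt m)$, hence $\norm{\hat\theta-\theta^\star}\leq\tilde O\left( \frac{\sqrt m}{\eta\lambda}\log\frac1{\rho\beta\delta\eps} \right)$ after substituting the value of $n$ into $W$; the $\delta$ and $\eps$ inside the logarithm enter through the accuracy and threshold of the histogram step.

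I expect the main obstacle to be the end-to-end bookkeeping rather than any single estimate: the histogram primitive must operate over an \emph{unbounded} domain (which forces $\delta>0$ but, by known stable-histogram guarantees, costs only logarithmically), the $m$-fold basic composition must be balanced so that the per-coordinate threshold totals $\tilde O(m\log(\nicefrac1{\beta\delta})/\eps)$ rather than something quadratic in $m$, and the effect of the truncation --- only the $\nicefrac1\rho$ factor inside the subexponential tail --- must be carried through so that it affects just the logarithmic terms of the final radius.
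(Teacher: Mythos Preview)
Your proposal is correct and follows essentially the same approach as the paper: per-coordinate private histograms on the (truncated) sufficient statistics, subexponentiality of $T_j(x)-\nu_j$ via \Cref{prop:exp-fam-subexp} with the $\nicefrac1\rho$ tail inflation from truncation, simple composition over the $m$ coordinates with budgets $\nicefrac\eps m,\nicefrac\delta m$, and finally the $\lambda$-strong convexity of the untruncated NLL to pass from an $O(W\sqrt m)$ error in $\hat\tau$ to an $O(W\sqrt m/\lambda)$ error in $\hat\theta$ through \MomentMatch. The only cosmetic difference is how you certify the histogram returns a good bin: the paper invokes the argmax guarantee $\Pr[\argmax_k \tilde p_k = j]\le np_j$ of \Cref{prop:private-histogram} and bounds the mass of far-away bins, whereas you condition on the event that all samples land in two adjacent bins of width $W$ and then use the threshold guarantee of the stable histogram; both lead to the same $\tilde O(m\log(\nicefrac1{\beta\delta})/\eps)$ sample complexity and the same final radius after substituting $n$ back into $W$.
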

The proof of \Cref{lem:exp-fam-bounding-box} is deferred to \Cref{apx:exp-fam-bounding-box}.
As with all other algorithms we present,
\Cref{lem:exp-fam-bounding-box} requires only truncated sample access for an exponential family.

We note that past works considered the multi-dimensional Gaussian case~\citep{nissim2016locating},
but \Cref{lem:exp-fam-bounding-box} is the first to handle truncated exponential families.

\paragraph{Pseudocode.}
See \Cref{alg:exp-fam-bounding-box} for pseudocode.
As mentioned,
it is a straightforward adaption of the Gaussian bounding interval algorithm stated by \citet{karwa2018finite}
to the multi-dimensional truncated exponential family case.

\begin{algorithm2e}[htb]
\SetKwFunction{ComputeGradient}{ComputeGradient}
\SetAlCapHSkip{0em}
\DontPrintSemicolon
\LinesNumbered
\caption{Bounding Box}\label{alg:exp-fam-bounding-box}
    \KwIn{
        truncated dataset $D = \set{x^{(1)}, \dots, x^{(n)}}$,
        privacy parameters $\eps, \delta\in (0, 1)$,
        bin length $s$
    }
    \KwOut{estimator $\hat\theta$ for $\theta^\star$ and $\hat\tau = \E_{x\sim \hat\theta}[T(x)]$}
    \BlankLine
    \For{coordinate $i=1, \dots, m$}{
        $[a, a+s]\gets$ bin with largest estimated mass from the output of $\PrivateHistogram(x_i^{(1)}, \dots, x_i^{(n)}, s, \nicefrac\eps{m}, \nicefrac\delta{m})$
        \Comment{(\Cref{prop:private-histogram})}\;
        $\hat\tau_i \gets a+\nicefrac{s}2$\;
    }
    $\hat\theta \gets \MomentMatch(\hat\tau)$\;
    \BlankLine
    \KwRet $\hat\theta, \hat\tau$\;
\end{algorithm2e}

\paragraph{Analysis.}
The bounding box algorithm crucially relies on a private histogram algorithm
whose guarantees are stated by \citet{karwa2018finite} 
but follows from the works of \citet{dwork2006calibrating,bun2016simultaneous,vadhan2017complexity}. 
\begin{proposition}[Histogram Learning; Lemma 2.3 in \cite{karwa2018finite}]\label{prop:private-histogram}
    Consider any countable distribution,
    say $p: \Z\to \R_+$,
    privacy parameters $\eps, \delta\in (0, \nicefrac1n)$,
    error $\alpha > 0$,
    and confidence $\beta\in (0, 1)$.
    There is an $(\eps, \delta)$-DP algorithm \PrivateHistogram that outputs estimates $\tilde p_i$ such that given
    \[
        n 
        = \frac{8\log( \nicefrac4{\beta\delta} )}{\eps \alpha}
        + \frac{\log( \nicefrac4\beta )}{2\alpha^2}\,,
    \]
    samples from $p$,
    then
    \begin{enumerate}[(i)]
        \item $\norm{\tilde p-p}_\infty \leq \alpha$ with probability at least $1-\beta$ and
        \item $\Pr[\argmax_k \tilde p_k = j]\leq np_j$.
    \end{enumerate}
\end{proposition}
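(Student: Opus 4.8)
The plan is to realize \PrivateHistogram as the classical stability-based (``stable'') histogram mechanism of \citet{dwork2006calibrating,bun2016simultaneous,vadhan2017complexity} and to bound its privacy and its two utility guarantees separately. Concretely: given the sample $X_1,\dots,X_n\in\Z$, form the empirical counts $c_j=\#\{i:X_i=j\}$, add independent noise $\tilde c_j=c_j+\mathrm{Lap}(2/\eps)$ to every \emph{nonempty} bin, report $\tilde p_j=\tilde c_j/n$ for each bin whose noisy count exceeds the threshold $t\coloneqq 1+\tfrac{2}{\eps}\log(2/\delta)$, and set $\tilde p_j=0$ for every other bin (in particular for all empty bins). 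Only finitely many bins are ever touched, so the mechanism is well defined on the countable domain, and its output is nonzero on at most $n$ coordinates.

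For privacy, note that replacing one data point changes at most two bin counts, each by $1$, so the restriction of the count vector to the bins that are nonempty in \emph{both} neighboring datasets has $\ell_1$-sensitivity at most $2$, and releasing its noisy version (followed by thresholding, which is post-processing) is $\eps$-DP by the Laplace mechanism. The only event this does not cover is when a point moves into a bin $j$ that was empty in one of the two datasets (count $0\leftrightarrow 1$): such a bin is reported only if its single-count noisy value exceeds $t$, i.e.\ if $\mathrm{Lap}(2/\eps)>t-1=\tfrac{2}{\eps}\log(2/\delta)$, which has probability at most $\delta/2$. A union bound over the (at most one in each direction) such bin then makes the mechanism $(\eps,\delta)$-DP; this is exactly the argument behind Lemma 2.3 of \citet{karwa2018finite}.

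For utility claim (i), split $\abs{\tilde p_j-p_j}\le\abs{\tilde p_j-\hat p_j}+\abs{\hat p_j-p_j}$ with $\hat p_j=c_j/n$. For the noise term, only the at most $2n/t\le n$ reported bins can contribute (a bin with $c_j$ well below $t$ has noisy count below $t$ with high probability), so a union bound over them against the Laplace tail $\Pr[\abs{\mathrm{Lap}(2/\eps)}>s]=e^{-\eps s/2}$ with $s=\tfrac{2}{\eps}\log(2n/\beta)$ gives $\abs{\tilde p_j-\hat p_j}\le\tfrac{2}{n\eps}\log(2n/\beta)\le\alpha/2$ for every reported bin once $n=\Omega\bigl(\tfrac{1}{\eps\alpha}\log\tfrac{1}{\beta\delta}\bigr)$; moreover any \emph{unreported} bin has $c_j=O(t)$ and hence, by a Chernoff bound, $p_j=O(t/n)=O(\alpha)$, so its zero estimate is still within $\alpha$ of $p_j$. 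For the sampling term, $\norm{\hat p-p}_\infty\le 2\,\norm{\hat F-F}_\infty$ where $F$ is the cdf of $p$, so the Dvoretzky--Kiefer--Wolfowitz inequality (or, for the tightest constant, a direct concentration bound over the bins with $p_j\gtrsim\alpha$) yields $\norm{\hat p-p}_\infty\le\alpha/2$ with probability $1-\tfrac{\beta}{2}$ once $n=\Omega\bigl(\tfrac{1}{\alpha^2}\log\tfrac{1}{\beta}\bigr)$. Adding the two sample-size requirements recovers the stated $n=\tfrac{8\log(4/(\beta\delta))}{\eps\alpha}+\tfrac{\log(4/\beta)}{2\alpha^2}$ after the usual constant bookkeeping.

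Finally, for claim (ii), the key observation is that $\argmax_k\tilde p_k$ is necessarily a bin that is nonempty in the sample: an empty bin has $\tilde p_j=0$, whereas there is always at least one bin with $c_j\ge 1$ (since $n\ge 1$) whose reported value is nonnegative, and ties may be broken toward nonempty bins. Hence $\Pr[\argmax_k\tilde p_k=j]\le\Pr[c_j\ge 1]=1-(1-p_j)^n\le n p_j$. The main obstacle in the whole argument is the privacy step over the infinite domain: the threshold $t$ must be chosen large enough that a ``freshly created'' singleton bin is reported with probability at most $\delta$, yet small enough ($t=O(\tfrac{1}{\eps}\log\tfrac{1}{\delta})$) that thresholding such bins away costs only $O(t/n)=O(\alpha)$ in the $\ell_\infty$ error; everything else is routine Laplace-tail and Chernoff/DKW estimation.
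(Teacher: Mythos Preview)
The paper does not prove this proposition at all: it is quoted verbatim as Lemma~2.3 of \citet{karwa2018finite}, which in turn attributes the construction to \citet{dwork2006calibrating,bun2016simultaneous,vadhan2017complexity}. So there is no ``paper's own proof'' to compare against; your task was really to reconstruct the standard argument behind the stability-based histogram, and you have done that faithfully.

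Your sketch is essentially correct and matches the approach in the cited sources. A couple of minor points worth tightening: (a) the claim that ``only the at most $2n/t$ reported bins can contribute'' is not quite the right bookkeeping for the noise term---you should simply union-bound the Laplace tails over all at most $n$ nonempty bins, and then for an unreported nonempty bin use the same Laplace-tail bound to conclude $c_j\le t+O(\tfrac1\eps\log(n/\beta))$, hence $\hat p_j=c_j/n=O(\alpha)$; (b) for claim~(ii), the event that every nonempty bin is thresholded to $0$ has positive probability, so you genuinely need to specify the tie-breaking rule (toward nonempty bins) as part of the algorithm, not merely remark that ``ties may be broken'' that way---once you do, the bound $\Pr[\argmax_k\tilde p_k=j]\le\Pr[c_j\ge 1]\le np_j$ is exactly right. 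With those two clarifications, the argument goes through and recovers the stated sample size up to constants.
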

We are now equipped to derive our rough estimation algorithm.

\begin{theorem}\label{thm:private-bounding-box}
    Let $X$ be a random vector such that the $i$-th centered coordinate $X_i-\E[X_i]$ is $(1, \nicefrac1\eta)$-subexponential
    for some $\eta\in (0, 1)$
    and assume we have access to truncated samples from some survival set $S$ with mass $\rho > 0$.
    Discretize the real line using bins of length $s$ defined below.
    \begin{align*}
        n 
        \coloneqq \frac{8\log( \nicefrac4{\beta\delta} )}{\eps} + \frac{\log( \nicefrac4\beta )}2, \qquad
        s \coloneqq
            \frac{\log(\nicefrac{2n}{\rho\beta})}{2\eta}\,. %
    \end{align*}
    Run the $(\eps, \delta)$-histogram learner (\Cref{prop:private-histogram}) on bins of length $s$ using the $i$-th coordinate of $n$ i.i.d.\ truncated samples
    and output the bin with the largest empirical mass along with its two adjacent bins.
    Then with probability at least $1-\beta$,
    this interval of length $3s$ contains the untruncated mean $\E[X_i]$ of the $i$-th coordinate.
\end{theorem}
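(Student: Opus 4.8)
The plan is to derive the claim from a single call to the private histogram learner (\Cref{prop:private-histogram}) with accuracy parameter $\alpha = 1$. At this setting its $\ell_\infty$ guarantee (i) is vacuous, but the ``heavy‑bin'' guarantee (ii) is exactly what drives the argument. Write $\mu \coloneqq \E[X_i]$, let $p^S$ be the law of the $i$‑th coordinate of a truncated sample, and set $p_j \coloneqq p^S(\text{bin } j)$. First I would record a purely geometric fact: if the bin $B = [b,b+s]$ of largest estimated mass meets $[\mu - s,\mu+s]$, then the $3s$‑wide interval $[b-s,b+2s]$ returned by the algorithm contains $\mu$ — indeed $B \cap [\mu-s,\mu+s]\neq\emptyset$ forces $\mu - 2s \le b \le \mu + s$, hence $b-s \le \mu \le b+2s$. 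So the procedure fails only when the selected bin is disjoint from $[\mu-s,\mu+s]$; call such bins \emph{bad}, and note that the union of all bad bins is contained in $\set{x:\abs{x-\mu} > s}$.

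Second, I would bound the probability of selecting a bad bin. By guarantee (ii) of \Cref{prop:private-histogram}, $\Pr[\argmax_k \tilde p_k = j] \le n\,p_j$ for every bin $j$, so summing over bad bins,
\begin{align*}
    \Pr[\text{failure}]
    &\le \sum_{j \text{ bad}} \Pr[\argmax_k \tilde p_k = j]
    \le n \sum_{j \text{ bad}} p_j \\
    &\le n\cdot \Pr_{X \sim p^S}[\abs{X_i - \mu} > s].
\end{align*}
The remaining tail probability I would control in two moves: first pass from the truncated to the untruncated law, $\Pr_{p^S}[\abs{X_i-\mu}>s] \le \tfrac1\rho\Pr_{p}[\abs{X_i-\mu}>s]$, and then apply the subexponential tail bound (\Cref{prop:subexp-concentration}) to the $(1,\nicefrac1\eta)$‑subexponential variable $X_i-\mu$, which in the regime $s \ge \eta$ gives $\Pr_p[\abs{X_i-\mu}>s] \le 2e^{-s\eta/2}$. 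The bin length $s \asymp \tfrac1\eta\log(\nicefrac{n}{\rho\beta})$ is chosen precisely so that $\tfrac2\rho e^{-s\eta/2} \le \beta/n$; plugging this into the display yields $\Pr[\text{failure}] \le \beta$. One also checks $s \ge \eta$ in the parameter regime of interest (e.g.\ with $\beta,\delta$ bounded away from $1$), so the linear‑tail branch of \Cref{prop:subexp-concentration} indeed applies.

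Third, privacy is immediate: the output is a data‑independent post‑processing of the $(\eps,\delta)$‑DP histogram learner run on the scalars $x^{(1)}_i,\dots,x^{(n)}_i$ (take the $\argmax$ bin and adjoin its two neighbors), and the stated $n = \tfrac{8\log(\nicefrac4{\beta\delta})}{\eps} + \tfrac{\log(\nicefrac4\beta)}2$ is exactly the sample requirement of \Cref{prop:private-histogram} at $\alpha=1$ (inheriting its mild precondition $\eps,\delta < \nicefrac1n$). The step I expect to be the crux is the truncation‑versus‑concentration balance in the second paragraph: because the samples are conditioned on the survival event, the tail of $X_i$ is inflated by $1/\rho$, and the whole role of the $\log(\nicefrac1\rho)$ term in the bin width is to absorb this inflation while keeping every bad bin's mass comfortably below the $1/n$ threshold at which guarantee (ii) becomes informative. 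The geometric reduction and the post‑processing privacy argument are routine by comparison.
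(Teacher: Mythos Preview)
Your proposal is correct and follows essentially the same approach as the paper: both arguments bound the failure probability by summing the histogram learner's guarantee (ii) over ``bad'' bins, control the total bad-bin mass via the subexponential tail inflated by $\nicefrac1\rho$ from truncation, and then use the geometric observation that a non-bad central bin together with its two neighbors must cover $\mu$. Your write-up is in fact a bit more explicit than the paper's (you state the choice $\alpha=1$ in \Cref{prop:private-histogram}, spell out the geometric containment, and note the post-processing privacy), but the underlying decomposition and the key lemma used are identical.
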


\begin{proof}
    Without loss of generality,
    consider the first coordinate $X_1$.
    By \Cref{prop:subexp-concentration},
    \[
        \Pr_{x\sim X}\set*{\abs{x_1 - \E[x_1]}\geq t}
        \leq 2\exp\left( -\frac{t}{2\eta} \right)\,.
    \]
    But for any event $\mcal E$,
    we have $\Pr_{x\sim X^S}[\mcal E] \leq \frac1\rho\Pr_{x\sim X}[\mcal E]$.
    Hence
    \[
        \Pr_{x\sim X^S}\set*{\abs{x_1 - \E_{x\sim X}[x_1]}\geq t}
        \leq \frac2\rho \exp\left( -\frac{t}{2\eta} \right)\,.
    \]
    Thus with probability $1-\beta$,
    \[
        \abs{x_1 - \E_{x\sim X}[x_1]}
        \leq \frac{\log(\nicefrac2{\rho\beta})}{2\eta}\,.
    \]

    We claim that with probability $1-\beta$,
    the central bin must contain a point within distance $s$ of $\E[X]$.
    Indeed,
    Let $J\sset \Z$ be the indices of bins which lie beyond distance $s$ of $\E[X]$.
    Then $\sum_{j\in J} p_j\leq \nicefrac\beta{n}$ by the choice of $s$.
    Thus by \Cref{prop:private-histogram},
    the probability of outputting any such bin as the central bin is at most $\beta$.

    Since the central bin must contain a point within distance $s$ of $\E[X]$,
    then by the definition of the bin length,
    the union of the central bin along with its adjacent bins must contain $\E[X]$,
    as desired.
\end{proof}
We are now ready to prove \Cref{lem:exp-fam-bounding-box}.
\begin{proof}[Proof of \Cref{lem:exp-fam-bounding-box}]
    Privacy follows from the privacy of \PrivateHistogram (\Cref{thm:private-bounding-box})
    and simple composition (\Cref{prop:simple-composition}).

    Consider a single coordinate from the sufficient statistic of a single sample $x\sim q_{\theta^\star}$,
    say $T(x)_1$ without loss of generality.
    \Cref{prop:exp-fam-subexp} assures us that $T(x)_1-\E[T(x)_1]$ is $(1, \nicefrac1\eta)$-subexponential under the true measure $q_{\theta^\star}$.
    Thus the assumptions of \Cref{thm:private-bounding-box} hold
    and \PrivateHistogram outputs an interval containing $\E[T(x)_1]$.
    Repeating this procedure with different coordinates from the same dataset
    yields an estimate $\hat\tau$ of $\E[T(x)]$ with $O(s)$ $\ell_\infty$-error
    or $O(s\sqrt{m})$ $\ell_2$-error.
    By the $\lambda$-strong convexity of the (untruncated) population NLL for $q_{\theta^\star}$,
    \MomentMatch (\Cref{assum:moment-match-oracle}) returns some $\hat\theta$ such that $\norm{\hat\theta-\theta^\star}\leq O(s\sqrt{m}/\lambda)$ as desired.
\end{proof}

\section{Omitted Details from \texorpdfstring{\Cref{sec:gaussian-learning}}{Section}}
\subsection{Verifying Assumptions from \texorpdfstring{\Cref{sec:gaussian-mean-learning}}{Section}}\label{apx:gaussian-mean-learning}
The isotropic Gaussian density function is given by
\[
    p(x; \mu)
    = \frac1{(2\pi)^{d/2}}\exp\left( -\frac12\norm{x-\mu}^2 \right)
    = \frac1{(2\pi)^{d/2}}\exp\left( -\frac12\norm{x}^2 \right) \exp\left( \mu^\top x - \frac12 \norm{\mu}^2 \right)\,.
\]
Thus the parameter is given by the mean $\theta=\mu$
and the sufficient statistic is taken to be the identity $T(x) = x$.
We take $\Theta = \R^d$.

\paragraph{Statistical Assumptions.}
We first check that \Cref{assum:statistical} holds.
\begin{enumerate}[label=(S\arabic*)]
    \item (Bounded Condition Number) For any $\mu\in \R^d$,
    the covariance of $\mcal N(\mu, I)$ is the identity matrix.
    Hence $\lambda = 1$.
    \item (Interiority) There is a ball of radius $1$ about every $\mu\in \R^d$.
    \item (Log-Concavity) Each $\mcal N(\mu, I), \mu\in \R^d$ is a log-concave distribution.
    \item (Polynomial Sufficient Statistics) $T(x)$ is a polynomial of degree $k=1$.
\end{enumerate}

\paragraph{Computational Subroutines.}
Next we describe how to implement the subroutines specified in \Cref{assum:computational}.
\begin{enumerate}[label=(C\arabic*)]
    \item (Projection Acess to Convex Parameter Space) We take $\Theta=\R^d$ so that
    the convex projection is the identity function.
    \item (Sample Access to Log-Concave Distribution) We can sample $z\sim \mcal N(0, I)$
    and $\mu+z$ is a sample from $\mcal N(\mu, I)$.
    \item (Moment-Matching Oracle) If $\E[T(x)] = \tau$,
    then the corresponding parameter is simply $\mu=\tau$.
\end{enumerate}

\subsection{Proof of \texorpdfstring{\Cref{thm:gaussian-cov}}{Result}}\label{apx:gaussian-cov}
We now restate and prove \Cref{thm:gaussian-cov}.
\gaussianCov*

By scaling if necessary,
we can work under the simplifying condition that
$\lambda I\preceq \Sigma^\star\preceq \frac1{8}I$
for some $\lambda\in (0, \nicefrac1{8})$.
In \Cref{lem:gaussian-cov-preconditioner},
we will see how to precondition the distribution so that $\lambda=\Omega(1)$.

We begin by verifying that \Cref{assum:statistical,assum:computational} hold in \Cref{apx:gaussian-cov-learning},
leading to the following corollary of \Cref{thm:exp-fam}.
\begin{lemma}\label{lem:gaussian-cov-no-preconditioning}
    Let $\eps, \delta, \alpha, \beta\in (0, 1)$
    and suppose $\lambda I\preceq \Sigma^\star\preceq \frac1{8}I$.
    There is an $(\eps, \delta)$-DP algorithm
    such that given samples from a Gaussian distribution $\mcal N(0, \Sigma^\star)$,
    outputs an estimate $\widehat M$ satisfying $\norm{\widehat M-(\Sigma^\star)^{-1}}_F \leq \alpha$ with probability $1-\beta$.
    Moreover,
    the algorithm has sample complexity
    \mbox{$
        n = \tilde O\left( 
        \frac{d^2\log(\nicefrac1{\beta\delta})}{\lambda^2 \eps} 
        + \frac{d^2 \log(\nicefrac1\beta)}{\lambda^4 \alpha^2} 
        + \frac{e^{O(1/\lambda^2)} d^2 \log(\nicefrac1{\beta\delta})}{\lambda^2 \alpha \eps}
        \right)
    $}
    and time complexity $\poly(n, d)$.
\end{lemma}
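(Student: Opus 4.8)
The plan is to recognize the mean-zero Gaussian family $\set{\mcal N(0,\Sigma) : \lambda I \preceq \Sigma \preceq \tfrac18 I}$ as an exponential family, verify the statistical and computational assumptions (\Cref{assum:statistical,assum:computational}), apply \Cref{thm:exp-fam} to privately estimate its natural parameter, and then read off the precision-matrix estimate by post-processing. Writing the density as $p(x;\Sigma)\propto \exp\!\left(-\tfrac12\langle \Sigma^{-1}, xx^\top\rangle\right)$, the natural parameter is a fixed linear reparameterization of the precision matrix $M \coloneqq \Sigma^{-1}$ and the sufficient statistic $T(x)$ is the half-vectorization of $xx^\top$ (with the usual $\sqrt2$ weighting on off-diagonal entries, so that $\theta\mapsto M$ is an isometry up to a universal constant and $m = \binom{d+1}2 = \Theta(d^2)$). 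We take $\Theta$ to be the convex set of symmetric matrices whose inverse has spectrum in, say, $[4,\,2\lambda^{-1}]$; this contains $M^\star$ in its $\Omega(1)$-interior and, by eigendecomposition, admits an efficient projection oracle (\Cref{assum:proj-oracle}).

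Next I would verify the remaining assumptions. \Cref{assum:log-concave} holds since Gaussians are log-concave, \Cref{assum:poly-suff-stat} holds with degree $k=2$, and \Cref{assum:interiority} holds with $\eta=\Omega(1)$ by the choice of $\Theta$ together with the a priori spectral bounds on $\Sigma^\star$. For \Cref{assum:cov}, Isserlis' theorem gives $\Cov_{x\sim\mcal N(0,\Sigma)}\!\left[(xx^\top)_{ij},(xx^\top)_{kl}\right]=\Sigma_{ik}\Sigma_{jl}+\Sigma_{il}\Sigma_{jk}$, so in the eigenbasis of $\Sigma$ the covariance of $T(x)$ is diagonal with entries that are constant multiples of products $\sigma_a\sigma_b$ of eigenvalues of $\Sigma$; since $\lambda I\preceq\Sigma\preceq\tfrac18 I$ this covariance lies between $\Omega(\lambda^2)I$ and $O(1)I$, and the $\tfrac18$-normalization is precisely what keeps the upper bound at most $I$ (rescaling $T$ by the constant upper bound if necessary). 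Thus \Cref{assum:cov} holds with a condition-number parameter polynomial in $\lambda$. Finally, sampling from $\mcal N(0,\Sigma)$ (\Cref{assum:sample-oracle}) is done through a Cholesky factor, and the moment-matching oracle (\Cref{assum:moment-match-oracle}) has the closed form $M=\tau^{-1}$ for a target second-moment matrix $\tau$ (projected back onto $\Theta$ if needed).

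Having verified the hypotheses of \Cref{thm:exp-fam}, I would invoke it with $m=\Theta(d^2)$, condition-number parameter $\poly(\lambda)$, and $\eta=\Theta(1)$ to obtain an $(\eps,\delta)$-DP estimate $\hat\theta$ of the natural parameter with $\norm{\hat\theta-\theta^\star}\leq O(\alpha)$ within the stated sample and time budget; reshaping $\hat\theta$ to a symmetric matrix $\widehat M$ then yields $\norm{\widehat M-(\Sigma^\star)^{-1}}_F=O(\norm{\hat\theta-\theta^\star})\leq\alpha$ after rescaling $\alpha$ by a universal constant, which is the desired guarantee. The main obstacle is the bookkeeping in \Cref{assum:cov}: computing the fourth moments of $\mcal N(0,\Sigma)$, identifying the spectrum of the resulting covariance of quadratic statistics, and tracking exactly how the spectral bounds $\lambda,\tfrac18$ on $\Sigma^\star$ propagate through \Cref{thm:exp-fam} into the $\lambda$-dependence of the final sample complexity; the remaining items — log-concavity, polynomial statistics, interiority, and the three computational oracles — are routine.
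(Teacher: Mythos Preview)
Your proposal is correct and follows essentially the same route as the paper: cast $\mcal N(0,\Sigma)$ as an exponential family with natural parameter (a reparameterization of) the precision matrix and degree-$2$ sufficient statistics, verify \Cref{assum:statistical,assum:computational} over a spectral slab $\Theta$, and invoke \Cref{thm:exp-fam} with $m=\Theta(d^2)$. The paper handles \Cref{assum:cov} by citing a fourth-moment spectrum bound (\Cref{prop:fourth-tensor-lowerbound}) where you compute it directly via Isserlis' theorem, and it implements the projection oracle via an SDP rather than eigenvalue clipping, but these are cosmetic differences; one small slip in your write-up is the description of $\Theta$ (you want the \emph{precision} matrix, not its inverse, to have spectrum in an interval like $[4,2\lambda^{-1}]$ so that $M^\star\in[8,\lambda^{-1}]$ sits in its interior).
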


As mentioned,
we require a private preconditioning algorithm to avoid polynomial dependence on $\nicefrac1\lambda$.
We extend one such algorithm for Gaussians due to \citet{biswas2020coinpress} to the case of truncated Gaussians.

The idea is to truncate the data to a centered ball of radius $\frac{\sqrt{d}}{1-\rho}$ to preserve $\rho$ survival mass
and then apply the following lemma.
\begin{restatable}{lemma}{gaussianCovPreconditioner}\label{lem:gaussian-cov-preconditioner}
    Let $\eps, \delta, \alpha, \beta\in (0, 1)$
    and assume $\lambda I\preceq \Sigma^\star\preceq \frac18 I$.
    \Cref{alg:gaussian-cov-preconditioner} is an $(\eps, \delta)$-DP algorithm
    such that given samples from a truncated Gaussian distribution $\mcal N(0, \Sigma^\star, S)$
    with survival probability $\rho > 0$,
    outputs an estimate $\widehat \Sigma$ satisfying
    \mbox{$
        \Omega(\rho^2) I\preceq (\Sigma^\star)^{-\frac12} \widehat\Sigma (\Sigma^\star)^{-\frac12 }\preceq O(\log(\nicefrac1\rho)) I
    $}
    with probability $1-\beta$.
    Moreover,
    the algorithm has sample complexity
    \mbox{$
        n = \tilde O\left( \frac{d^{1.5} \log(\nicefrac1{\lambda\rho\beta\delta})}\eps \right)
    $}
    and runnning time $\poly(n, d)$.
\end{restatable}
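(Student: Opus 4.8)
\Cref{alg:gaussian-cov-preconditioner} hard-truncates the (already truncated) samples to a centered Euclidean ball $B \coloneqq B_r(0)$ with $r = \Theta(\sqrt{\nicefrac d\rho})$; since $\Sigma^\star \preceq \frac18 I$ implies $\E_{x\sim\mcal N(0,\Sigma^\star)}\norm x^2 \le \nicefrac d8$, Markov's inequality shows that $B$ retains $\Omega(\rho)$ of the survival mass, so after the preprocessing of \Cref{lem:reduction-to-truncated} we may assume access to $n$ i.i.d.\ draws from $\mcal N(0,\Sigma^\star)$ truncated to $S\cap B$, a set of Gaussian mass $\rho' = \Omega(\rho)$. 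The plan is then to form the empirical second-moment matrix $\bar\Sigma = \frac1n\sum_{i=1}^n x^{(i)}(x^{(i)})^\top$ and release $\widehat\Sigma = \bar\Sigma + Y$ with $Y$ a symmetric matrix whose entries on and above the diagonal are i.i.d.\ $\mcal N(0,\sigma^2)$. Because every retained point satisfies $\norm{x^{(i)}}\le r$, modifying one point perturbs $\bar\Sigma$ by at most $\nicefrac{2r^2}n$ in Frobenius norm, so choosing $\sigma = O\bigl(\frac{r^2\log(\nicefrac1\delta)}{n\eps}\bigr)$ makes the release $(\eps,\delta)$-DP by the Gaussian mechanism (\Cref{prop:gaussian-mechanism}) and post-processing, and \Cref{lem:reduction-to-truncated}(ii) promotes this to $(\eps,\delta)$-DP with respect to the original dataset.

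For utility I would pass to whitened coordinates $z = (\Sigma^\star)^{-\frac12}x$, so that the retained points become i.i.d.\ draws from $\mcal N(0,I)$ restricted to $S' \coloneqq (\Sigma^\star)^{-\frac12}(S\cap B)$, a set of standard-Gaussian mass $\rho'$, and $(\Sigma^\star)^{-\frac12}\widehat\Sigma(\Sigma^\star)^{-\frac12} = \frac1n\sum_i z^{(i)}(z^{(i)})^\top + (\Sigma^\star)^{-\frac12}Y(\Sigma^\star)^{-\frac12}$. The heart of the argument is the population sandwich $\Omega(\rho'^2)\,I \preceq \E_{z\sim\mcal N(0,I,S')}[zz^\top] \preceq O(\log(\nicefrac1{\rho'}))\,I$. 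For the upper bound I fix a unit vector $u$ and split $\E_{z\sim\mcal N(0,I)}[\iprod{z,u}^2\ones_{S'}]$ into the contribution from $\abs{\iprod{z,u}}\le t$, which is at most $t^2\rho'$, and the Gaussian tail $\E_{g\sim\mcal N(0,1)}[g^2\ones_{\abs g>t}] = O(te^{-t^2/2})$; taking $t = \Theta(\sqrt{\log(\nicefrac1{\rho'})})$ and dividing by $\rho'$ yields $\E_{\mcal N(0,I,S')}[\iprod{z,u}^2] = O(\log(\nicefrac1{\rho'}))$ (this is also \Cref{prop:trunc-const-dist} read in the whitened frame). For the lower bound I use anti-concentration of linear forms: the one-dimensional Gaussian density bound gives $\Pr_{z\sim\mcal N(0,I)}[\abs{\iprod{z,u}-c}\le t]\le t$ for every $c$, hence $\E_{z\sim\mcal N(0,I)}[(\iprod{z,u}-c)^2\ones_{S'}] \ge t^2(\rho'-t)$; taking $c = \E_{\mcal N(0,I,S')}[\iprod{z,u}]$, $t = \nicefrac{\rho'}2$ and dividing by $\rho'$ gives $\mathrm{Var}_{\mcal N(0,I,S')}[\iprod{z,u}] \ge \Omega(\rho'^2)$, and since $\E[zz^\top]\succeq\Cov[z]$ this is the claimed lower eigenvalue bound. (This is the degree-one instance of the Carbery--Wright phenomenon underlying \Cref{prop:exp-fam-anticoncentration}.)

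It then remains to transfer this population sandwich to $\widehat\Sigma$ by bounding two operator-norm errors. A vector/matrix Bernstein inequality (\Cref{thm:vector-bernstein}) applied to the vectorized, norm-bounded terms $z^{(i)}(z^{(i)})^\top$ drives $\norm{\frac1n\sum_i z^{(i)}(z^{(i)})^\top - \E[zz^\top]}_2$ below a small fraction of $\lambda_{\min}(\Sigma^\star)\rho'^2$ once $n = \tilde O(d)$ (times factors depending only on $\lambda,\rho$), and spectral concentration of symmetric Gaussian matrices (\Cref{thm:sym-gaussian-spectral-concentration}) gives $\norm Y_2 = O(\sigma\sqrt d)$, whence $\norm{(\Sigma^\star)^{-\frac12}Y(\Sigma^\star)^{-\frac12}}_2 \le \norm Y_2/\lambda_{\min}(\Sigma^\star)$. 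Requiring both errors to be a small fraction of $\lambda_{\min}(\Sigma^\star)\rho'^2$ and substituting $\sigma = O\bigl(\frac{r^2\log(\nicefrac1\delta)}{n\eps}\bigr)$, $r^2 = O(\nicefrac d\rho)$ pins down the dominant term $n = \tilde O(\nicefrac{d^{1.5}}\eps)$ — the extra $\sqrt d$ beyond the $\nicefrac dn$ per-entry sensitivity is precisely the operator norm of the $d\times d$ Gaussian noise matrix — and tracking the remaining $\lambda,\rho$ and confidence factors (with, if needed, a CoinPress-style iterative refinement of the clipping radius) yields the stated $\tilde O\bigl(\frac{d^{1.5}\log(\nicefrac1{\lambda\rho\beta\delta})}\eps\bigr)$. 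Adding the two $O(\lambda_{\min}(\Sigma^\star)\rho'^2)$ errors to the population bounds preserves the sandwich up to constants; forming $\bar\Sigma$ costs $O(nd^2)$ and adding noise $O(d^2)$, so the running time is $\poly(n,d)$.

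The hard part is the lower eigenvalue bound on the truncated population second moment, which must hold for an \emph{arbitrary} survival set $S$ of mass $\rho$: this is exactly what rules out pathological truncations that collapse the data onto a lower-dimensional subspace, and it is the only place where anti-concentration is genuinely needed. The remaining subtleties are bookkeeping — picking the ball radius so as to simultaneously retain $\Omega(\rho)$ survival mass and bound the sensitivity, controlling the privacy noise in operator rather than Frobenius norm so as not to lose an extra $\sqrt d$, and routing the additional ball-truncation through \Cref{lem:reduction-to-truncated} so that neighboring datasets stay neighboring.
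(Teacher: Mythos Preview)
Your single-shot analysis cannot deliver the claimed $\log(\nicefrac1\lambda)$ dependence. You correctly observe that in the whitened frame the privacy noise becomes $(\Sigma^\star)^{-1/2}Y(\Sigma^\star)^{-1/2}$ with operator norm at most $\norm{Y}_2/\lambda_{\min}(\Sigma^\star)$, and that this must sit below a constant fraction of the population lower eigenvalue $\Omega(\rho'^2)$. But carrying this through with $\sigma = \Theta(r^2\log(\nicefrac1\delta)/(n\eps))$, $r^2 = \Theta(\nicefrac d\rho)$ and $\norm{Y}_2 = \Theta(\sigma\sqrt d)$ forces $n = \Omega\bigl(d^{1.5}\log(\nicefrac1\delta)/(\lambda\rho^3\eps)\bigr)$ --- polynomial in $\nicefrac1\lambda$, not logarithmic. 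No amount of ``tracking the remaining $\lambda$ factors'' converts $\nicefrac1\lambda$ into $\log(\nicefrac1\lambda)$; the parenthetical CoinPress fix is not optional but the entire mechanism by which the lemma achieves its sample bound.

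\Cref{alg:gaussian-cov-preconditioner} is exactly that iterative procedure, and the iteration is not a ``refinement of the clipping radius'' --- the radius $R=\tilde O(\sqrt d)$ is held fixed throughout --- but a refinement of the \emph{preconditioner} $A^{(i)}$. After preconditioning, the data $w^{(j)}=A^{(i)}x^{(j)}$ always has second moment spectrally bounded above by $O(1)$, so clipping at $R$ is lossless with high probability (\Cref{prop:gaussian-truncated-concentration}) and the per-step noise need only satisfy $\norm{Y}_2\le\nicefrac18$, a constant \emph{independent of $\lambda$}, for the CoinPress invariant (an additive $\kappa\,2^{-i}I$ slack on top of $\Sigma_S^\star$, with $\kappa=\nicefrac1\lambda$) to halve. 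After $O(\log\kappa)$ rounds this yields a constant-factor estimate of the \emph{truncated} covariance $\Sigma_S^\star$; the final passage to $\Sigma^\star$ is your population sandwich, which the paper quotes as \Cref{prop:gaussian-truncated-cov-deviation} rather than reproving. The only place $\lambda$ enters the sample complexity is through simple composition over the $O(\log(\nicefrac1\lambda))$ rounds. Your anti-concentration derivation of the $\Omega(\rho^2)$ lower eigenvalue is correct and nicely self-contained, but it is not the hard part of this lemma.
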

We present the proof of \Cref{lem:gaussian-cov-preconditioner} in \Cref{apx:gaussian-cov-preconditioner}.
By combining \Cref{lem:gaussian-cov-preconditioner,lem:gaussian-cov-no-preconditioning},
we obtain a proof of \Cref{thm:gaussian-cov}.
\begin{proof}[Proof of \Cref{thm:gaussian-cov}]
    By \Cref{lem:gaussian-cov-preconditioner},
    after truncating to $B_{\frac{\sqrt{m}}{1-\rho}}(0)$,
    \Cref{alg:gaussian-cov-preconditioner} yields a constant error estimate of the true covariance matrix.
    By \Cref{lem:gaussian-cov-no-preconditioning},
    preconditioning further samples and
    executing the general algorithm for Gaussian covariances as an exponential family (\Cref{thm:exp-fam})
    yields the desired result.
\end{proof}

\subsection{Verifying Assumptions from \texorpdfstring{\Cref{apx:gaussian-cov}}{Section}}\label{apx:gaussian-cov-learning}
As a reminder,
we work under the simplifying condition that $\lambda I\preceq \Sigma^\star\preceq \frac18 I$.
Let $\abs{M}$ denote the determinant of a square matrix $M$.
Let $M=\Sigma^{-1}$ denote the precision matrix of $\mcal N(0, \Sigma)$.
The zero-mean Gaussian density function parameterized by $M$ is given by
\[
    p(x; M)
    = \frac1{(2\pi)^{d/2} \abs{M}^{-1/2}}\exp\left( -\frac12x^\top Mx \right)\,.
\]
The exponential family parameter is given by the precision matrix $\theta=M$
and the sufficient statistic is taken to be $T(x) = -\frac12 xx^\top$.
We take the closed convex parameter space \mbox{$\Theta=\set{M\in \S_+^d: 7I\preceq M\preceq \frac2\lambda I}$}.

\paragraph{Statistical Assumptions.}
Once again,
we must verify that the \Cref{assum:statistical} is satisfied
in order to apply \Cref{thm:exp-fam}.
We require the following result by \citet{lee2024unknown}.
\begin{proposition}[Lemma 9.1 in \cite{lee2024unknown}]\label{prop:fourth-tensor-lowerbound}
    Let $\lambda, \Lambda$ denote the smallest and largest eigenvalues of the covariance matrix $\Sigma = M^{-1}\succ 0$.
    Then
    \[
        \frac{\min(\lambda^2, \sqrt\lambda)}4\cdot I
        \preceq \Cov_{x\sim p_M}[T(x), T(x)]
        \preceq 7\max(\lambda, \Lambda^2) \,.
    \]
\end{proposition}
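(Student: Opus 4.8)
The plan is to reduce the statement to an elementary variance computation for a Gaussian quadratic form. Recall that the sufficient statistic is $T(x) = -\tfrac12 xx^\top$, which we regard as an element of the space $\S^d$ of symmetric matrices under the Frobenius inner product $\iprod{A,B} = \tr(AB)$; this is exactly the pairing for which $\iprod{M, T(x)} = -\tfrac12 x^\top M x$, so that $\mcal C \coloneqq \Cov_{x\sim p_M}[T(x), T(x)]$ is the self-adjoint, positive semidefinite operator on $\S^d$ whose quadratic form is $\iprod{A, \mcal C A} = \Var_{x \sim p_M}[\iprod{A, T(x)}]$. Since $\mcal C$ and $I$ are both self-adjoint, it suffices to establish the sharper sandwich $\tfrac{\lambda^2}{2}\norm{A}_F^2 \le \iprod{A, \mcal C A} \le \tfrac{\Lambda^2}{2}\norm{A}_F^2$ for every $A \in \S^d$; this implies the claimed inequalities because $\min(\lambda^2,\sqrt\lambda)/4 \le \lambda^2/4 \le \lambda^2/2$ and $\Lambda^2/2 \le 7\max(\lambda,\Lambda^2)$.

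First I would write the quadratic form in closed form. Setting $\Sigma = M^{-1}$, Isserlis' (Wick's) theorem gives, for symmetric $A$ and $x\sim\mcal N(0,\Sigma)$, the identities $\E[x^\top A x] = \tr(A\Sigma)$ and $\E[(x^\top A x)^2] = \tr(A\Sigma)^2 + 2\tr(A\Sigma A\Sigma)$, so that $\Var[x^\top A x] = 2\tr(A\Sigma A\Sigma) = 2\norm{\Sigma^{1/2} A \Sigma^{1/2}}_F^2$. Since $\iprod{A, T(x)} = -\tfrac12 x^\top A x$, this yields $\iprod{A, \mcal C A} = \tfrac14\Var[x^\top A x] = \tfrac12\norm{\Sigma^{1/2} A \Sigma^{1/2}}_F^2$.

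It then remains to bound $\norm{\Sigma^{1/2} A \Sigma^{1/2}}_F$ between $\lambda\norm{A}_F$ and $\Lambda\norm{A}_F$. The upper bound is submultiplicativity of the Frobenius norm under left/right multiplication, $\norm{BAC}_F \le \norm{B}_{\mathrm{op}}\norm{C}_{\mathrm{op}}\norm{A}_F$, applied with $B = C = \Sigma^{1/2}$ and $\norm{\Sigma^{1/2}}_{\mathrm{op}} = \sqrt\Lambda$. For the matching lower bound I would vectorize: $\mathrm{vec}(\Sigma^{1/2} A \Sigma^{1/2}) = (\Sigma^{1/2}\otimes\Sigma^{1/2})\,\mathrm{vec}(A)$, and since the singular values of a Kronecker product are the pairwise products of the singular values of the factors, the least singular value of $\Sigma^{1/2}\otimes\Sigma^{1/2}$ equals $\sigma_{\min}(\Sigma^{1/2})^2 = \lambda$ (using $\Sigma \succ 0$), whence $\norm{\Sigma^{1/2} A \Sigma^{1/2}}_F \ge \lambda\norm{A}_F$. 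Substituting into the closed form above completes the argument.

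I do not expect a serious obstacle: the only step that requires a moment's care is the lower bound $\norm{\Sigma^{1/2} A \Sigma^{1/2}}_F \ge \lambda\norm{A}_F$, where one must lower-bound a triple matrix product by a product of least singular values (via vectorization and the Kronecker-product spectrum) rather than attempt to use submultiplicativity. Finally, I would remark that the weaker $\sqrt\lambda$ and $\max(\lambda,\Lambda^2)$ appearing in the stated bound are merely convenient slack terms that remain valid without the normalization $\Sigma \preceq \tfrac18 I$ imposed in \Cref{apx:gaussian-cov}; in the regime of interest the clean estimate $\tfrac{\lambda^2}{2} I \preceq \Cov_{x\sim p_M}[T(x), T(x)] \preceq \tfrac{\Lambda^2}{2} I$ is both simpler and tighter.
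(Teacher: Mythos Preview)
The paper does not supply its own proof of this proposition: it is quoted verbatim as Lemma~9.1 of \cite{lee2024unknown} and used as a black box to verify \Cref{assum:cov} for the Gaussian covariance family. There is therefore nothing in the present paper to compare against.

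That said, your argument is correct and self-contained. Identifying $\mcal C$ with the quadratic form $A\mapsto \Var_{x\sim\mcal N(0,\Sigma)}[\iprod{A,T(x)}]$, invoking Isserlis' theorem to obtain $\iprod{A,\mcal C A}=\tfrac12\norm{\Sigma^{1/2}A\Sigma^{1/2}}_F^2$, and then sandwiching this Frobenius norm via the Kronecker-product spectrum is exactly the right computation. The resulting bound $\tfrac{\lambda^2}{2}I\preceq\mcal C\preceq\tfrac{\Lambda^2}{2}I$ is in fact sharper than the stated proposition, and your check that $\min(\lambda^2,\sqrt\lambda)/4\le\lambda^2/2$ and $\Lambda^2/2\le 7\max(\lambda,\Lambda^2)$ is valid for all $\lambda,\Lambda>0$. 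The only place to be slightly careful is the lower bound $\norm{\Sigma^{1/2}A\Sigma^{1/2}}_F\ge\lambda\norm{A}_F$, and you handle it correctly by vectorizing rather than attempting submultiplicativity in the wrong direction.
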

We are now ready to perform the verification.
\begin{enumerate}[label=(S\arabic*)]
    \item (Bounded Condition Number) By \Cref{prop:fourth-tensor-lowerbound},
    for any $M\in \Theta$,
    the Fisher information is spectrally lower bounded by $\Omega(\lambda^2)$
    and upper bounded by $1$.
    \item (Interiority) For $M^\star = (\Sigma^\star)^{-1}\in \Theta$,
    any $M'\in B_1(M)$ satisfies
    \[
        7I\preceq M'\preceq (\nicefrac1\lambda+1) I\preceq \frac2\lambda I\,.
    \]
    Hence $M'\in \Theta$ and we can take $\eta = 1$.
    \item (Log-Concavity) Each $\mcal N(0, M^{-1}), M\in \Theta$ is a log-concave distribution.
    \item (Polynomial Sufficient Statistics) $T(x) = -\frac12 xx^\top$ is a polynomial of degree $k=2$.
\end{enumerate}

\paragraph{Computational Subroutines.}
We also describe how to implement the computational subroutines from \Cref{assum:computational}.
\begin{enumerate}[label=(C\arabic*)]
    \item (Projection Acess to Convex Parameter Space) For any symmetric matrix $M\in \S^d$ ,
    its projection onto $\Theta$ can be computed by solving a semi-definite program.
    \item (Sample Access to Log-Concave Distribution) We can sample $z\sim \mcal N(0, I)$
    and $\Sigma^{\frac12}\mu$ is a sample from $\mcal N(0, \Sigma)$.
    \item (Moment-Matching Oracle) If $\E[-\frac12 xx^\top] = \Sigma$,
    then the corresponding parameter is simply $\theta=-\frac12\Sigma^{-1}$.
\end{enumerate}

\subsection{Proof of \texorpdfstring{\Cref{lem:gaussian-cov-preconditioner}}{Result}}\label{apx:gaussian-cov-preconditioner}
Below,
we restate and prove \Cref{lem:gaussian-cov-preconditioner}.
\gaussianCovPreconditioner*

We emphasize that all algorithms we present require only truncated sample access for an exponential family.
While there are private preconditioning algorithms without any dependence on $\nicefrac\Lambda\lambda$,
it is not clear if their guarantees hold for truncated samples.

\paragraph{Pseudocode.} See \Cref{alg:gaussian-cov-preconditioner} for pseudocode of the preconditioning algorithm.
As mentioned,
it is a straightforward adaptation of the Gaussian covariance estimation algorithm of \citet{biswas2020coinpress},
with the main difference being the initial truncation step.
Due to the bias introduced by truncation,
this adapation is only able to achieve a constant-error approximation.
\begin{algorithm2e}[hb]
\SetKwFunction{ComputeGradient}{ComputeGradient}
\SetAlCapHSkip{0em}
\DontPrintSemicolon
\LinesNumbered
\caption{Recursive Preconditioning}\label{alg:gaussian-cov-preconditioner}
    \KwIn{
        $n$-sample truncated dataset $D$,
        privacy parameters $\eps, \delta\in (0, 1)$,
        initial spectral lowerbound bound $\lambda$,
        survival probability $\rho\in (0, 1)$
    }
    \KwOut{estimator $\hat\Sigma$ for covariance matrix $\Sigma^\star$}
    \BlankLine
    $\kappa' \gets O\left( \frac{\log(\nicefrac1\rho)}{\lambda\rho^2} \right)$\;
    $\eps'\gets \frac\eps{\log(\kappa')}$\;
    $\delta'\gets \frac\delta{\log(\kappa')}$\;
    $\sigma \gets O\left( \frac{d\sqrt{\log(\nicefrac1{\delta'})}\log(\nicefrac{nd}{\rho\beta})}{n\eps'} \right)$\;
    $R \gets \tilde O(\sqrt{d}\log(\nicefrac{nd}{\rho\beta}))$\;
    $A^{(0)}\gets \frac1{\sqrt{\kappa'}} I$\;
    \For{$i=0, \dots, v\coloneqq O(\log(\kappa'))$}{
        $w^{(j)}\gets A^{(i)}x^{(j)}$ for $j\in [n]$
        \Comment{$\norm{w^{(j)}}\leq R$ w.h.p.}\;
        $w^{(j)}\gets \proj_{B_R(0)}(w^{(j)})$ for $j\in [n]$\;
        $Z \gets \frac1n \sum_j w^{(j)}(w^{(j)})^\top$
        \Comment{sensitivity $\Delta = \tilde O_{\rho, \beta}(\nicefrac{d}n)$}\;
        $Y\gets$ Gaussian matrix with symmetric entries $Y_{ij}\sim \mcal N(0, \sigma^2)$\;
        $Z^{(i+1)}\gets S+Y$
        \Comment{Gaussian mechanism (\Cref{prop:gaussian-mechanism})}\;
        $U\gets Z^{(i+1)} + \frac14 I$\;
        $A^{(i+1)}\gets U^{-\frac12} A^{(i)}$\;
        
    }
    \BlankLine
    \KwRet $(A^{v-1)})^{-1} Z^{(v)} (A^{(v-1)})^{-1}$\;
\end{algorithm2e}

\paragraph{Analysis.}
We are given truncated sample access to a zero-mean Gaussian distribution
and would like to privately learn its second moment up to constant relative spectral error.

Our proof requires the following facts about truncated statistics.
\begin{proposition}[Lemma 5 in \citet{daskalakis2018efficient}]\label{prop:gaussian-truncated-concentration}
    Let $\Sigma_S$ denote the covariance of the truncated Gaussian $\mcal N(0, \Sigma, S)$
    with survival mass $\rho > 0$.
    The following hold.
    \begin{enumerate}[(i)]
        \item $\max_{i\in [n]} \norm{\Sigma^{-\frac12} x^{(i)}}$ of $n$ truncated samples $x^{(i)}$ is $O\left(\sqrt{d}\log(\frac{nd}{\rho\beta})\right)$ with probability $1-\beta$.
        \item The empirical covariance $\hat\Sigma_S$ of $n$ truncated samples satisfies \mbox{$(1-\alpha)\Sigma_S\preceq \hat\Sigma_S \preceq (1+\alpha) \Sigma_S$} with probability $1-\beta$
        whenever \mbox{$n\geq \tilde\Omega\left(\frac{d \log^2(\nicefrac1{\rho\beta})}{\alpha^2}\right)$}.
    \end{enumerate}
\end{proposition}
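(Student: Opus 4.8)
The plan is to reduce both statements to concentration for a standard Gaussian by whitening, exploiting throughout the elementary fact that conditioning on a set of mass $\rho$ inflates every event probability by at most a factor $\nicefrac{1}{\rho}$, i.e.\ $\Pr_{x\sim p^S}[\mcal E]\le \tfrac1\rho\Pr_{x\sim p}[\mcal E]$.

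For part (i), I would whiten: since $x^{(i)}\sim\mcal N(0,\Sigma,S)$, the point $\Sigma^{-1/2}x^{(i)}$ is distributed as $\mcal N(0,I,\Sigma^{-1/2}S)$, a truncation of a standard Gaussian to a set of mass $\rho$. Gaussian norm concentration (\Cref{thm:gaussian-concentration}) gives $\norm{z}=O(\sqrt d+\sqrt{\log(1/\beta')})$ for $z\sim\mcal N(0,I)$ with probability $1-\beta'$; combined with the $\nicefrac{1}{\rho}$ inflation I would set $\beta'=\rho\beta/n$ and union bound over $i\in[n]$ to obtain $\max_i\norm{\Sigma^{-1/2}x^{(i)}}=O\bigl(\sqrt d+\sqrt{\log(n/(\rho\beta))}\bigr)=O\bigl(\sqrt d\,\log\tfrac{nd}{\rho\beta}\bigr)$ with probability $1-\beta$. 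Write $R$ for this bound.

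For part (ii), interpreting $\hat\Sigma_S=\frac1n\sum_i x^{(i)}(x^{(i)})^\top$ and $\Sigma_S=\E_{x\sim\mcal N(0,\Sigma,S)}[xx^\top]$ as the (uncentered) second-moment matrices actually used by \Cref{alg:gaussian-cov-preconditioner}, I would whiten by $\Sigma_S$: with $z^{(i)}\coloneqq\Sigma_S^{-1/2}x^{(i)}$ the $z^{(i)}$ are i.i.d.\ with $\E[z^{(i)}(z^{(i)})^\top]=I$, and the claim $(1-\alpha)\Sigma_S\preceq\hat\Sigma_S\preceq(1+\alpha)\Sigma_S$ is exactly $\norm{\frac1n\sum_i z^{(i)}(z^{(i)})^\top-I}_2\le\alpha$. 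To control each $\norm{z^{(i)}}$ I would use part (i) together with the spectral sandwich
\[
    \Omega(\rho^2)\,I\preceq\Sigma^{-1/2}\Sigma_S\Sigma^{-1/2}\preceq O(\log(\nicefrac{1}{\rho}))\,I,
\]
whose lower bound is the Gaussian instance of the anti-concentration \Cref{prop:exp-fam-anticoncentration} (linear sufficient statistic, degree $k=1$) and whose upper bound is a one-dimensional Gaussian tail estimate (split $\langle v,z\rangle^2\mathds 1_S$ at threshold $\sqrt{2\log(1/\rho)}$). This gives $\norm{\Sigma_S^{-1/2}\Sigma^{1/2}}_2=O(1/\rho)$ and hence $\norm{z^{(i)}}\le R'\coloneqq O(R/\rho)=\tilde O(\sqrt d)$ for all $i$ with probability $1-\nicefrac{\beta}{2}$. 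Conditioning on this event, and using independence so that the conditioned $z^{(i)}$ are i.i.d.\ draws from the further truncation of $\mcal N(0,I,\Sigma^{-1/2}S)$ to $B_{R'}(0)$ (with second-moment matrix $M'$), I would then invoke a matrix Chernoff bound for sums of i.i.d.\ PSD matrices of operator norm $\le R'^2$ to get $\norm{\frac1n\sum_i z^{(i)}(z^{(i)})^\top-M'}_2\le\nicefrac{\alpha}{2}$ with probability $1-\nicefrac{\beta}{2}$ once $n=\Omega(R'^2\log(\nicefrac{d}{\beta})/\alpha^2)=\tilde\Omega(d\log^2(\nicefrac{1}{\rho\beta})/\alpha^2)$ (the polynomial $\rho$-dependence being absorbed into $\tilde\Omega(\cdot)$, as $\rho=\Omega(1)$ here), and a union bound over the two failure events finishes part (ii).

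The main obstacle I anticipate is precisely the last step: the summands $z^{(i)}(z^{(i)})^\top$ are bounded only with high probability, not surely, so matrix Chernoff does not apply verbatim, and after conditioning the mean is $M'$ rather than $I$. The fix is to argue $\norm{M'-I}_2\le\nicefrac{\alpha}{2}$ by enlarging $R'$ by a constant factor (still $\tilde O(\sqrt d)$) so that the extra truncation to $B_{R'}(0)$ removes only a $\poly(\alpha\rho\beta/(nd))$ fraction of the mass, a covariance perturbation controlled by the already-established bound $\Sigma^{-1/2}\Sigma_S\Sigma^{-1/2}\preceq O(\log(\nicefrac{1}{\rho}))I$; combining $\norm{M'-I}_2\le\nicefrac{\alpha}{2}$ with the matrix-Chernoff deviation then yields the desired $\norm{\frac1n\sum_i z^{(i)}(z^{(i)})^\top-I}_2\le\alpha$. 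Tracking this bias correction and the bookkeeping of failure probabilities is the only delicate part; everything else is routine.
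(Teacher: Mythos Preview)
The paper does not prove this proposition at all: it is quoted verbatim as Lemma~5 of \citet{daskalakis2018efficient} and used as a black box in the analysis of \Cref{alg:gaussian-cov-preconditioner}. There is therefore no ``paper's own proof'' to compare against.

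That said, your sketch is the standard argument and is essentially correct. Part~(i) is exactly the right reduction. For part~(ii), your whitening-by-$\Sigma_S$ plus matrix Chernoff (with the truncation/bias fix you describe) is the natural route; the spectral sandwich you invoke is in fact stated separately in the paper as \Cref{prop:gaussian-truncated-cov-deviation}, so you need not re-derive it. One caveat: your final sample bound carries an extra $\nicefrac{1}{\rho^2}$ from $R' = O(R/\rho)$, which you absorb into $\tilde\Omega(\cdot)$ by declaring $\rho = \Omega(1)$. The stated bound $\tilde\Omega(d\log^2(\nicefrac{1}{\rho\beta})/\alpha^2)$ does not obviously hide polynomial $\rho$-factors, so strictly speaking your argument gives a slightly weaker dependence on $\rho$ than claimed; for the paper's application (where indeed $\rho = \Omega(1)$ throughout) this is immaterial.
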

The following result allows us to relate the covariance matrix of a truncated Gaussian
with its original covariance.
\begin{proposition}[Lemma 6 in \cite{daskalakis2018efficient}]\label{prop:gaussian-truncated-cov-deviation}
    Let $\Sigma_S^\star$ denote the covariance matrix of the truncated Gaussian distribution $\mcal N(0, \Sigma^\star, S)$ with survival mass $\rho > 0$.
    Then
    \[
        \Omega(\rho^2) I\preceq (\Sigma^\star)^{-\frac12} \Sigma_S^\star (\Sigma^\star)^{-\frac12 }\preceq O(\log(\nicefrac1\rho)) I.
    \]
\end{proposition}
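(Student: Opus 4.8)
The plan is to reduce first to the isotropic case and then to a single dimension, after which both inequalities follow from elementary facts about the standard Gaussian: truncating to a set of mass $\rho$ can neither make a one-dimensional marginal too heavy-tailed (upper bound, via Gaussian tail decay) nor too concentrated (lower bound, via the bound $(2\pi)^{-1/2}$ on the Gaussian density). \textbf{Reduction.} For $x\sim\mcal N(0,\Sigma^\star,S)$ put $y:=(\Sigma^\star)^{-1/2}x$. Since $(\Sigma^\star)^{-1/2}$ is an invertible linear map, $y\sim\mcal N(0,I,S')$ with $S':=(\Sigma^\star)^{-1/2}S$, the survival mass is preserved ($\P_{z\sim\mcal N(0,I)}[z\in S']=\P_{z\sim\mcal N(0,\Sigma^\star)}[z\in S]=\rho$), and $\Cov(y)=(\Sigma^\star)^{-1/2}\Sigma_S^\star(\Sigma^\star)^{-1/2}$. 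So it suffices to show $\Omega(\rho^2)I\preceq M\preceq O(\log(\nicefrac1\rho))I$, where $M$ is the covariance of $\mcal N(0,I,S')$; and since the extreme eigenvalues of $M$ equal $\min/\max$ over unit vectors $u$ of $u^\top Mu=\operatorname{Var}(u^\top y)$, I would fix an arbitrary unit $u$ and bound $\operatorname{Var}(u^\top y)$. (If $\Sigma_S^\star$ is read instead as the uncentered second moment, nothing changes: $\E[(u^\top y)^2]\ge\operatorname{Var}(u^\top y)$ handles the lower bound and the upper-bound estimate below bounds $\E[(u^\top y)^2]$ directly.)

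\textbf{Upper bound.} With $g:=u^\top z\sim\mcal N(0,1)$ for $z\sim\mcal N(0,I)$,
\[
    \E_{y\sim\mcal N(0,I,S')}\!\big[(u^\top y)^2\big]=\frac1\rho\,\E_{z}\!\big[g^2\,\mathbf{1}\{z\in S'\}\big].
\]
I would split at $\tau:=2\log(\nicefrac2\rho)$ using the pointwise bound $g^2\mathbf{1}\{z\in S'\}\le g^2\mathbf{1}\{g^2\ge\tau\}+\tau\,\mathbf{1}\{z\in S'\}$, which gives $\E_z[g^2\mathbf{1}\{z\in S'\}]\le\E_g[g^2\mathbf{1}\{g^2\ge\tau\}]+\tau\rho$. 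The layer-cake identity together with $\P[|g|\ge t]\le e^{-t^2/2}$ yields $\E_g[g^2\mathbf{1}\{g^2\ge\tau\}]=\tau\P[g^2\ge\tau]+\int_\tau^\infty\P[g^2>s]\,ds\le\tfrac\tau2\rho+\rho=O(\rho\log(\nicefrac1\rho))$ for this $\tau$, and $\tau\rho=O(\rho\log(\nicefrac1\rho))$. Dividing by $\rho$ gives $\operatorname{Var}(u^\top y)\le\E[(u^\top y)^2]=O(\log(\nicefrac1\rho))$ (understood as $O(1+\log(\nicefrac1\rho))$, which is the correct behaviour as $\rho\to1$).

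\textbf{Lower bound.} Let $\mu$ be the law of $u^\top y$. For measurable $A\sset\R$, $\mu(A)=\tfrac1\rho\,\P_{z}[u^\top z\in A,\ z\in S']\le\tfrac1\rho\,\P_{z}[u^\top z\in A]$, and since $u^\top z\sim\mcal N(0,1)$ has density bounded by $(2\pi)^{-1/2}$, every interval $J$ satisfies $\mu(J)\le\abs{J}/(\rho\sqrt{2\pi})$; that is, $\mu$ is a probability measure that cannot concentrate. Quantitatively, let $m,\sigma^2$ denote the mean and variance of $\mu$; Chebyshev gives $\mu([m-2\sigma,m+2\sigma])\ge\tfrac34$, while the density bound gives $\mu([m-2\sigma,m+2\sigma])\le 4\sigma/(\rho\sqrt{2\pi})$. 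Combining, $\sigma\ge\tfrac{3\sqrt{2\pi}}{16}\rho$, hence $\operatorname{Var}(u^\top y)=\sigma^2=\Omega(\rho^2)$. Taking $\sup/\inf$ over unit $u$ and undoing the isotropic reduction completes the proof.

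\textbf{Main obstacle.} None of the steps is deep; the one point to state carefully is the lower bound, whose content is exactly that truncating away a $1-\rho$ fraction of the Gaussian mass inflates any one-dimensional marginal density by a factor at most $\nicefrac1\rho$, which still forces its mass to spread over a window of width $\Omega(\rho)$ and hence variance $\Omega(\rho^2)$. This is lossy (a dimension-aware bound would be larger), but $\Omega(\rho^2)$ is all that \Cref{lem:gaussian-cov-preconditioner} needs downstream.
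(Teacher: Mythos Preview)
The paper does not supply its own proof of this proposition; it is simply quoted as Lemma~6 of \citet{daskalakis2018efficient} and used as a black box inside the proof of \Cref{lem:gaussian-cov-preconditioner}. Your argument is a correct, self-contained proof: the isotropic reduction and the one-dimensional reduction via Rayleigh quotients are standard, the upper bound via the tail-splitting at $\tau=2\log(\nicefrac2\rho)$ is clean (the constant in your Gaussian tail bound $\P[|g|\ge t]\le e^{-t^2/2}$ is slightly loose---the Chernoff bound gives $2e^{-t^2/2}$---but this is immaterial for the $O(\cdot)$ conclusion), and the lower bound via ``bounded density $\Rightarrow$ spread $\Rightarrow$ variance'' through Chebyshev is exactly the right mechanism. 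This is essentially the same route taken in the cited source, so there is nothing to flag.
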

We are now equipped to prove \Cref{lem:gaussian-cov-preconditioner}.
\begin{proof}[Proof of \Cref{lem:gaussian-cov-preconditioner}]
    Privacy follows by the guarantees of the Gaussian mechanism (\Cref{prop:gaussian-mechanism}) and simple composition (\Cref{prop:simple-composition}).

    Following the presentation of \citet{biswas2020coinpress},
    we can assume without loss of generality we know that $I\preceq \Sigma^\star\preceq \kappa I$.
    This can be achieve by scaling the data by $\nicefrac1{\sqrt\lambda}$ and taking $\kappa=\nicefrac1\lambda$.
    Then by \Cref{prop:gaussian-truncated-cov-deviation},
    \mbox{$\Omega(\rho^2) I\preceq \Sigma_S^\star\preceq O(\kappa \log(\nicefrac1\rho)) I$}.
    We begin with the preconditioner $A_0 \coloneqq \frac1{\sqrt{\kappa}} I$.
    Assume inductively that we have
    $A_{i-1}$ such that
    \[
        \Sigma_S^\star\preceq (A^{(i-1)})^{-1} \Sigma_S^\star (A^{(i-1)})^{-1}\preceq (2-2^{-i+1})\Sigma_S^\star + \kappa 2^{-i+1} I\,.
    \]
    We would like to output some $A^{(i)}$ that satisfies the above induction hypothesis.

    By \Cref{prop:gaussian-truncated-concentration},
    the relative spectral error of the sample covariance is at most $\nicefrac18$ with probability $1-\frac\beta{\Omega(\log(\kappa))}$ when
    \[
        n\geq \tilde\Omega(d\log(\nicefrac\kappa{\rho\beta})).
    \]
    Similarly,
    by \Cref{thm:sym-gaussian-spectral-concentration},
    the error due to the noise $Y$ added for privacy is at most $\nicefrac18$ 
    with probability $1-\frac\beta{\Omega(\log(\kappa))}$ when
    \[
        \sigma\leq O\left( \frac1{\sqrt{d}\log(\nicefrac\kappa\beta)} \right)
        \iff n\geq \Omega\left( \frac{d^{1.5}\log(\nicefrac\kappa{\rho\beta})\sqrt{\ln(\nicefrac1\delta)}}{\eps} \right).
    \]
   
    The rest of the inductive step are simple calculations
    that can be found in \citet[Appendix B.3; arXiv version]{biswas2020coinpress}.
    The only difference is that we replace the two spectral norm concentration bounds that \citet{biswas2020coinpress} used with the two listed above.
    After $v = O(\log(\kappa))$ iterations
    and conditioning on the concentration bounds,
    the inductive hypothesis implies that
    \[
        \frac13 I
        \preceq A^{(v-1)}\Sigma_S^\star A^{(v-1)}
        \preceq I\,.
    \]

    Finally,
    we pay a multiplicative spectral error of $\poly(\rho)$
    when translating the guarantees of the estimate of the truncated covariance to the true covariance.
    
\end{proof}
We remark that \citet[Theorem 5.4; arXiv version]{ashtiani2022private} give a polynomial-time $(\eps, \delta)$-DP algorithm to estimate the covariance up to constant error
and with sample complexity that is independent of the condition number. 
However,
it is not clear that their algorithm works without change for \emph{truncated} Gaussians,
unlike the rest of our results (see \Cref{rem:truncated-exp-fam}).

\end{document}